\newcommand{\R}{\mathbb{R}}
\newcommand{\e}{\begin{equation}}
\newcommand{\ee}{\end{equation}}
\newcommand{\en}{\begin{equation*}}
\newcommand{\een}{\end{equation*}}
\newcommand{\eqn}{\begin{eqnarray}}
\newcommand{\eeqn}{\end{eqnarray}}
\newcommand{\bmat}{\begin{bmatrix}}
\newcommand{\emat}{\end{bmatrix}}
\DeclareMathAlphabet\mathbfcal{OMS}{cmsy}{b}{n}
\newcommand{\mb}{\mathbf}
\newcommand{\mc}{\mathcal}
\newcommand{\bb}{\mathbb}
\newcommand{\vct}[1]{\boldsymbol{#1}}
\newcommand{\mtx}[1]{\boldsymbol{#1}}
\newcommand{\trace}{\operatorname{trace}}
\newcommand{\diag}{\operatorname{diag}}
\newcommand{\wh}{\widehat}
\newcommand{\wt}{\widetilde}
\newcommand{\ol}{\overline}
\newcommand{\NC}{$\mc {NC}$}
\newcommand{\norm}[2]{\left\| #1 \right\|_{#2}}
\newcommand{\abs}[1]{\left| #1 \right|}
\newcommand{\parans}[1]{\left(#1\right)}
\newcommand{\vb}{\vct{b}}
\newcommand{\ve}{\vct{e}}
\newcommand{\vh}{\vct{h}}
\newcommand{\vp}{\vct{p}}
\newcommand{\vw}{\vct{w}}
\newcommand{\vy}{\vct{y}}
\newcommand{\vz}{\vct{z}}
\newcommand{\vzero}{\vct{0}}
\newcommand{\vone}{\vct{1}}
\newcommand{\mA}{\mtx{A}}
\newcommand{\mB}{\mtx{B}}
\newcommand{\mD}{\mtx{D}}
\newcommand{\mG}{\mtx{G}}
\newcommand{\mH}{\mtx{H}}
\newcommand{\mP}{\mtx{P}}
\newcommand{\mT}{\mtx{T}}
\newcommand{\mU}{\mtx{U}}
\newcommand{\mV}{\mtx{V}}
\newcommand{\mW}{\mtx{W}}
\newcommand{\mZ}{\mtx{Z}}
\newcommand{\mSigma}{\mtx{\Sigma}}
\newlength{\imgwidth}
\newcommand{\revise}[1]{\textcolor{black}{#1}}
\newcommand{\twoCol}[2]{\ifthenelse{\boolean{twoColVersion}} {#1} {#2} }
\pgfplotsset{compat=1.15}
\renewcommand{\mathbf}{\boldsymbol}
\newcommand{ \Brac }[1]{\left\lbrace #1 \right\rbrace}
\newcommand{ \paren }[1]{ \left( #1 \right) }
\newcommand{\Lce}{\mc L_{\mathrm{CE}}}
\newcommand{\Lls}{\mc L_{\mathrm{LS}}}
\newcommand{\Lfl}{\mc L_{\mathrm{FL}}}
\newcommand{\Lmse}{\mc L_{\mathrm{LMSE}}}
\newcommand{\zz}[1]{{\color{blue}{{\bf Zhihui:} #1}}}
\def \endprf{\hfill {\vrule height6pt width6pt depth0pt}\medskip}
\newenvironment{proof}{\noindent {\bf Proof} }{\endprf\par}
\newcommand{\lambdaW}{\lambda_{\mW}}
\newcommand{\lambdaH}{\lambda_{\mH}}
\newcommand{\lambdab}{\lambda_{\vb}}
\title{Are All Losses Created Equal?}
\begin{document}


\maketitle

\begin{abstract}
{\normalsize

}
\end{abstract}


\section{Introduction}

Loss function is an indispensable component in the training of deep neural networks.
For classification tasks, while cross-entropy (CE) loss is one of the most popular choices, studies over the past few years have suggested many improved versions of CE that bring better empirical performance. 
Some notable examples include label smoothing (LS) where one-hot label is replaced by a smoothed label, focal loss (FL) which reduces the relative loss on the already well-classified samples, and so on. 
Aside from CE and its variants, the mean squared error (MSE) loss which was typically used for regression tasks is recently demonstrated to have a competitive if not better performance when compared to CE for classification tasks as well.
Finally, motivated by the success of self-supervised contrastive learning, the supervised contrastive learning (SupCon) loss has drawn a lot of attention due to its superior performance. 

Despite the existence of many loss functions there is however a lack of consensus as to which one is the best to use, and the answer seems to depend on multiple factors such as properties of the dataset, choice of network architecture, and so on. 

\subsection{Overview of Our Result}
This paper reveals the surprising message that all losses mentioned above (i.e., CE, LS, FL, MSE, SupCon) are equivalent in the sense that deep neural networks trained with them have negligible difference in test performance. 
This conclusion is drawn from studying the last layer features of a \emph{sufficiently large} neural networks \emph{at the terminal phase of training} under different loss functions. 
Our main theoretical result is the following.
\begin{itemize} 
    \item All losses (i.e., CE, LS, FL, MSE, SupCon) lead to largely identical features on training data. 
\end{itemize}
The study of last layer features is motivated by a recent line of work that show that if a neural network is large enough to have sufficient approximation power, then the global optimal solution obtained at terminal phase of training exhibits a \emph{Neural Collapse} phenomenon. 
That is, all features of the same class collapse to the corresponding class mean and the means associated with different classes are in a configuration where their pairwise distances are all equal and maximized. 
While previous work only establish Neural Collapse for CE and MSE losses, in this paper we extend it to LS, FL, SupCon, as well as a broad family of loss functions.
Because all losses lead to \emph{Neural Collapse} solutions, their corresponding features are equivalent up to a rotation of the feature space. 

While Neural Collapse reveals that all losses are equivalent at training time, it does not have a direct implication for the features associated with test data as well as the generalization performance. 
In particular, a recent work (Hui et al. 2022) shows empirically that Neural Collapse does not occur for the features associated with test data. 
Nonetheless, we show through empirical evidence that Neural Collapse on training data well predicts the test performance, regardless of the loss that is used to obtain a Neural Collapse solution. 
In particular, our empirical study shows the following.
\begin{itemize}
    \item All losses (i.e., CE, LS, FL, MSE, SupCon) lead to largely identical performance on test data.
\end{itemize}

\subsection{Implications}

Our results have important implications for the theory and practice of deep learning.

\paragraph{On the practice of loss function design. }
Our conclusion that all losses are created equal appears to go against existing evidence on the advantages of some losses over the others. 
Here we emphasize that our conclusion has an important premise, namely the neural network has sufficient approximation power and the training is performed for sufficiently many iterations. 
Hence, our conclusion implies that the better performance with particular choices of loss functions (other than SupCon) comes as a result that the training does not produce a globally optimal (i.e., Neural Collapse) solution. 
In such cases different losses lead to different (local?) solutions on the training data, and correspondingly different performance on test data. 
Such an understanding may provide important practical guidance on what loss to choose in different cases (e.g., different model sizes and different training time budgets), as well as for the design of new and better losses in the future. 

A case that worth separate attention is the SupCon loss for which we show hypothetically that the benefits come from using a projection head and feature normalization.

\paragraph{On the theory of Neural Collapse.}
Our result also reveals that the study of Neural Collapse, which is an optimization phenomenon concerning training data only, has important implications for generalization as well. 
Our result does not mean, however, that all Neural Collapse features on training data necessarily lead to the same test performance. 
It is not hard to construct counter-examples. 
A practical counter-example is that different training algorithms all lead to Neural Collapse features, but may have notably different generalization performance.

\input{Sections/UFM}

\begin{figure}[t]
    \centering
    \subfloat[Train (CE)]{\includegraphics[width=0.3\textwidth]{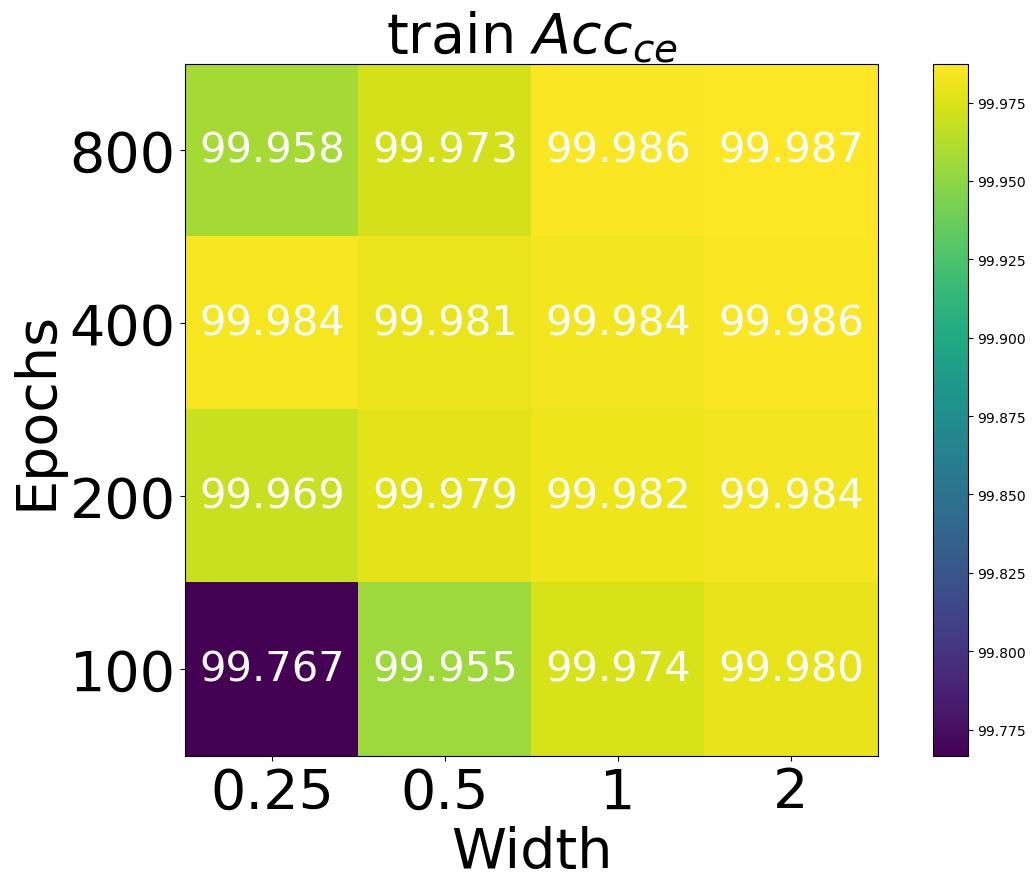}} \
    \subfloat[Val (CE)]{\includegraphics[width=0.3\textwidth]{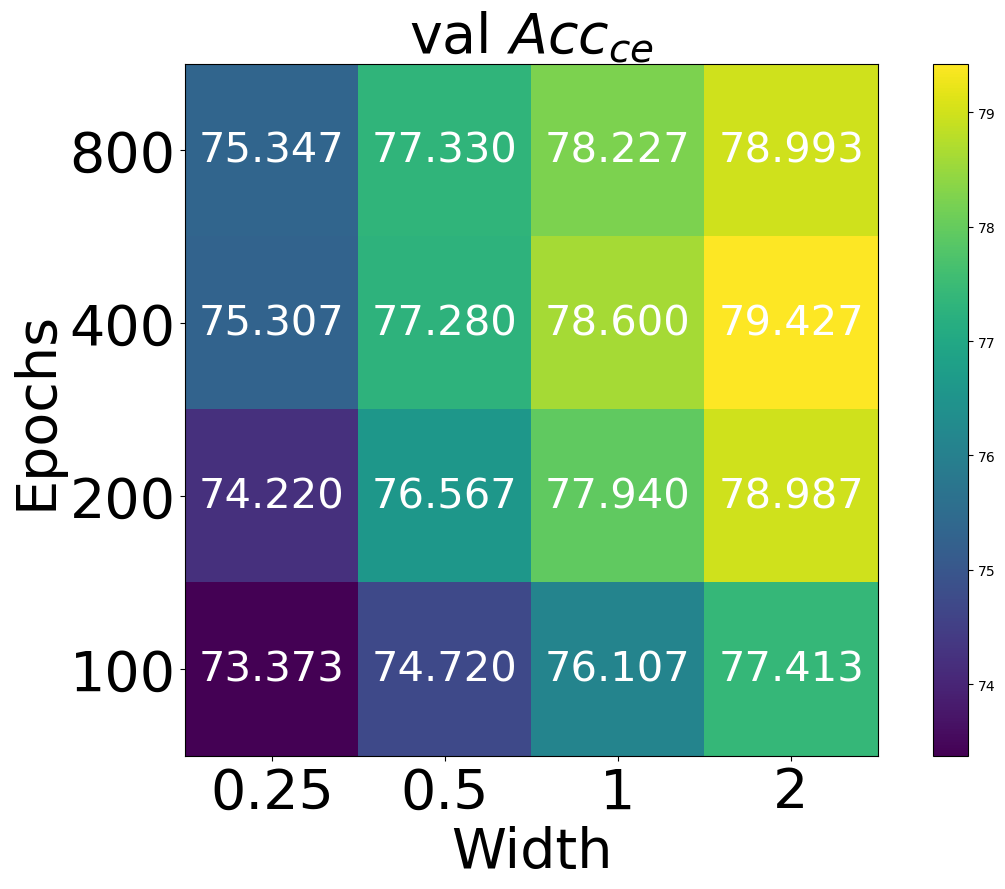}} \
    \subfloat[Test (CE)]{\includegraphics[width=0.3\textwidth]{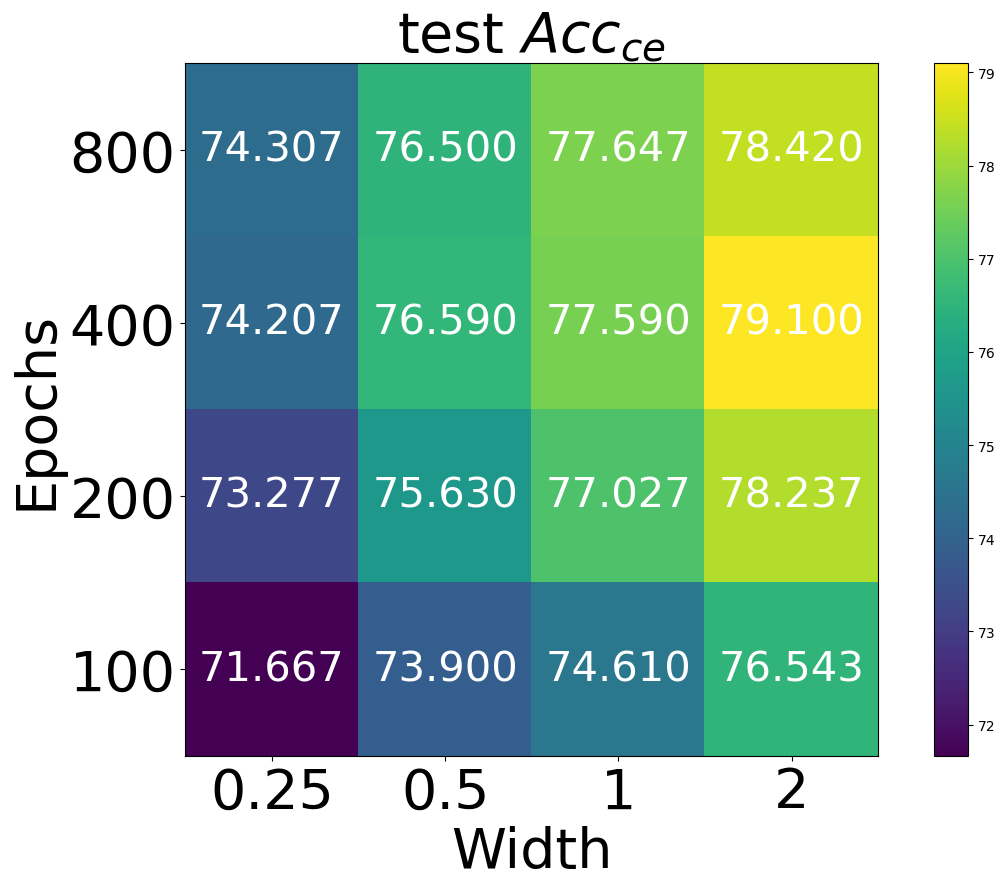}} \\
    
    \subfloat[Train (MSE)]{\includegraphics[width=0.3\textwidth]{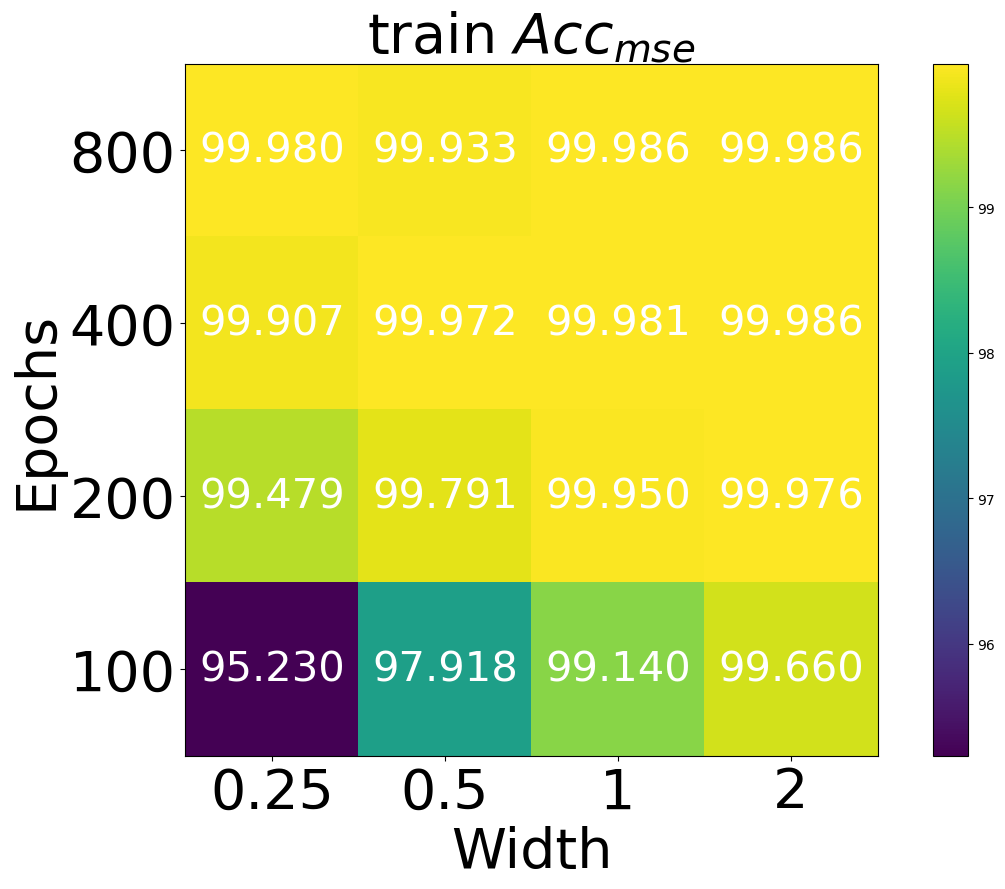}} \
    \subfloat[Val (MSE)]{\includegraphics[width=0.3\textwidth]{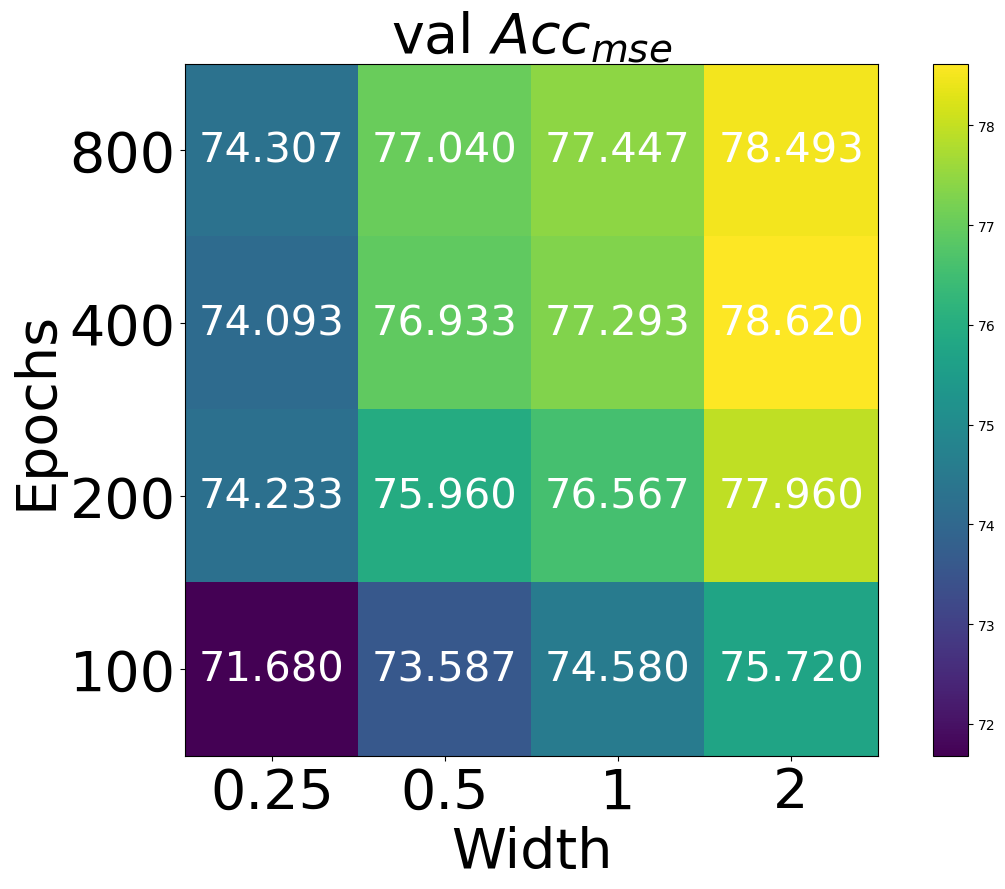}} \
    \subfloat[Test (MSE)]{\includegraphics[width=0.3\textwidth]{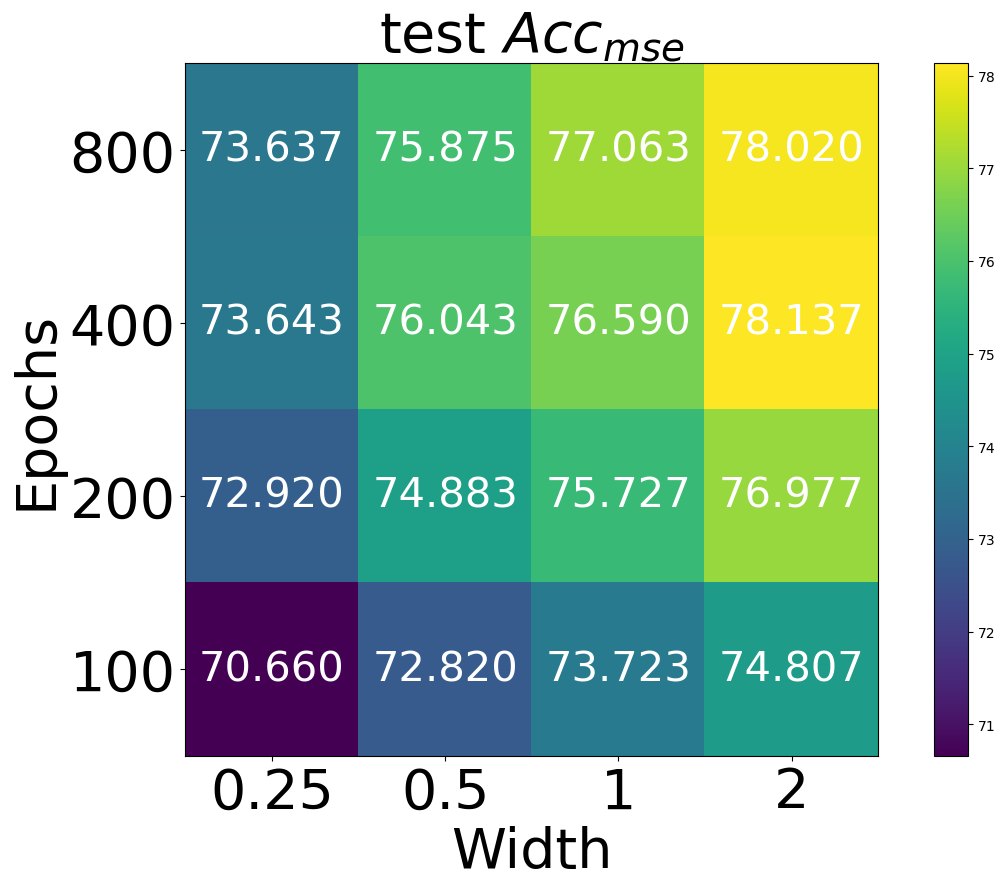}} \\
    
    \subfloat[Train (MSE)]{\includegraphics[width=0.3\textwidth]{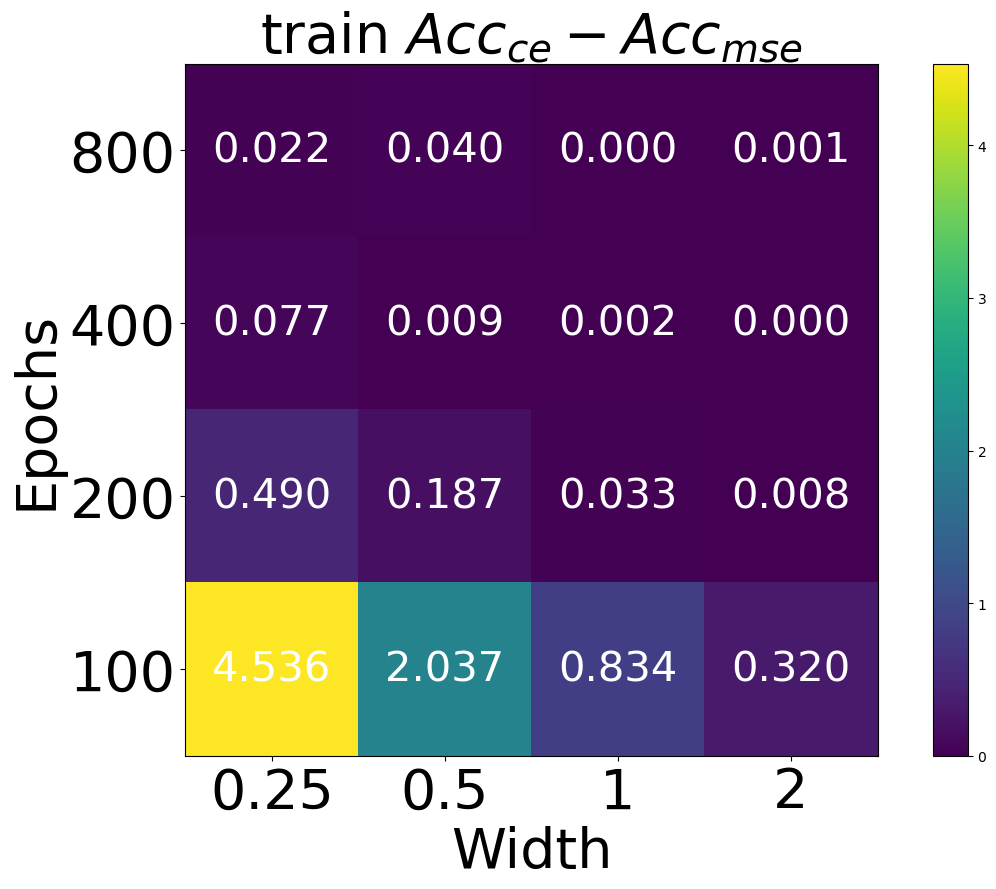}} \
    \subfloat[Val (MSE)]{\includegraphics[width=0.3\textwidth]{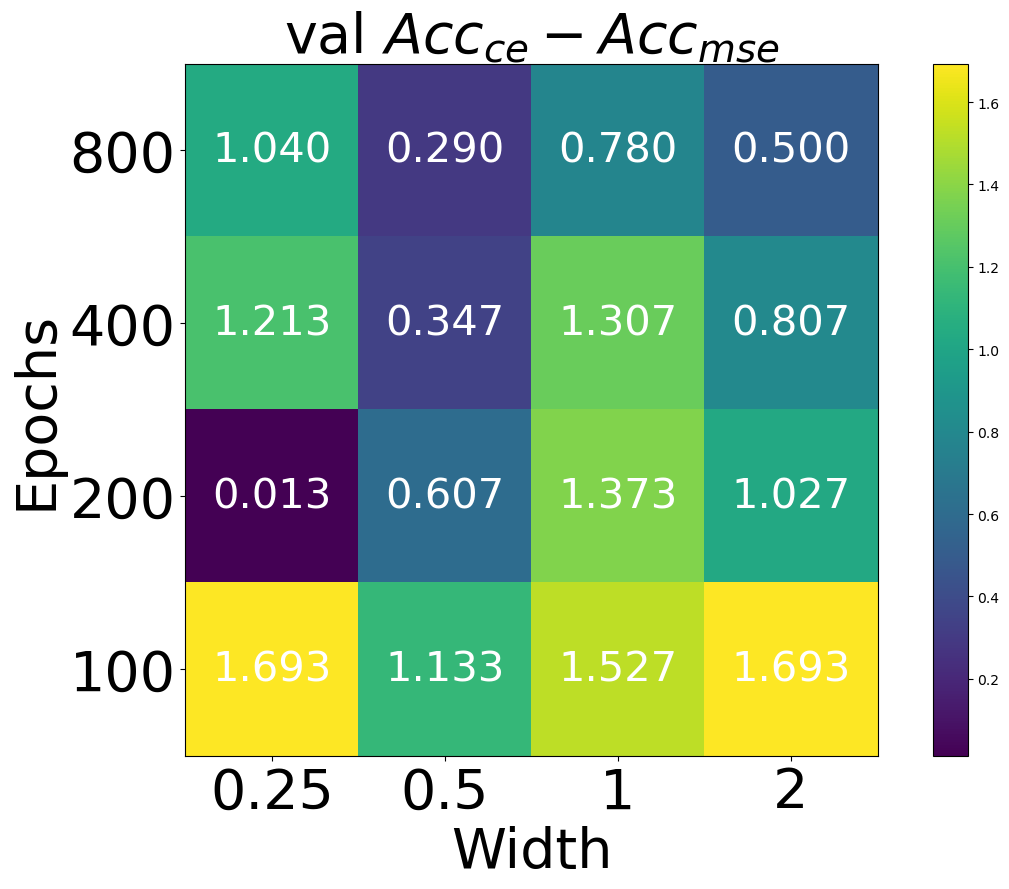}} \
    \subfloat[Test (MSE)]{\includegraphics[width=0.3\textwidth]{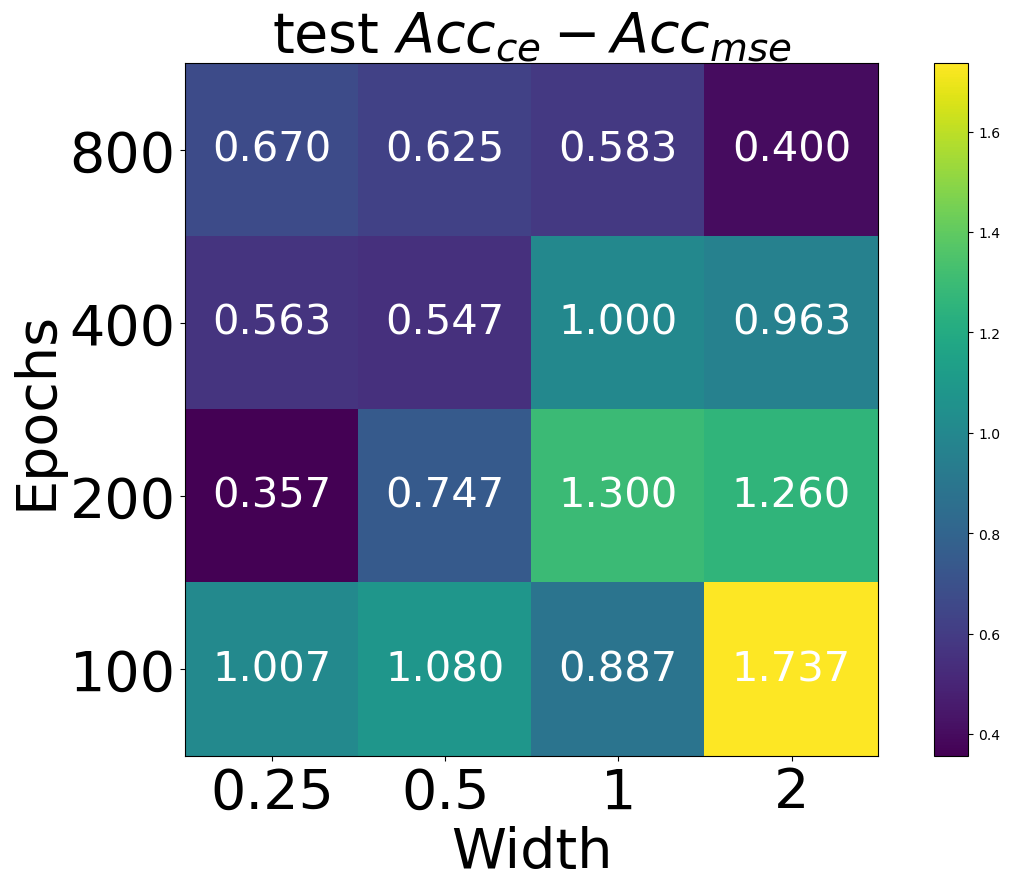}} \\
    \caption{\textbf{Illustration of $\text{Acc}_{CE}-\text{Acc}_{l}, l \in [\text{fl, ls,mse}]$ \;across different losses} with step-decay learning rate schedule and WideResNet50 with different training epochs and width. 
    }
    \label{fig:heatmap}
\end{figure}

\newpage
\appendices
\paragraph{Organizations and Basic.}
The appendix is organized as follows. We first introduce the basic definitions and inequalities used throughout the appendices. In \Cref{app:experiments}, we provide more details about the datasets, computational resources, and more experiment results on CIFAR10, CIFAR100 and miniImageNet datasets. In \Cref{app:loss-in-GL}, we prove that CE, FL and LS satisfy the contrastive property in \Cref{def:GLoss}. In \Cref{app:thm-global-fl}, we provide a detailed proof for \Cref{thm:global-minima}, showing that the Simplex ETFs are the \emph{only} global minimizers, as long as the loss function satisfies the \Cref{def:GLoss}. Finally, in Appendix \ref{sec:appendix-prf-global-geometry}, we present the whole proof for \Cref{thm:global-geometry} that the FL function is a locally strict saddle function with no spurious local minimizers existing locally and LS function is a globally strict saddle function with no spurious local minimizers existing globally. 

\begin{definition}[$K$-Simplex ETF]\label{def:simplex-ETF}
A standard Simplex ETF is a collection of points in $\bb R^K$ specified by the columns of
\begin{align*}
    \mb M \;=\;  \sqrt{\frac{K}{K-1}}  \paren{ \mb I_K - \frac{1}{K} \mb 1_K \mb 1_K^\top },
\end{align*}
where $\mb I_K \in \bb R^{K \times K}$ is the identity matrix, and $\mb 1_K \in \bb R^K$ is the all ones vector. In the other words, we also have
\begin{align*}
    \mb M^\top \mb M \;=\; \mb M \mb M^\top \;=\; \frac{K}{K-1}  \paren{ \mb I_K - \frac{1}{K} \mb 1_K \mb 1_K^\top }.
\end{align*}

As in \cite{papyan2020prevalence,fang2021layer}, in this paper we consider general Simplex ETF as a collection of points in $\R^d$ specified by the columns of $\sqrt{\frac{K}{K-1}} \mP \paren{ \mb I_K - \frac{1}{K} \mb 1_K \mb 1_K^\top }$, where $\mP\in\R^{d\times K} (d\ge K)$ is an orthonormal matrix, i.e., $\mP^\top \mP = \mb I_K$. 

\end{definition}

\begin{lemma}[Young's Inequality]\label{lem:young-inequality}
Let $p,q$ be positive real numbers satisfying $\frac{1}{p}+\frac{1}{q} = 1$. Then for any $a,b \in \bb R$, we have
\begin{align*}
    \abs{ab} \; \leq\; \frac{ \abs{a}^p }{p}\;+\; \frac{ \abs{b}^q }{q},
\end{align*}
where the equality holds if and only if $\abs{a}^p=\abs{b}^q$. The case $p=q=2$ is just the AM-GM inequality for $a^2,\;b^2$: $\abs{ab}\leq \frac{1}{2}\paren{a^2+b^2}$, where the equality holds if and only if $\abs{a} = \abs{b}$.
\end{lemma}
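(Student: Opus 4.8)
The plan is to prove Young's inequality by reducing it to the strict convexity of the exponential function, which cleanly yields both the inequality and its equality condition in one stroke. First I would dispose of the degenerate case: if $a = 0$ or $b = 0$, then the left-hand side $\abs{ab}$ is zero while the right-hand side is a sum of two nonnegative terms, so the inequality holds trivially; moreover, equality then forces $\abs{a}^p = \abs{b}^q = 0$, which is precisely the stated condition $\abs{a}^p = \abs{b}^q$ (both equal to zero). Hence I may assume $a \neq 0$ and $b \neq 0$ throughout the rest of the argument. Note that the hypotheses $p,q > 0$ and $\frac{1}{p} + \frac{1}{q} = 1$ force $p, q > 1$, which is what makes $x \mapsto x^p$ and $\exp$ strictly convex.

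For the nontrivial case, I would set $s = \ln \abs{a}^p$ and $t = \ln \abs{b}^q$, so that $\exp(s/p) = \abs{a}$ and $\exp(t/q) = \abs{b}$. Using $\frac{1}{p} + \frac{1}{q} = 1$, the product becomes a convex combination inside the exponential, and applying Jensen's inequality to the convex function $\exp$ with weights $\lambda = 1/p$ and $1 - \lambda = 1/q$ gives
\[
\abs{ab} = \exp\paren{\tfrac{1}{p}\, s + \tfrac{1}{q}\, t} \;\leq\; \tfrac{1}{p}\exp(s) + \tfrac{1}{q}\exp(t) = \frac{\abs{a}^p}{p} + \frac{\abs{b}^q}{q},
\]
which establishes the inequality. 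For the equality characterization I would invoke the \emph{strict} convexity of $\exp$: Jensen's inequality for a strictly convex function is an equality if and only if the two points coincide, i.e. $s = t$. Unwinding the substitution, $s = t$ reads $\ln \abs{a}^p = \ln \abs{b}^q$, equivalently $\abs{a}^p = \abs{b}^q$, which is exactly the claimed condition.

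The special case $p = q = 2$ I would treat separately and elementarily, since it is stated as a standalone consequence: the claim $\abs{ab} \leq \frac{1}{2}\paren{a^2 + b^2}$ follows immediately from expanding $\paren{\abs{a} - \abs{b}}^2 \geq 0$, with equality precisely when $\abs{a} = \abs{b}$. This keeps the AM-GM corollary self-contained and avoids routing through the general argument.

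The main obstacle—indeed essentially the only subtle point—is handling the equality case rigorously rather than merely the inequality; this hinges on the strictness of the convexity of $\exp$ (equivalently, the strict concavity of $\ln$, if one prefers to run the argument through the logarithm) together with the careful translation of $s = t$ back into $\abs{a}^p = \abs{b}^q$. An alternative route, should a more elementary single-variable argument be preferred, is to fix $\abs{b}$ and minimize $g(x) = \frac{x^p}{p} + \frac{\abs{b}^q}{q} - x\abs{b}$ over $x = \abs{a} \geq 0$: the stationarity condition $x^{p-1} = \abs{b}$ combined with the identity $q = \frac{p}{p-1}$ locates the unique minimizer at $x^p = \abs{b}^q$, where a direct substitution shows $g = \abs{b}^q\paren{\frac{1}{p} + \frac{1}{q} - 1} = 0$, recovering both the bound and its equality case from the convexity of $g$.
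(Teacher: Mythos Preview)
Your proof is correct and complete, handling both the inequality and the equality case via the strict convexity of the exponential. The paper, however, states this lemma without proof, treating it as a standard preliminary result; there is no proof in the paper to compare against.
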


The following Lemma extends the standard variational form of the nuclear norm.
\begin{lemma} \label{lem:nuclear-norm}
For any fixed $\mW\in\R^{K\times d}$, $\mH_i\in\R^{d\times K}$, $\bar{\mZ}_i=\mW\mH_i\in\R^{K\times K}$ and $\alpha>0$, we have
\begin{align}\label{eqn:nuclear-norm-eq}
    \norm{\bar{\mZ}_i}{*} \;\leq\; \frac{1}{ 2 \sqrt{ \alpha } } \paren{ \norm{\mb W}{F}^2 + \alpha \norm{\mb H_i}{F}^2  }.
\end{align}
Here, $\norm{\bar{\mZ}_i}{*}$ denotes the nuclear norm of $\bar{\mZ}_i$:
\begin{align*}
    \norm{\bar{\mZ}_i}{*} \;:=\; \sum_{k=1}^{ K} \sigma_{k}(\bar{\mZ}_i) = \trace\paren{\mb \Sigma},\quad \text{with}\quad \bar{\mZ}_i \;=\; \mb U \mb \Sigma \mb V^\top,
\end{align*}
where $\Brac{\sigma_{k}}_{k=1}^{ K }$ denotes the singular values of $\bar{\mZ}_i$, and $\bar{\mZ}_i =\mb U \mb \Sigma \mb V^\top$ is the singular value decomposition (SVD) of $\bar{\mZ}_i$.
\end{lemma}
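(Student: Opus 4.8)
Inequality \eqref{eqn:nuclear-norm-eq} is a weighted version of the classical variational characterization of the nuclear norm, $\norm{\mZ}{*} = \min_{\mZ = \mA\mB}\tfrac12\paren{\norm{\mA}{F}^2 + \norm{\mB}{F}^2}$, specialized to the factorization $\bar{\mZ}_i = \mW\mH_i$ and rescaled to carry the weight $\alpha$. The plan is to begin from the SVD already written in the statement, $\bar{\mZ}_i = \mb U\mb\Sigma\mb V^\top$ with $\mb U,\mb V\in\R^{K\times K}$ orthogonal, rewrite $\norm{\bar{\mZ}_i}{*}$ as a trace that exposes the two factors $\mW$ and $\mH_i$, bound that trace by a Frobenius inner product, and finish with the AM--GM form of Young's inequality (\Cref{lem:young-inequality} with $p=q=2$) applied with the appropriate splitting of $\alpha$.

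Concretely, first observe $\norm{\bar{\mZ}_i}{*} = \trace(\mb\Sigma) = \trace(\mb U^\top\bar{\mZ}_i\mb V) = \trace\big((\mb U^\top\mW)(\mH_i\mb V)\big)$, and then use the identity $\trace(\mM\mN) = \langle \mM^\top,\mN\rangle$ (Frobenius inner product) to write this as $\langle \mW^\top\mb U,\ \mH_i\mb V\rangle$. Second, introduce the weight through $\langle \mW^\top\mb U,\mH_i\mb V\rangle = \langle \alpha^{-1/4}\mW^\top\mb U,\ \alpha^{1/4}\mH_i\mb V\rangle$ and apply Young's inequality to obtain $\norm{\bar{\mZ}_i}{*} \le \tfrac12\big(\alpha^{-1/2}\norm{\mW^\top\mb U}{F}^2 + \alpha^{1/2}\norm{\mH_i\mb V}{F}^2\big)$. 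Third, invoke orthogonality $\mb U\mb U^\top = \mb I_K$ and $\mb V\mb V^\top = \mb I_K$ to get $\norm{\mW^\top\mb U}{F} = \norm{\mW}{F}$ and $\norm{\mH_i\mb V}{F} = \norm{\mH_i}{F}$; substituting and factoring out $\alpha^{-1/2}$ gives exactly $\frac{1}{2\sqrt\alpha}\paren{\norm{\mW}{F}^2 + \alpha\norm{\mH_i}{F}^2}$.

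There is no genuine obstacle here beyond bookkeeping. The only two points that warrant a moment of care are the transpose identity $\trace(\mM\mN) = \langle \mM^\top,\mN\rangle$ used to convert the trace into an inner product, and the fact that right-multiplication by a square orthogonal matrix preserves the Frobenius norm (which follows from $\trace(\mW^\top\mW\,\mb U\mb U^\top) = \trace(\mW^\top\mW)$ and similarly for $\mH_i$). An equivalent route, avoiding the SVD, is to use the dual characterization $\norm{\bar{\mZ}_i}{*} = \max_{\norm{\mQ}{\mathrm{op}}\le 1}\langle\mQ,\bar{\mZ}_i\rangle$ and bound $\langle\mQ,\mW\mH_i\rangle = \langle\mW^\top\mQ,\mH_i\rangle \le \norm{\mW^\top\mQ}{F}\norm{\mH_i}{F}$; either argument is short. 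I would present the SVD version since \Cref{lem:nuclear-norm} already fixes that notation.
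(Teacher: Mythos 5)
Your proposal is correct and follows essentially the same route as the paper's proof: rewrite the nuclear norm via the SVD as $\trace(\mb U^\top \mW \mH_i \mb V)$, apply the weighted AM--GM form of Young's inequality, and use the orthogonality of $\mb U,\mb V$ to pass to $\norm{\mW}{F}$ and $\norm{\mH_i}{F}$. The only cosmetic difference is that you use norm-preservation equalities where the paper uses the bounds $\norm{\mb U^\top\mW}{F}\le\norm{\mW}{F}$, $\norm{\mH_i\mb V}{F}\le\norm{\mH_i}{F}$ from $\norm{\mb U}{}=\norm{\mb V}{}=1$.
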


\begin{proof}[Proof of Lemma \ref{lem:nuclear-norm}] 
Let $\bar{\mZ}_i = \mb U \mb \Sigma \mb V^\top$ be the SVD of $\bar{\mZ}_i$. For any $\mb W\mb H_i = \bar{\mZ}_i$, we have
\begin{align*}
    \norm{\bar{\mZ}_i}{*} \;&=\; \trace \paren{ \mb \Sigma } \;=\; \trace \paren{ \mb U^\top \bar{\mZ}_i \mb V } \;=\; \trace\paren{ \mb U^\top \mb W \mb H_i \mb V } \\
    \;&\leq\; \frac{1}{2\sqrt{\alpha}} \norm{\mb U^\top \mb W}{F}^2 + \frac{ \sqrt{\alpha} }{2} \norm{ \mb H_i \mb V }{F}^2 \;\leq\; \frac{1}{2\sqrt{\alpha}} \paren{ \norm{\mb W}{F}^2 + \alpha \norm{\mb H_i}{F}^2 }, 
\end{align*}
where the first inequality utilize the Young's inequality in Lemma \ref{lem:young-inequality} that $\abs{\trace(\mA\mB)}\le \frac{1}{2c}\norm{\mA}{F}^2 + \frac{c}{2}\norm{\mB}{F}^2$ for any $c>0$ and $\mA,\mB$ of appropriate dimensions, 
and the last inequality follows because $\norm{\mb U}{}= 1$ and $\norm{\mb V}{}= 1$. 
Therefore, we have 
\begin{align*}
    \norm{\bar{\mZ}_i}{*} \;\leq\; \frac{1}{ 2 \sqrt{ \alpha } } \paren{ \norm{\mb W}{F}^2 + \alpha \norm{\mb H_i}{F}^2  }.
\end{align*}
We complete the proof.
\end{proof}

\begin{lemma}[Eigenvalues of Diagonal-Plus-Rank-One Matrices]\label{lem:diagonal-plus-rank-one}
Let $\tau<0$, $\vz\in\R^n$, and $\mD$ be an $n\times n$ diagonal matrix with diagonals $d_1,\ldots,d_n$. Let $\lambda_1,\ldots, \lambda_n $ be the  eigenvalues of the diagonal-plus-rank-one matrix $\mD+\tau\vz\vz^\top$.
\begin{itemize}
    \item {Case 1:} If $d_1>d_2>\cdots>d_n$ and $z_i\neq 0$ for all $i=1,\cdots,n$, then the eigenvalues $\{\lambda_i\}$  are equal to the $n$ roots of the rational function \cite{cuppen1980divide, stor2015forward}
    \begin{align*}
    w(\lambda)=1+\tau\vz^\top\paren{\mD-\lambda \mb I}^{-1}\vz=1+\tau\sum_{j=1}^n\frac{z_j^2}{d_j-\lambda},
    \end{align*}
    and the diagonals $\{d_i\}$ strictly separate the eigenvalues as following:
    \begin{align}
     d_1>\lambda_1>d_2>\lambda_2>\cdots>d_n>\lambda_n.\label{lem:inequal}
    \end{align}
    
    \item {Case 2:}If $z_i=0$ for some $i$, then $d_i$ is an eigenvalue of $\mD+\tau\vz\vz^\top$ with  corresponding eigenvector $\ve_i$ since
        \begin{align*}
        (\mb D+\tau\vz\vz^\top)\ve_i=d_i\ve_i+\tau\vz z_i=d_i\ve_i.
    \end{align*}
   
    The remaining $n-1$ eigenvalues of $\mD+\tau\vz\vz^\top$ are equal to the eigenvalues of the smaller matrix $\mD'+\tau\vz'\vz^{'\top}$, where $\mD'\in\R^{(n-1)\times (n-1)}$ and $\vz'\in\R^{n-1}$ are obtained by removing the $i$-th rows and columns from $\mD$ and the $i$-th element from $\vz$, respectively. One can repeat this process if $\vz'$ still has zero element.
  
   
    \item {Case 3:} If there are $m$ mutually equal diagonal elements, say $d_{i+1}=\cdots=d_{i+m}=d$, then for any orthogonal $m\times m$ matrix $\mP$,  $\mD+\tau\vz\vz^\top$ has the same eigenvalues as  
    \[
  \mT \mD \mT^\top + \tau (\mT\vz)(\mT\vz)^\top =  \mD + \tau \wh{\vz}\wh{\vz}^\top, \ \text{where} \ \mT =\begin{bmatrix}
    \mb I_i& &\\
    &\mP&\\
    &&\mb I_{n-i-m}
    \end{bmatrix}, \wh \vz = \mT \wh \vz.
    \]
   We can then choose $\mP$ as a Householder transformation such that  
   \[
   \mP \begin{bmatrix} z_{i+1} & z_{i+2} & \cdots & z_{i+m}\end{bmatrix}^\top =  \begin{bmatrix}0 & 0 & \cdots & \sqrt{\sum_{j=i+1}^{i+m} z_j^2}\end{bmatrix}^\top.
   \]
   Thus, according to Case 2, $d$ is an eigenvalue of $\mD + \tau \wh{\vz}\wh{\vz}^\top$ repeated $m-1$ times and the remaining eigenvalues can be computed by checking the smaller matrix. 
    \end{itemize}
\end{lemma}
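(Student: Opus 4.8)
The plan is to handle the three cases in order, with Case~1 carrying the real content and Cases~2--3 reducing to it. For Case~1, the first step is to record, via the matrix determinant lemma, that for every $\lambda \notin \{d_1,\dots,d_n\}$ (so that $\mD - \lambda \mb I$ is invertible)
\[
\det\!\big(\mD + \tau \vz\vz^\top - \lambda \mb I\big) \;=\; \det(\mD - \lambda\mb I)\,\big(1 + \tau \vz^\top (\mD - \lambda\mb I)^{-1}\vz\big) \;=\; \Big(\textstyle\prod_{j=1}^n (d_j - \lambda)\Big)\, w(\lambda).
\]
Thus the eigenvalues of $\mD + \tau\vz\vz^\top$ that are not among the $d_j$ are exactly the zeros of $w$; clearing denominators, $\prod_j(d_j-\lambda)\,w(\lambda)$ is a polynomial of degree $n$ (leading term $(-1)^n\lambda^n$), which is the characteristic polynomial, so the full spectrum is accounted for once I locate the $n$ roots of $w$.

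Next I would rule out any $d_i$ being an eigenvalue: if $(\mD + \tau\vz\vz^\top)\vv = d_i\vv$, the $i$-th coordinate gives $\tau z_i(\vz^\top\vv)=0$, hence $\vz^\top\vv = 0$ since $\tau\neq 0$ and $z_i\neq 0$; then $(\mD - d_i\mb I)\vv = \vzero$ forces $\vv = c\ve_i$ and $\vz^\top\vv = cz_i$, contradicting $\vz^\top\vv = 0$ unless $\vv=\vzero$. Then I analyze $w$ on the real line: $w'(\lambda) = \tau\sum_j z_j^2/(d_j-\lambda)^2 < 0$ on each open interval missing all $d_j$, so $w$ is strictly decreasing there. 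On $(d_{i+1},d_i)$ one has $w(\lambda)\to +\infty$ as $\lambda\downarrow d_{i+1}$ (the term $\tau z_{i+1}^2/(d_{i+1}-\lambda)$ blows up to $+\infty$, as $\tau<0$ and the denominator is negative) and $w(\lambda)\to -\infty$ as $\lambda\uparrow d_i$, so there is exactly one root $\lambda_i \in (d_{i+1},d_i)$. On $(-\infty,d_n)$, $w\to 1$ at $-\infty$ and $w\to -\infty$ at $d_n^-$, giving exactly one root $\lambda_n$; on $(d_1,\infty)$, $w$ decreases from $+\infty$ down to $1$, hence has no root. This produces $n$ distinct roots, which by the degree count are all the eigenvalues, and simultaneously yields the strict interlacing $d_1 > \lambda_1 > d_2 > \lambda_2 > \cdots > d_n > \lambda_n$.

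For Case~2, when $z_i = 0$ the displayed identity shows $\ve_i$ is an eigenvector with eigenvalue $d_i$; moreover $\Span\{\ve_j : j\neq i\}$ is invariant under both $\mD$ and $\vz\vz^\top$ (the latter because $\vz$ lies in it), so the remaining $n-1$ eigenvalues are those of the compression $\mD' + \tau\vz'\vz'^\top$, and one iterates. For Case~3, for any orthogonal $m\times m$ matrix $\mP$ the block-diagonal $\mT = \operatorname{diag}(\mb I_i,\mP,\mb I_{n-i-m})$ satisfies $\mT\mD\mT^\top = \mD$, since $\mD$ on the repeated block equals $d\,\mb I_m$ and $\mP(d\,\mb I_m)\mP^\top = d\,\mb I_m$; hence $\mT(\mD+\tau\vz\vz^\top)\mT^\top = \mD + \tau(\mT\vz)(\mT\vz)^\top$ is orthogonally similar to the original and has the same eigenvalues. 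Choosing $\mP$ to be the Householder reflection sending $(z_{i+1},\dots,z_{i+m})^\top$ to $(0,\dots,0,\sqrt{\sum_{j=i+1}^{i+m}z_j^2})^\top$ introduces $m-1$ zero entries, so Case~2 applies and $d$ is an eigenvalue of multiplicity (at least) $m-1$, the rest coming from the smaller matrix.

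The main obstacle I anticipate is the bookkeeping in Case~1: correctly tracking the one-sided limits of $w$ at each $d_i^{\pm}$ and at $\pm\infty$ (the sign pattern hinges on $\tau<0$), so as to certify exactly one root in each of the $n$ bracketing intervals and none in $(d_1,\infty)$, and then matching this count against the degree-$n$ characteristic polynomial so that no eigenvalue is missed. Cases~2--3 are then essentially invariant-subspace and orthogonal-similarity arguments that bootstrap off Case~1.
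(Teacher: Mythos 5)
Your proposal is correct and follows essentially the same route as the paper: Cases~2 and~3 coincide with the inline eigenvector and orthogonal-similarity (Householder) justifications already embedded in the lemma statement, and for Case~1 the paper simply cites the divide-and-conquer literature, whose standard argument is exactly what you supply — the matrix determinant lemma giving $\det(\mD+\tau\vz\vz^\top-\lambda\mb I)=\prod_j(d_j-\lambda)\,w(\lambda)$, exclusion of the $d_i$ as eigenvalues, and the sign/monotonicity analysis of the secular function $w$ (with $\tau<0$ flipping the interlacing to $d_i>\lambda_i>d_{i+1}$ and $\lambda_n<d_n$). Your bookkeeping of the one-sided limits and the root count in each bracketing interval is accurate, so no gaps.
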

Based on \Cref{lem:diagonal-plus-rank-one}, we can prove the following Lemma.

\begin{lemma}\label{lem:Z-structure}
Let $K \ge 3$ and $\mZ=-\paren{\mb I_K-\frac{1}{K}\vone\vone^\top}\text{diag}\paren{\rho_1, \rho_2,\cdots,\rho_K}$ with $|\rho_1|\geq|\rho_2|\geq\cdots\geq|\rho_K|$ and $|\rho_1|>0$.  Also let $\sigma_i \ge 0$ be the $i$-th largest singular value of $\mZ$.  Suppose there exists $k$ with $1\le k\le K-1$ such that 
\e
\sigma_1 = \cdots = \sigma_k = \sigma_{\max}>0 \ \text{and} \
\sigma_{k+1} = \cdots = \sigma_K = 0.
\label{eq:assump-sigs}\ee 
Then $|\rho_1|, \cdots, |\rho_K|$ must satisfy either
\begin{align*}
    |\rho_1|=|\rho_2|=\cdots=|\rho_K|, \quad \text{with} \quad \sigma_{\max} = |\rho_1|,
\end{align*}
or
\begin{align*}
    \rho_2=\cdots=\rho_K=0, \quad  \text{with} \quad \sigma_{\max} = \sqrt{\frac{K-1}{K}}|\rho_1|.
\end{align*}
\end{lemma}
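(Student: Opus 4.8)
The plan is to translate the singular-value hypothesis on $\mZ$ into an eigenvalue statement for the symmetric PSD matrix $\mZ^\top\mZ$, recognize the latter as a diagonal-plus-rank-one matrix, and then invoke \Cref{lem:diagonal-plus-rank-one} to constrain the $\rho_i$. Writing $\mD = \diag(\rho_1,\dots,\rho_K)$ and $\vrho = (\rho_1,\dots,\rho_K)^\top$, and using that $\mb I_K - \tfrac1K\vone\vone^\top$ is a symmetric projection, one computes
\[
\mZ^\top\mZ \;=\; \mD\paren{\mb I_K - \tfrac1K\vone\vone^\top}\mD \;=\; \mD^2 - \tfrac1K\vrho\vrho^\top,
\]
a diagonal-plus-rank-one matrix with (nonincreasing) diagonal entries $d_i = \rho_i^2$, rank-one vector $\vz = \vrho$, and scalar $\tau = -1/K < 0$. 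The squared singular values of $\mZ$ are exactly the eigenvalues of this matrix, so \eqref{eq:assump-sigs} says its spectrum is $\sigma_{\max}^2 > 0$ with multiplicity $k$ and $0$ with multiplicity $K-k$, where $1 \le k \le K-1$; in particular $0$ is an eigenvalue and at least one eigenvalue is positive.

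Next I would reduce to a small ``core'' problem. Collect the distinct values among $\{\rho_i^2\}$, say $a_1 > \cdots > a_r \ge 0$ with multiplicities $m_1,\dots,m_r$. Applying a Householder rotation inside each block of equal diagonals (Case 3 of \Cref{lem:diagonal-plus-rank-one}) and then peeling off the zero coordinates of the rotated $\vz$ (Case 2) shows that each $a_j>0$ is an eigenvalue of $\mZ^\top\mZ$ with multiplicity at least $m_j - 1$, that a zero diagonal block (present iff some $\rho_i = 0$) contributes the eigenvalue $0$ with its full multiplicity $m_r$, and that the remaining eigenvalues are those of the core matrix $\diag(a_1,\dots,a_{r_+}) - \tfrac1K \vz_c\vz_c^\top$, where $r_+$ is the number of distinct \emph{positive} values and $\vz_c = (\sqrt{m_1 a_1},\dots,\sqrt{m_{r_+}a_{r_+}})$ has all entries nonzero and strictly decreasing diagonals. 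Case 1 then yields $r_+$ distinct eigenvalues $\mu_1 > \cdots > \mu_{r_+}$ strictly interlacing the $a_j$; and when $r_+ = 1$, direct computation gives the single core eigenvalue $\mu_1 = a_1(K - m_1)/K$.

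Finally I would impose the hypothesis that all nonzero eigenvalues coincide. If $r_+ \ge 2$, then $\mu_1 \ne \mu_2$ are two distinct positive eigenvalues (using $a_{r_+} > 0$ and the interlacing), contradicting a single positive value; and $r_+ = 1$ with $m_1 \ge 2$ gives the distinct positive eigenvalues $\mu_1 = a_1(K-m_1)/K$ and $a_1$, again a contradiction. Hence $r_+ = 1$ and $m_1 = 1$. If some $\rho_i = 0$ (so $a_r = 0$, $m_r = K-1$), this forces $\rho_2 = \cdots = \rho_K = 0$, and then $\sigma_{\max}^2 = \mu_1 = \tfrac{K-1}{K}\rho_1^2$; if no $\rho_i$ vanishes then $r = r_+ = 1$, so $\rho_1^2 = \cdots = \rho_K^2$, the core eigenvalue is $\mu_1 = 0$, and $a_1 = \rho_1^2$ accounts for the remaining $K-1$ eigenvalues, i.e.\ $\sigma_{\max} = |\rho_1|$. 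Either way we land in one of the two asserted configurations.

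The main obstacle is the bookkeeping in the reduction step: carefully applying Cases 2 and 3 of \Cref{lem:diagonal-plus-rank-one} to remove all repeated and zero diagonal entries while tracking their contributions to the spectrum, and checking that the leftover core matrix genuinely has strictly decreasing diagonals and an all-nonzero rank-one vector so that Case 1's strict interlacing applies. Everything after that is an interlacing-and-counting argument; the assumption $K \ge 3$ is used only to rule out the borderline $r_+ = 2$, $m_1 = m_2 = 1$ configuration, whose $2\times 2$ core is singular precisely when $K = 2$.
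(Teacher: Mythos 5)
Your route is essentially the paper's route: both form $\mZ^\top\mZ=\diag\paren{\rho_1^2,\dots,\rho_K^2}-\tfrac{1}{K}\vrho\vrho^\top$ and feed it to \Cref{lem:diagonal-plus-rank-one}; the paper enumerates the possible patterns of the $|\rho_i|$ explicitly, while you compress that enumeration into a multiplicity/core bookkeeping. Your reduction itself is set up correctly (each positive value $a_j$ appears with multiplicity at least $m_j-1$, the zero block contributes $0$ with multiplicity $m_r$, and the core $\diag(a_1,\dots,a_{r_+})-\tfrac1K\vz_c\vz_c^\top$ has strictly decreasing positive diagonals and all-nonzero $\vz_c$, so Case 1's strict interlacing applies).

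The elimination step, however, has two concrete flaws. First, the claim that ``$r_+\ge 2$ gives two distinct \emph{positive} eigenvalues $\mu_1\neq\mu_2$'' fails when $r_+=2$ and $m_1+m_2=K$ (no zero entries): the core determinant is $a_1a_2\tfrac{K-m_1-m_2}{K}=0$, so $\mu_2=0$. For example, $K=3$ and $(|\rho_1|,|\rho_2|,|\rho_3|)=(2,1,1)$ gives spectrum $\{3,1,0\}$ for $\mZ^\top\mZ$. The contradiction in that sub-case must instead come from the block eigenvalue $a_2$ (multiplicity $m_2-1\ge 1$, which is forced since $m_1+m_2=K\ge 3$ rules out $m_1=m_2=1$) together with $\mu_1$, which strict interlacing separates from both $a_1$ and $a_2$; your machinery supplies this, but your stated justification does not, and your closing remark that the core is singular only when $K=2$ is true only for the $m_1=m_2=1$ configuration. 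Second, ``$r_+=1$ with $m_1\ge 2$ gives a contradiction, hence $r_+=1$ and $m_1=1$'' overshoots: when $m_1=K$ (all $|\rho_i|$ equal, no zero block) the core eigenvalue is $\mu_1=a_1(K-m_1)/K=0$, there is no second positive eigenvalue, and this is precisely the first allowed configuration of the lemma --- which your final sentence then recovers, inconsistently with ``hence $m_1=1$.'' The contradiction for $r_+=1$ should be asserted only for $2\le m_1\le K-1$ (where a zero block exists and $\mu_1>0$), leaving the surviving cases $m_1=1$ and $m_1=K$. With these two local repairs the counting closes and the argument matches the lemma.
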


\begin{proof}[Proof of Lemma \ref{lem:Z-structure}] Because
\begin{align*}
    \mZ^\top\mZ&=\text{diag}\paren{\rho_1, \rho_2,\cdots,\rho_K}\paren{\mb I_K-\frac{1}{K}\vone\vone^\top}\text{diag}\paren{\rho_1, \rho_2,\cdots,\rho_K}\\
    &=\text{diag}\paren{\rho_1^2, \rho_2^2,\cdots,\rho_K^2}-\frac{1}{K}\mb\rho\mb\rho^\top
\end{align*}
where $\mb\rho=\begin{bmatrix}\rho_1&\rho_2&\cdots&\rho_K\end{bmatrix}^\top$, $\mZ^\top\mZ$ satisfies the form of Diagonal-Plus-Rank-One in \Cref{lem:diagonal-plus-rank-one} with $\mb D=\text{diag}\paren{\rho_1^2, \rho_2^2,\cdots,\rho_K^2}$, $\vz=\mb\rho$ and $\tau=-\frac{1}{K}$. Let $\lambda_1\ge\lambda_2\ge\cdots\lambda_K\ge0$ denote the $n$ eigenvalues of $\mZ^\top\mZ$. Due to $\vone^\top\mZ=\vzero^\top$, we can have $\lambda_K=0$.
\begin{itemize}
    \item {If $|\rho_1|=|\rho_2|=\cdots=|\rho_K|$:} we have
    \begin{align*}
        \rho_1^2=\lambda_1=\cdots=\lambda_{K-1}=\rho_K^2>\lambda_K=0.
    \end{align*}
    Thus, $\sigma_{\max}=\sqrt{\lambda_1}=|\rho_1|$. 
    \item {If $|\rho_1|>|\rho_2|=\cdots=|\rho_K|=0$:} according to Case 2 in \Cref{lem:diagonal-plus-rank-one}, we have
    \begin{align*}
        \lambda_1 = \paren{1-1/K}\rho_1^2>\rho_2^2=\lambda_2\cdots=\rho_K^2=\lambda_K=0.
    \end{align*}
    Thus, $\sigma_{\max}=\sqrt{\paren{1-1/K}\rho_1^2}=\sqrt{{(K-1)}/{K}}|\rho_1|$.
    \item {If $|\rho_1|>|\rho_2|=\cdots=|\rho_K|\neq0$:} according to Case 3 in \Cref{lem:diagonal-plus-rank-one}, we have
    \begin{align*}
    \lambda_2\cdots=\lambda_{K-1} = \rho_2^2
    \end{align*}
    and the remaining two eigenvalues are the same to those of $\begin{bmatrix}\rho_1^2&\\&\rho_{K}^2\end{bmatrix}+(-\frac{1}{K})\begin{bmatrix}\rho_1\\\sqrt{K-1}\rho_K\end{bmatrix}\begin{bmatrix}\rho_1&\sqrt{K-1}\rho_K\end{bmatrix}$. According to \eqref{lem:inequal} in \Cref{lem:diagonal-plus-rank-one}, we can obtain
    \begin{align*}
        \rho_1^2>\lambda_1>\rho_K^2>\lambda_K=0.
    \end{align*} Combing them together, we can have \begin{align*}
        \rho_1^2>\lambda_1>\rho_2^2=\lambda_2\cdots=\rho_K^2>\lambda_K=0
    \end{align*}thus, $0=\lambda_K<\lambda_2<\lambda_1=\lambda_{\max}$, which violates the assumption \eqref{eq:assump-sigs}.
    \item {If $|\rho_{1}|=\cdots=|\rho_{i}|>|\rho_{i+1}|=\cdots=|\rho_{K}|=0$ and $1<i<K$: }according to the Case 2 and Case 3 in \Cref{lem:diagonal-plus-rank-one}, we can have 
    \begin{align*}
        &\lambda_1=\cdots=\lambda_{i-1}=\rho_1^2\\
        &\lambda_{i+1}=\cdots=\lambda_K=0
    \end{align*}
    and $0<\lambda_i=\rho_1^2-\frac{i}{K}\rho_1^2<\rho_1^2=\lambda_{\max}$, which violates the assumption \eqref{eq:assump-sigs}.
    \item {If $|\rho_{1}|=\cdots=|\rho_{i}|>|\rho_{i+1}|=\cdots=|\rho_{K}|\neq0$ and $1<i<K$: }according to Case 3 in \Cref{lem:diagonal-plus-rank-one}, we have 
    \begin{align*}
        &\lambda_1=\cdots=\lambda_{i-1}=\rho_1^2\\
        &\lambda_{i+1}=\cdots=\lambda_{K-1}=\rho_K^2
    \end{align*}
    and the remaining two eigenvalues are the same to those of $\mD=\begin{bmatrix}\rho_1^2&\\&\rho_{K}^2\end{bmatrix}+(-\frac{1}{K})\begin{bmatrix}\sqrt{i}\rho_1\\\sqrt{K-i}\rho_K\end{bmatrix}\begin{bmatrix}\sqrt{i}\rho_1&\sqrt{K-i}\rho_K\end{bmatrix}$. According to \eqref{lem:inequal} in \Cref{lem:diagonal-plus-rank-one}, we can obtain
    \begin{align*}
        \rho_1^2=\rho_i^2>\lambda_i>\rho_K^2>\lambda_K=0.
    \end{align*}
    Combing them together, we can have
    \begin{align*}
        \rho_1^2=\lambda_1=\cdots=\rho_i^2>\lambda_i>\rho_{i+1}^2=\lambda_{i+1}=\cdots=\rho_K^2>\lambda_K=0
    \end{align*}
    thus, $0=\lambda_K<\lambda_i<\lambda_1=\lambda_{\max}$, which violates the assumption \eqref{eq:assump-sigs}.
    \item {If $|\rho_{1}|>|\rho_{i}|>|\rho_{K}|$ for some $1<i<K$: } Suppose $|\rho_{1}|=\cdots=|\rho_m|$, $|\rho_{i}|=\cdots=|\rho_{i+n-1}|$ and $|\rho_{K-t+1}|=\cdots=|\rho_K|$, where $m<i$, $i+n-1<K-t+1$ and $m,n,t\geq 1$. According to the \eqref{lem:inequal}, Case 2 and Case 3 in \Cref{lem:diagonal-plus-rank-one}, we can find 
    \begin{align*}
        \rho_m^2>\lambda_m>\rho_{i}^2\geq\lambda_{i+n-1}>\rho_K^2\geq\lambda_K=0
    \end{align*}
    thus, $0=\lambda_K<\lambda_{i+n-1}<\lambda_m\leq\lambda_{\max}$, which violates the assumption \eqref{eq:assump-sigs}.
\end{itemize}
We complete the proof.
\end{proof}

\section{Experiments}\label{app:experiments}
In this section, we first describe more details about the datasets and the computational resource used in the paper. Particularly, all CIFAR10, CIFAR100 and \revise{miniImageNet} are publicly available for academic purpose under the MIT license, and we run all experiments on a single RTX3090 GPU with 24GB memory. Moreover, \revise{additional experimental results on CIFAR10, CIFAR100 and miniImageNet are presented in \Cref{subsec:cifar10-experiments},  \Cref{subsec:cifar100-experiments}, and \Cref{subsec:mini-experiments}, respectively.}

\subsection{Additional experimental results on CIFAR10}\label{subsec:cifar10-experiments}
\revise{In \Cref{sec:experiment}, we present the test accuracy for different losses function across various different iteration-width configurations. Moreover, we further show the ${\mc NC}_1$ for different loss functions across different iteration-width configurations , and we reuse the results of test accuracy in \Cref{fig:lossmap-cifar10} for better investigation. The experiment results in \Cref{fig:lossmap-nc1-acc-cifar10} consistently show that the value of ${\mc NC}_1$ of training WideResNet50-0.25 for 100 epochs is around three orders of magnitude larger than it of training WideResNet50-2 for 800 epochs, which indicates that the previous configuration setting is much less collapsed than the latter one. In terms of test accuracy, the maximal difference across different losses for $\text{width}=0.25$ and $\text{epochs}=100$ configuration is $1.037\%$, which is larger than $0.36\%$ for $\text{width}=2$ and $\text{epochs}=800$ configuration. These results support our claim that all losses lead to identical performance, as long as the network has sufficient approximation power and the number of optimization is enough for the convergence to the \NC\ global optimality.}
\begin{figure}[t]
    \centering
    \subfloat[${\mc NC}_1$ (CE)]{\includegraphics[width=0.23\textwidth]{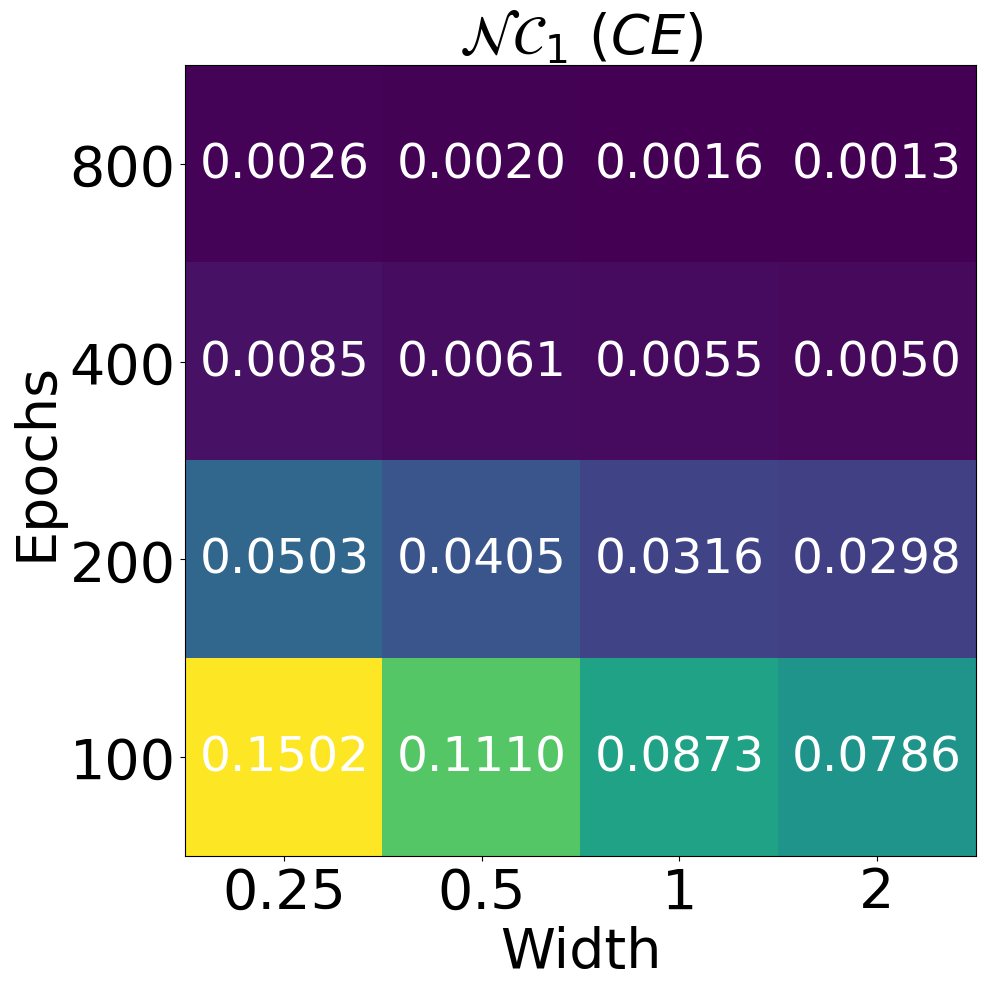}} \
    \subfloat[${\mc NC}_1$ (MSE)]{\includegraphics[width=0.23\textwidth]{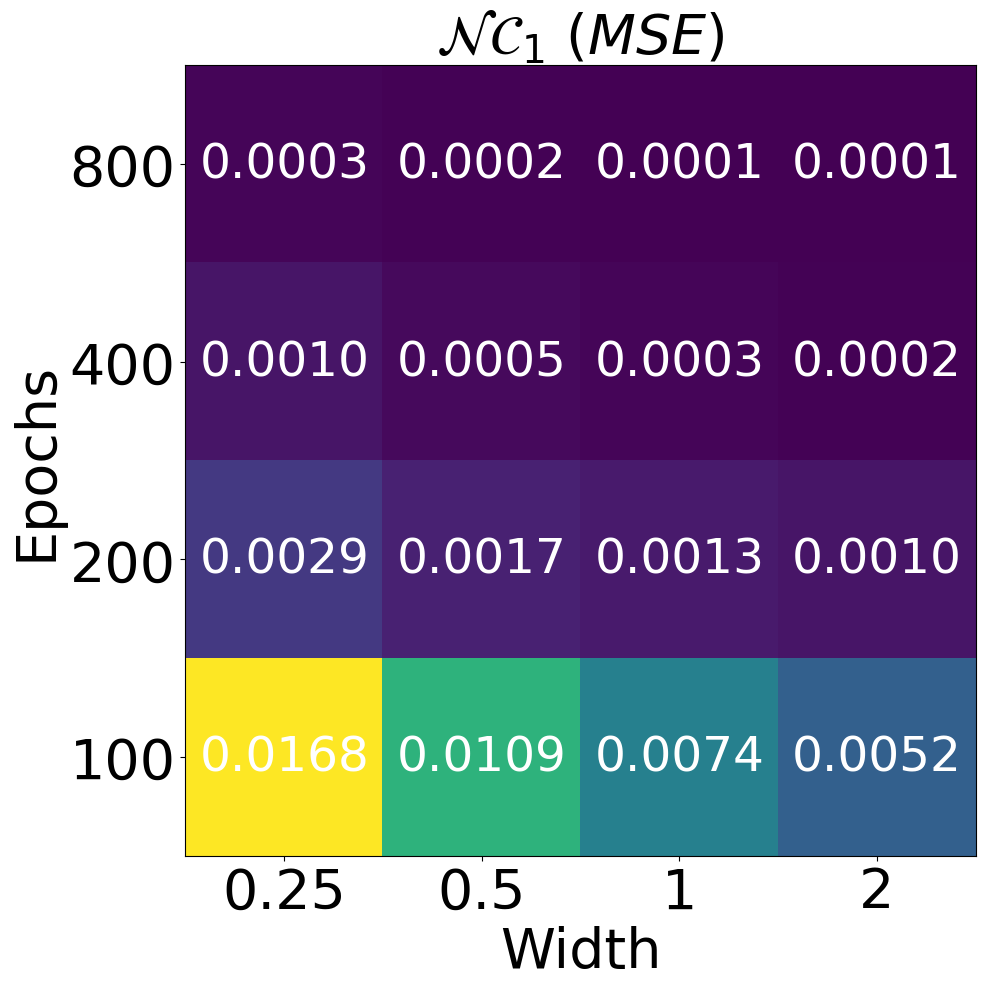}} \
    \subfloat[${\mc NC}_1$ (FL)]{\includegraphics[width=0.23\textwidth]{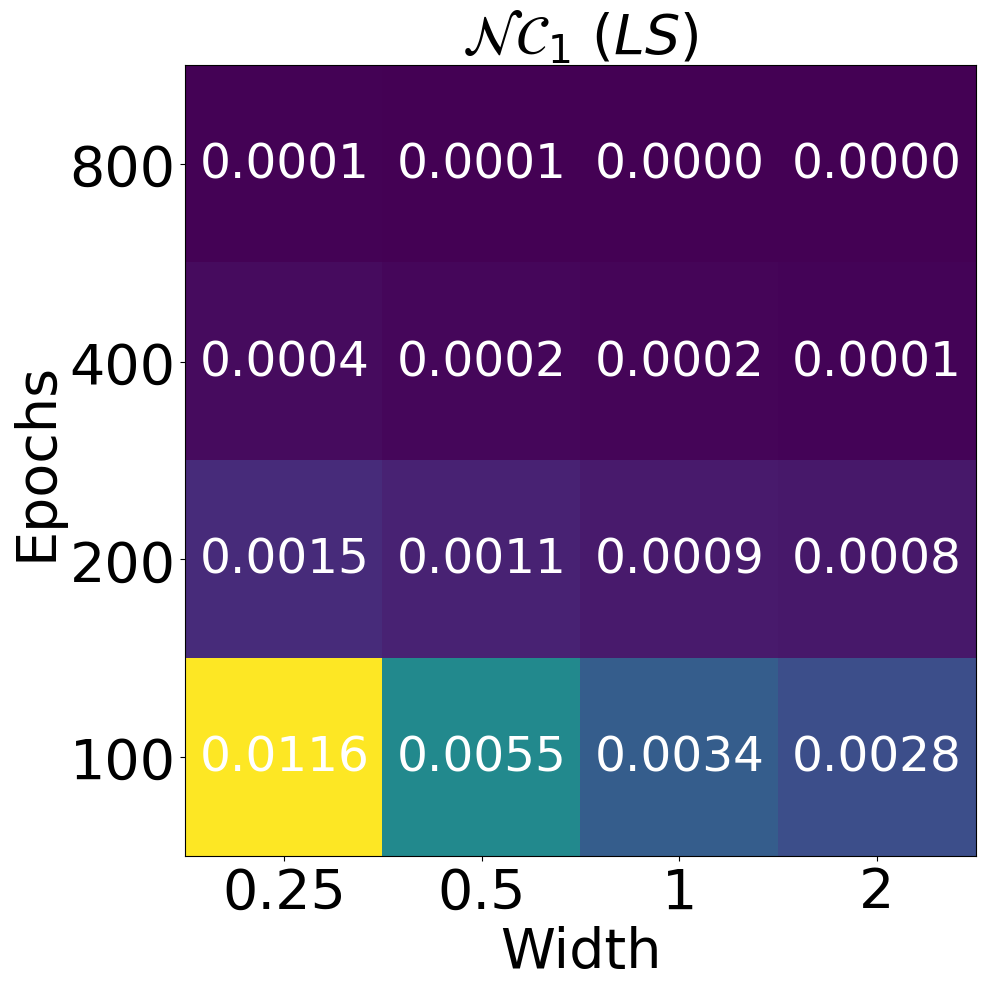}} \
    \subfloat[${\mc NC}_1$ (LS)]{\includegraphics[width=0.23\textwidth]{figures/res50-cifar10-loss-diff/nc1-ls.png}}\\
    
    \subfloat[Test (CE)]{\includegraphics[width=0.23\textwidth]{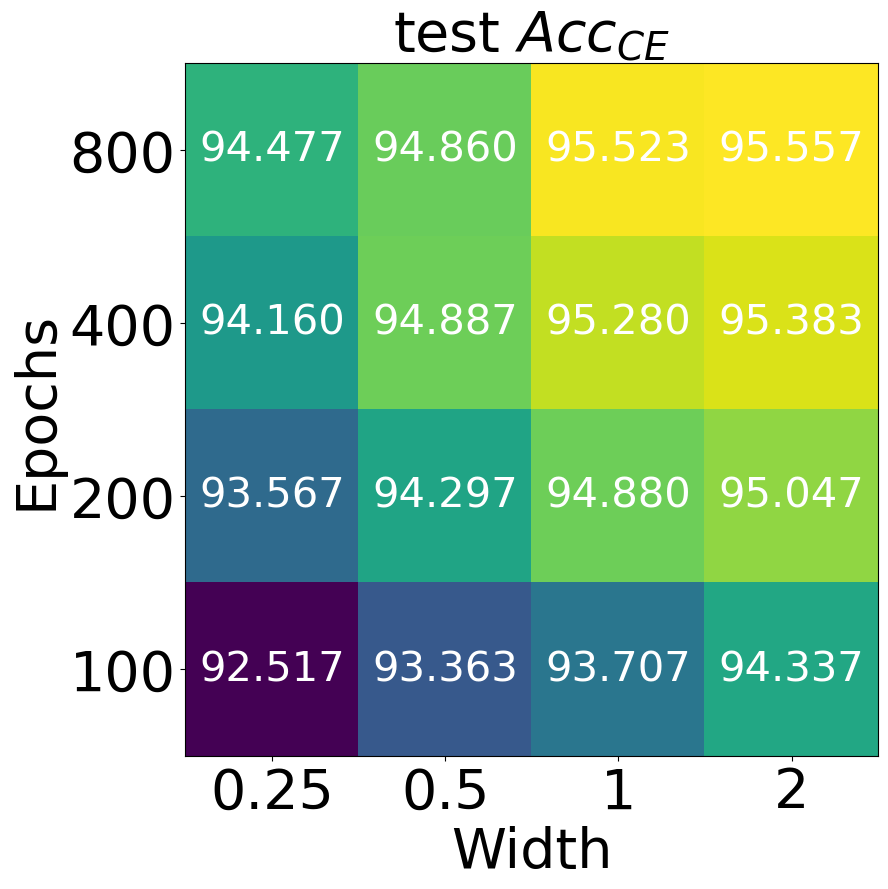}} \
    \subfloat[Test (MSE)]{\includegraphics[width=0.23\textwidth]{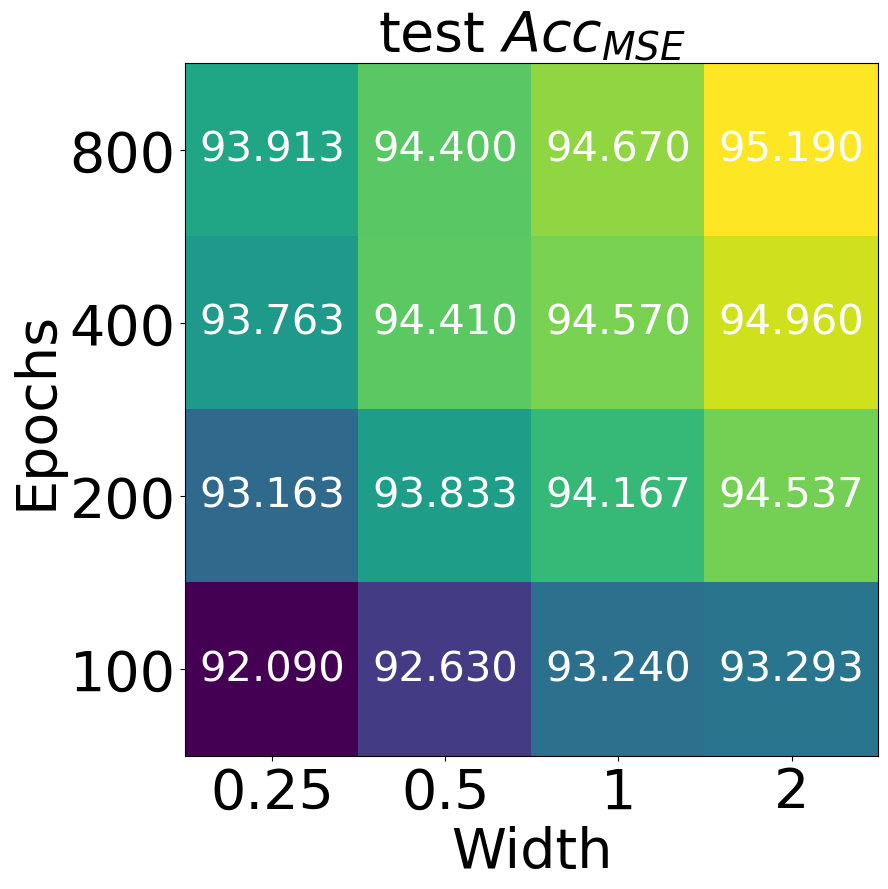}} \
    \subfloat[Test (FL)]{\includegraphics[width=0.23\textwidth]{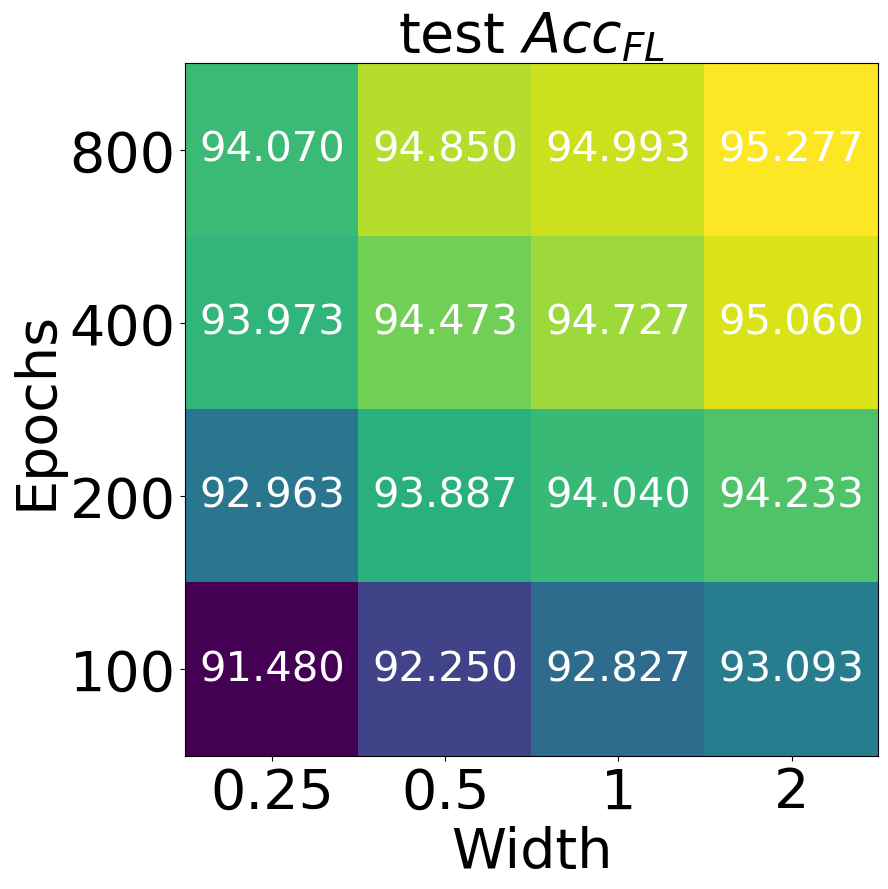}} \
    \subfloat[Test (LS)]{\includegraphics[width=0.23\textwidth]{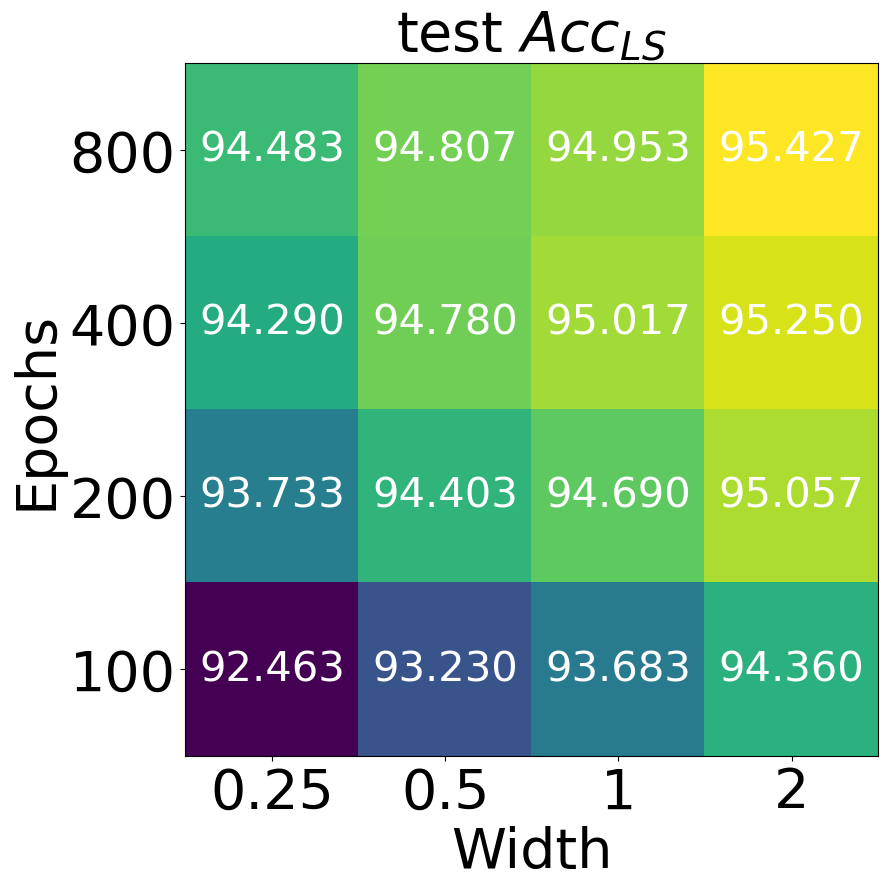}}
    \caption{\textbf{Illustration of \revise{${\mc NC}_1$ and} test accuracy across different iterations-width configurations.} The figure depicts the \revise{${\mc NC}_1$} and test accuracy of various iteration-width configurations for different loss functions on CIFAR10. 
    }
    \label{fig:lossmap-nc1-acc-cifar10}
\end{figure}

\subsection{Additional experimental results on CIFAR100}\label{subsec:cifar100-experiments}

In this parts, we show the additional results on CIFAR100 dataset. 

\paragraph{Prevalence of \NC\ Across Varying Training Losses}
We show that all loss functions lead to \NC\ solutions during the terminal phase of training on CIFAR100 dataset. The results on CIFAR100 using WideResNet50-2 and different loss functions is provided in Figure \ref{fig:results-nc-epochs-cifar100}.
We consistently observe that all three \NC\ metrics of FL and MSE converge to a small value as training progresses, and metrics of CE and FL still continue to decrease at the last iteration, because CIFAR100 is more difficult than CIFAR10 and requires networks to be optimized longer. The decreasing speed of FL is slowest, which is consistent with our global landscape analysis that FL has benign landscape in the local region near optimality. These results imply that all losses exhibit \NC\ at the end, regardless of the choice of loss functions. 
\begin{figure}[t]
    \centering
    \subfloat[${\mc NC}_1$ (CIFAR100)]{\includegraphics[width=0.30\textwidth]{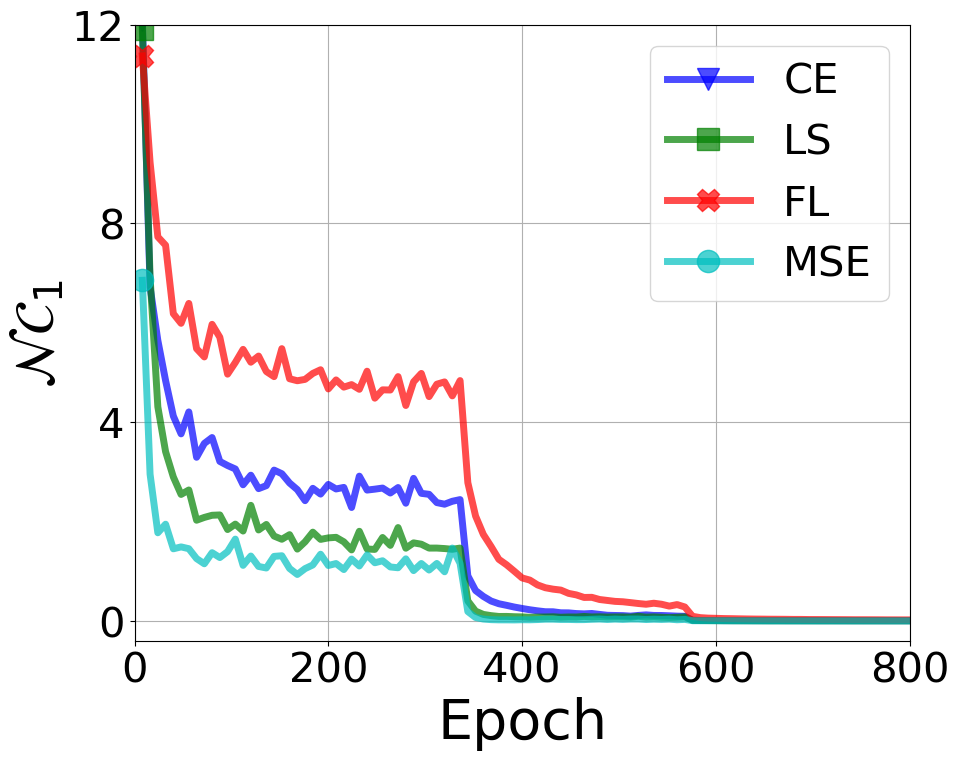}} \
    \subfloat[${\mc NC}_2$ (CIFAR100)]{\includegraphics[width=0.30\textwidth]{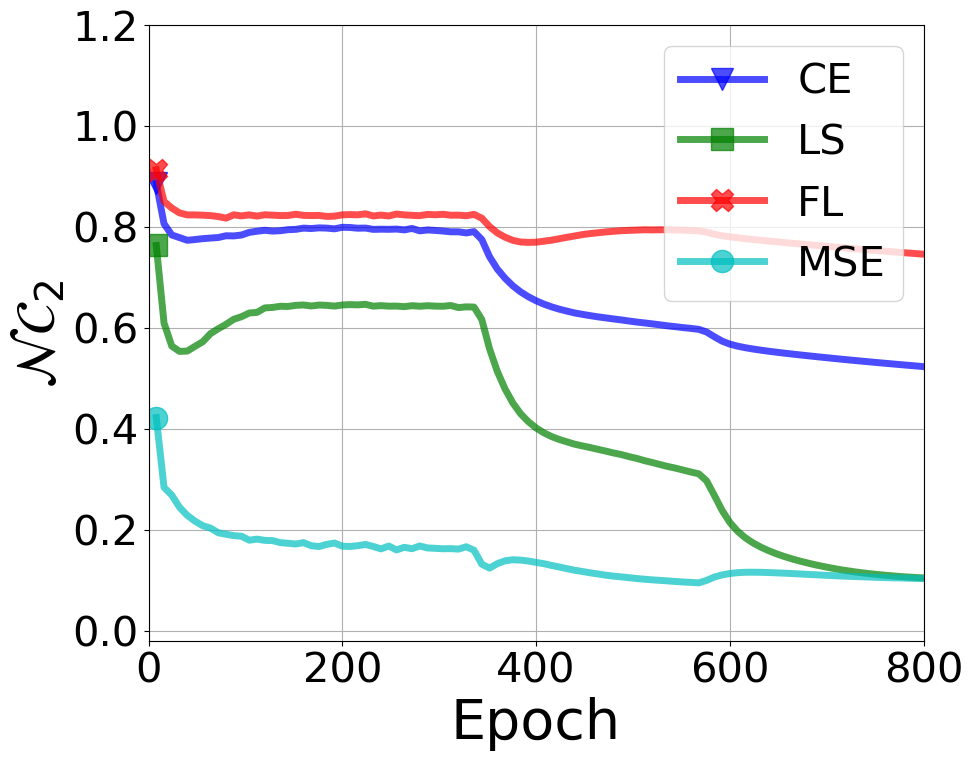}} \
    \subfloat[${\mc NC}_3$ (CIFAR100)]{\includegraphics[width=0.30\textwidth]{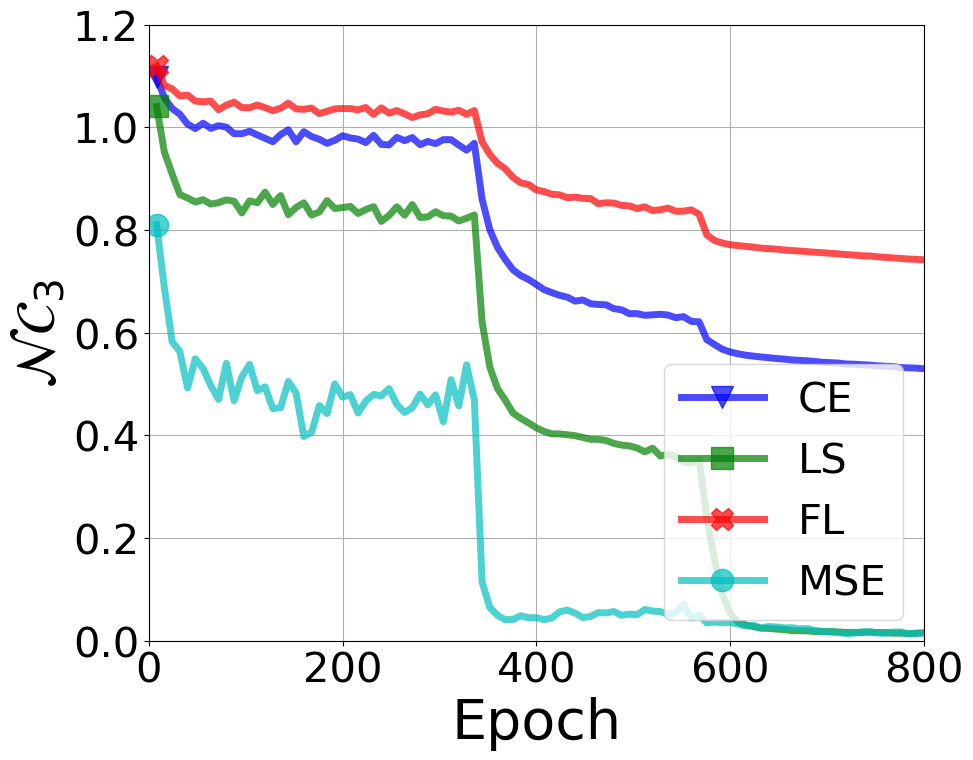}}
    \caption{\textbf{The evolution of \NC\ metrics across different loss functions.} We train the WideResNet50-2 on CIFAR100 dataset for 800 epochs using different loss function. From left to right: $NC_1$ (variability collapse), $NC_2$ (convergence to simplex ETF) and $NC_3$ (convergence to self-duality).
    }
    \label{fig:results-nc-epochs-cifar100}
\end{figure}

\paragraph{All Losses Lead to Largely Identical Performance}
Same as the results on CIFAR10 dataset, the conclusion on CIFAR100 also holds that all loss functions have largely identical performance once the training procedure converges to the \NC\ global optimality. In Figure \ref{fig:results-acc-epochs-cifar100}, we plot the evolution of the training accuracy, validation accuracy and test accuracy with training progressing, where all losses are optimized on the same WideResNet50-2 architecture and CIFAR100 for 800 epochs. 
To reduce the randomness, we average the results from 3 different random seeds per iteration-width configuration, and the test accuracy is reported based on the model with best accuracy on validation set, where we organize the validation set by holding out 10 percent data from the training set. 
The results consistently shows that the training accuracy trained by different losses all converge to one hundred percent (reaching to terminal phase), and the validation accuracy and test accuracy across different losses are largely same, as long as the optimization procedure converges to the \NC\ global solution. In Figure \ref{fig:lossmap-cifar100}, we plot the average \revise{${\mc NC}_1$ and} test accuracy of different losses under different pairs of width and iterations for CIFAR100 dataset. The three phenomenon mentioned in \Cref{exp:same-performance-across-losses} also exist on CIFAR100 in most cases. \revise{Moreover, the values of ${\mc NC}_1$ for $\text{width=0.25} $ and $\text{epochs=100}$ configuration are also around three orders magnitude larger than them for $\text{width=2} $ and $\text{epochs=800}$ configuration and the less collapsed configuration leads to larger difference gap across different loss functions.} While there are some small difference between different losses in $\text{width}=2$ and $\text{epochs}=800$ configurations, We guess that it is because CIFAR100 is much harder than CIFAR10 datasets, and network is not sufficiently large and trained not long enough for all losses to achieve a global solution. 

\begin{figure}[t]
    \centering
    \subfloat[Train Acc (CIFAR100)]{\includegraphics[width=0.30\textwidth]{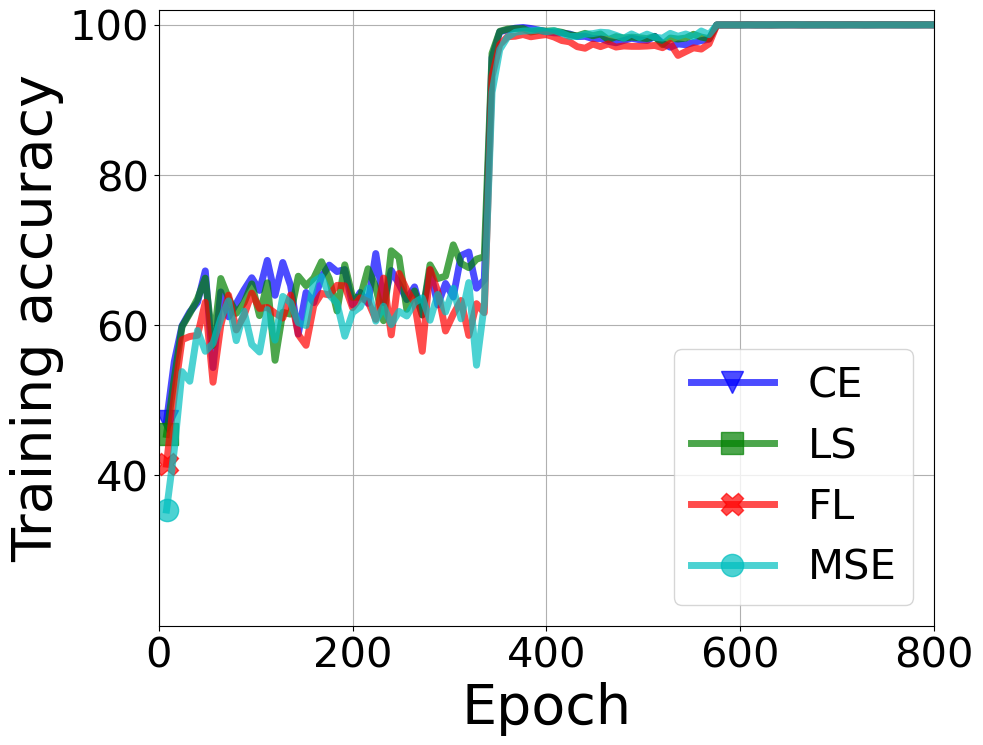}} \
    \subfloat[Val Acc (CIFAR100)]{\includegraphics[width=0.30\textwidth]{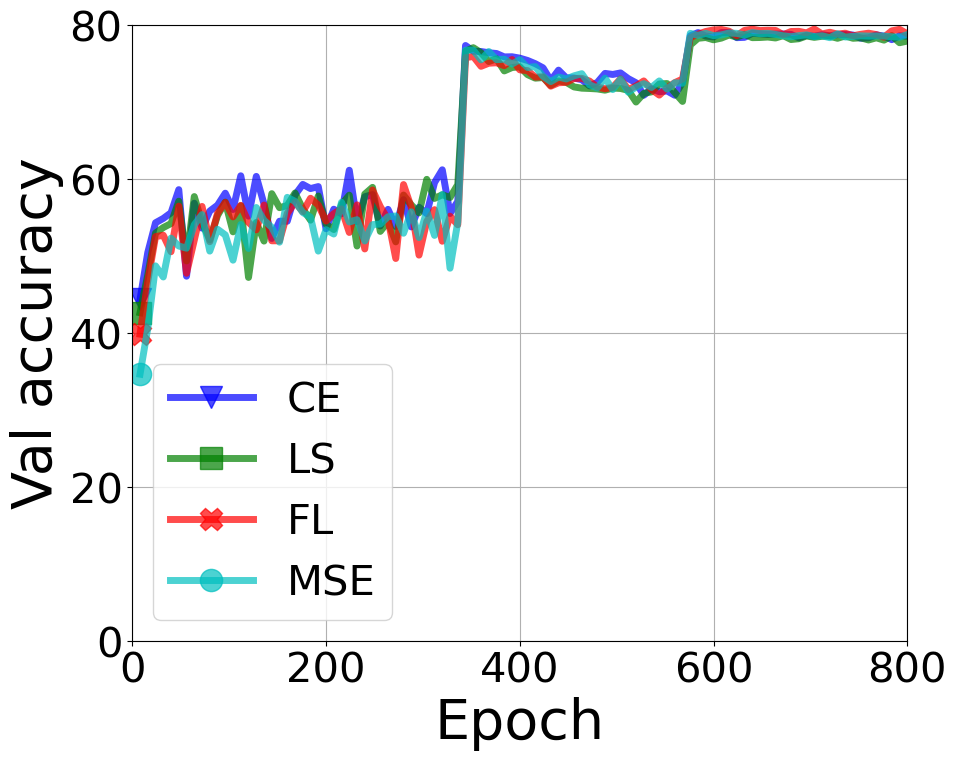}} \
    \subfloat[Test Acc (CIFAR100)]{\includegraphics[width=0.30\textwidth]{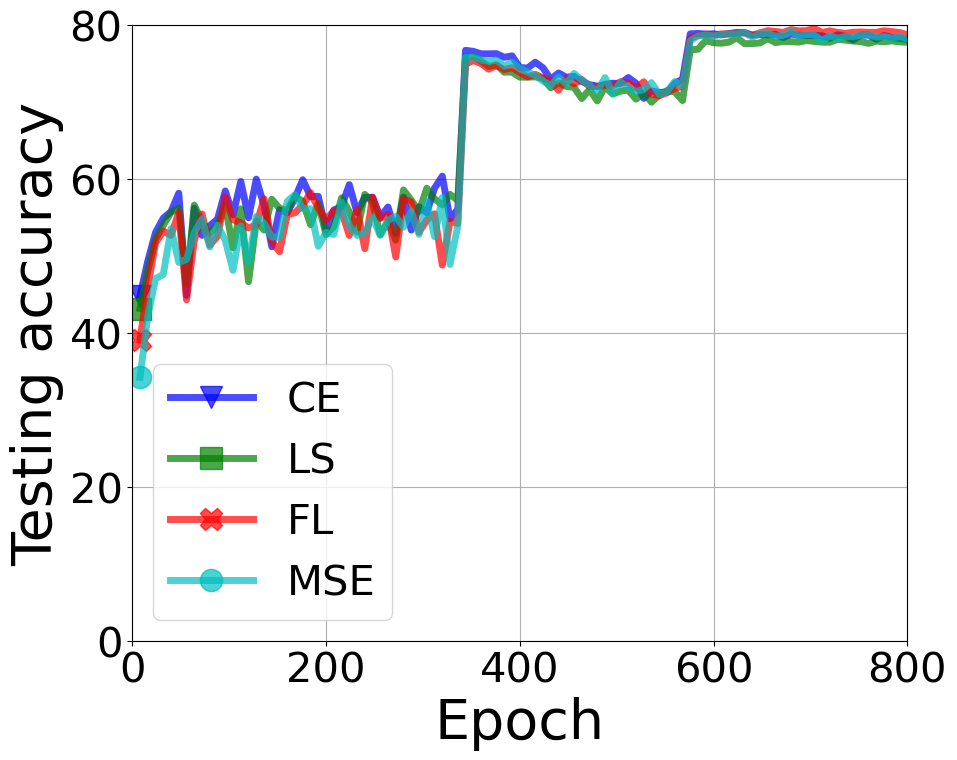}}
    \caption{\textbf{The evolution of performance across different loss functions.} We train the WideResNet50-2 on CIFAR100 dataset for 800 epochs using different loss function. From left to right: training accuracy, validation accuracy and test accuracy.
    }
    \label{fig:results-acc-epochs-cifar100}
\end{figure}

\begin{figure}[t]
    \centering
    \subfloat[${\mc NC}_1$ (CE)]{\includegraphics[width=0.23\textwidth]{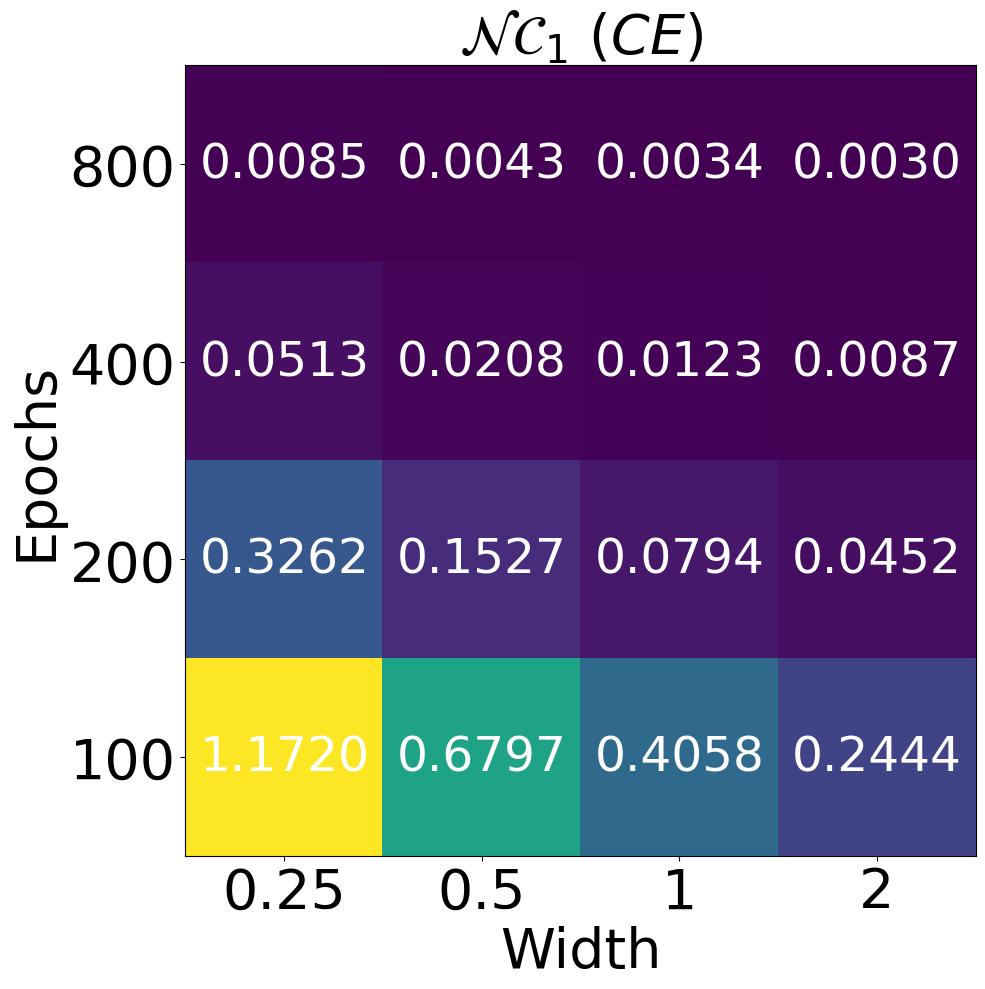}} \
    \subfloat[${\mc NC}_1$ (MSE)]{\includegraphics[width=0.23\textwidth]{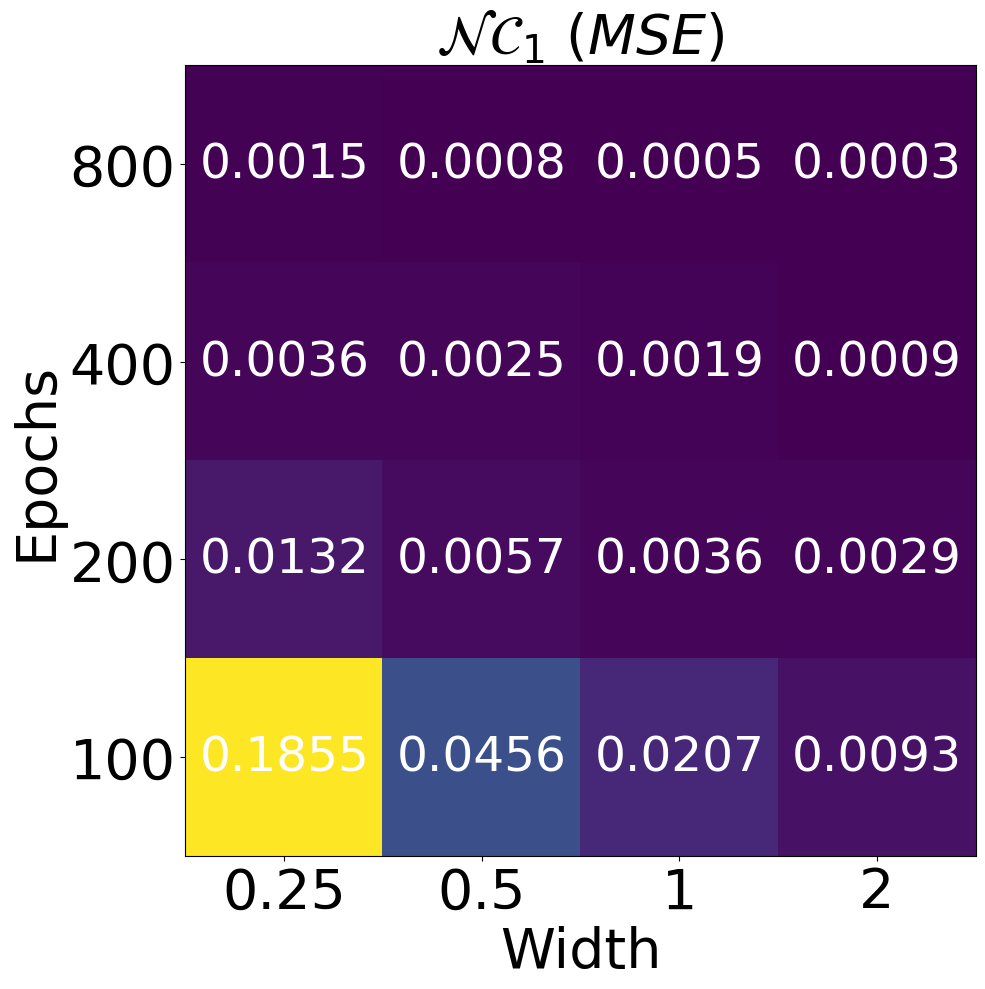}} \
    \subfloat[${\mc NC}_1$ (FL)]{\includegraphics[width=0.23\textwidth]{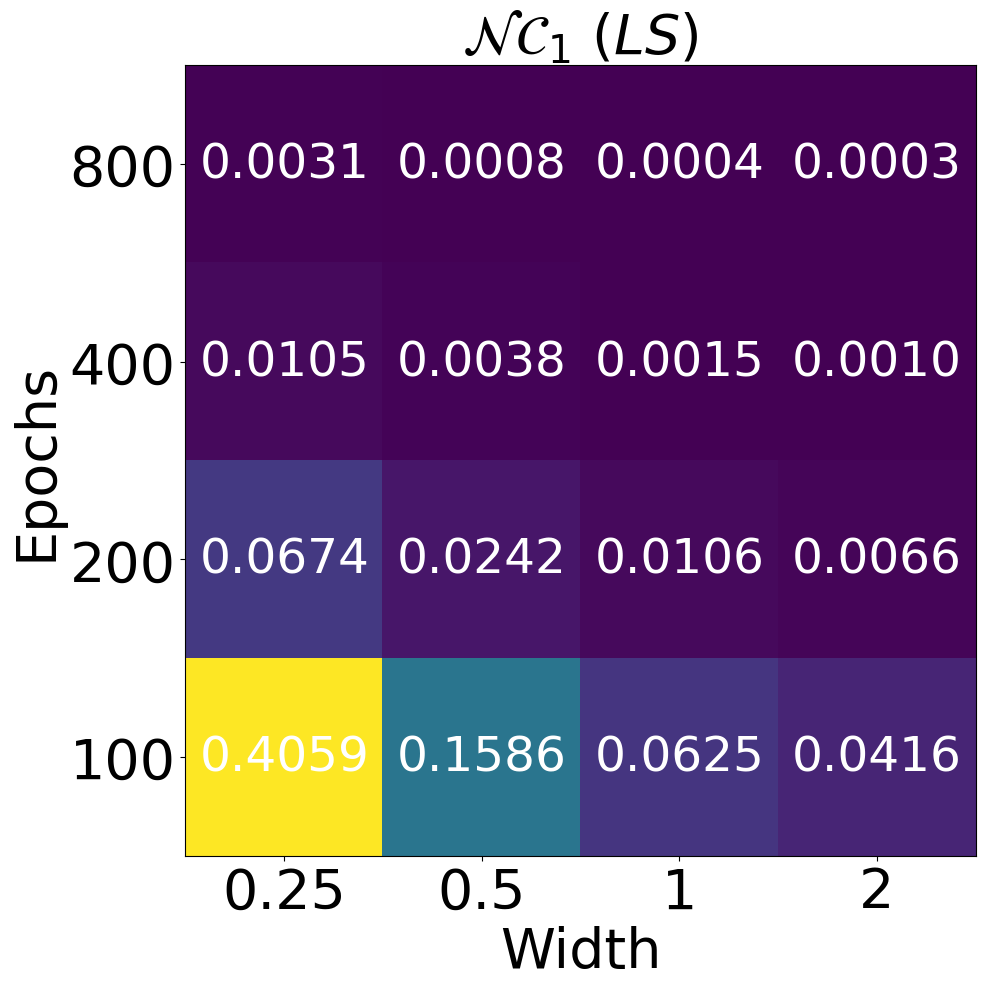}} \
    \subfloat[${\mc NC}_1$ (LS)]{\includegraphics[width=0.23\textwidth]{figures/res50-cifar100-loss-diff/nc1-ls.png}}\\
    
    \subfloat[Test (CE)]{\includegraphics[width=0.23\textwidth]{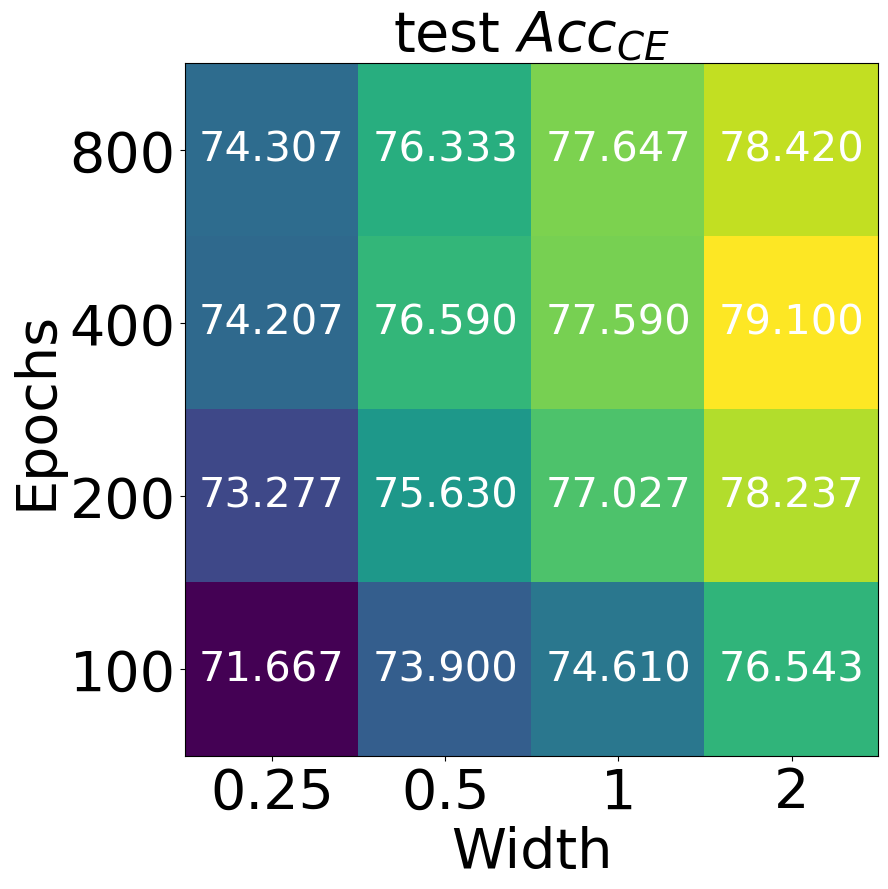}} \
    \subfloat[Test (MSE)]{\includegraphics[width=0.23\textwidth]{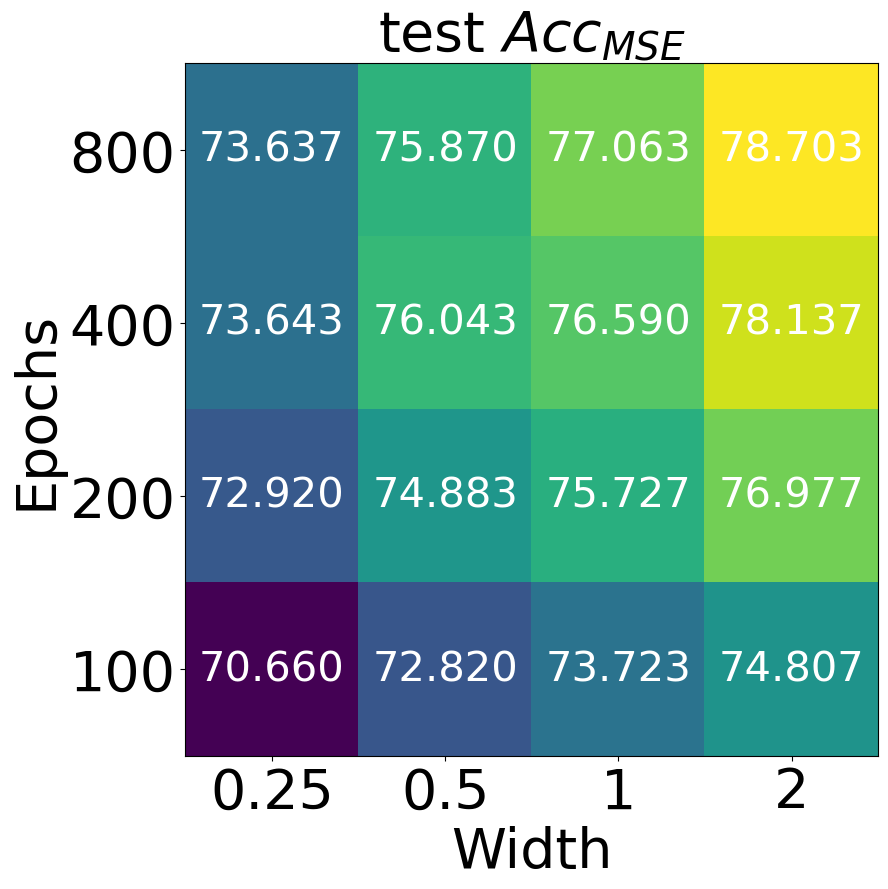}} \
    \subfloat[Test (FL)]{\includegraphics[width=0.23\textwidth]{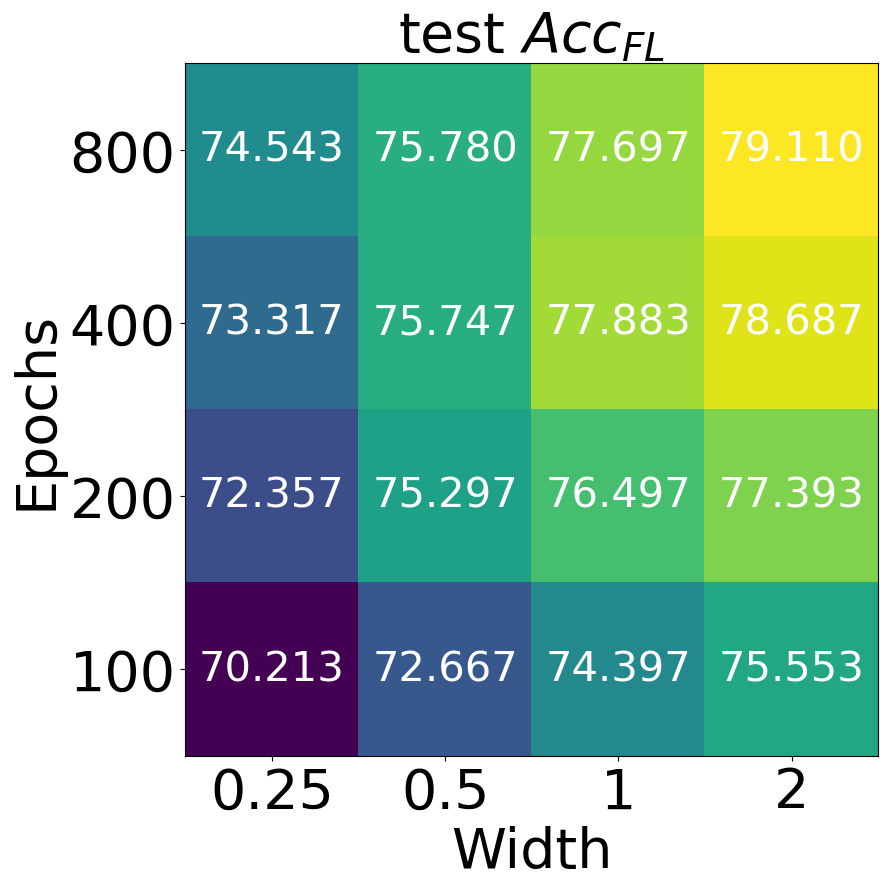}} \
    \subfloat[Test (LS)]{\includegraphics[width=0.23\textwidth]{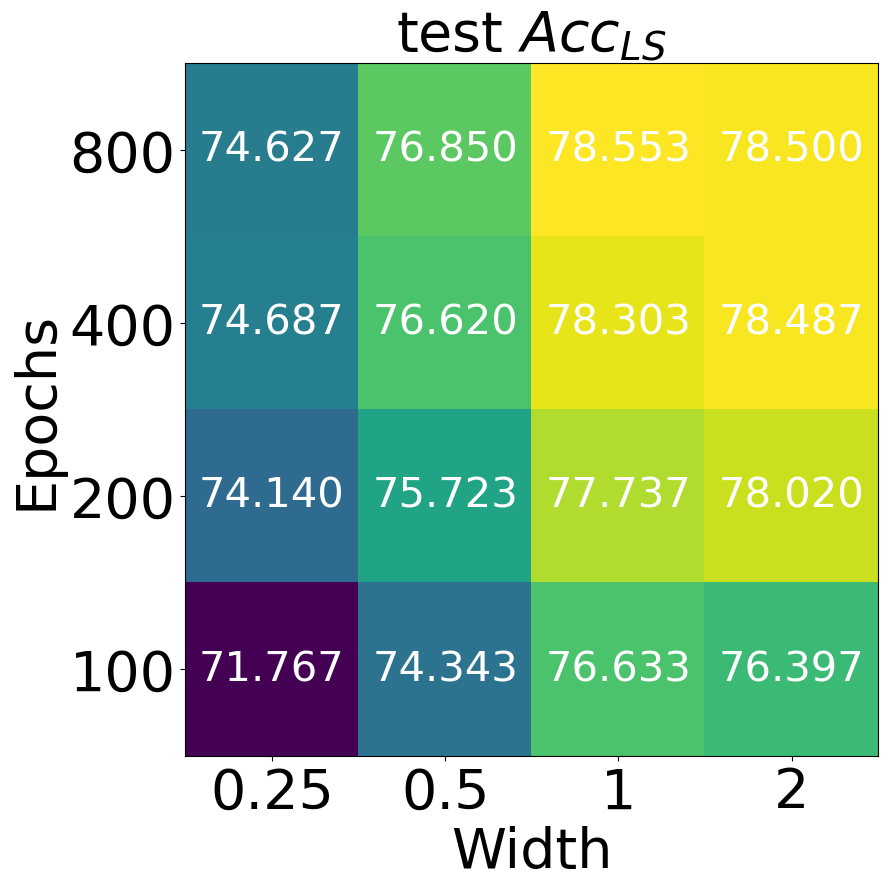}}
    \caption{\textbf{Illustration of \revise{${\mc NC}_1$ and} test accuracy across different iterations-width configurations.} The figure depicts the \revise{${\mc NC}_1$} and test accuracy of various iteration-width configurations for different loss functions on CIFAR100. 
    }
    \label{fig:lossmap-cifar100}
\end{figure}

\subsection{\revise{Additional experimental results on miniImageNet}}\label{subsec:mini-experiments}
\revise{In this parts, we show the additional results on miniImageNet dataset. We trained WideResNet18-0.25 and WideResNet18-2 on miniImageNet for 100 epochs and 800 epochs, respectively. To reduce the randomness, we average the results from 3 different random trials. The ${\mc NC}_1$ and test accuracy of different loss functions are provided in Figure \ref{fig:results-acc-epochs-mini} for comparison. We consistently observe that the ${\mc NC}_1$ metric of all losses converges to a small value as training progress, when the neural network has sufficient approximation power and the training is performed for sufficiently many iterations, such as WideResNet18-2 for 800 epochs. Additionally, the conclusion on miniImageNet also holds that all loss functions have largely identical performance once the training procedure converges to the \NC\ global optimality. Specifically, while the last-iteration test accuracy of training WideResNet18-0.25 for 100 epochs is $0.7195$, $0.6915$, $0.7020$ and $0.7040$, respectively, the last-iteration test accuracy of training WideResNet18-2 for 800 epochs is $0.7930$, $0.7962$, $0.7932$ and $0.8020$ for CE, MSE, FL and LS, respectively. The experiment results on miniImageNet also support our claim that $(i)$ the test performance may be different across different loss functions when the network is not large enough and is optimized with limited number of iterations, but $(ii)$ the test accuracy across different loss are largely identical, once the networks has sufficient capacity and the training is optimized to converge to the \NC\ global solution.}  

\begin{figure}[t]
    \centering
    \subfloat[\revise{${\mc NC}_1$ and Test (CE)}]{\includegraphics[width=0.24\textwidth]{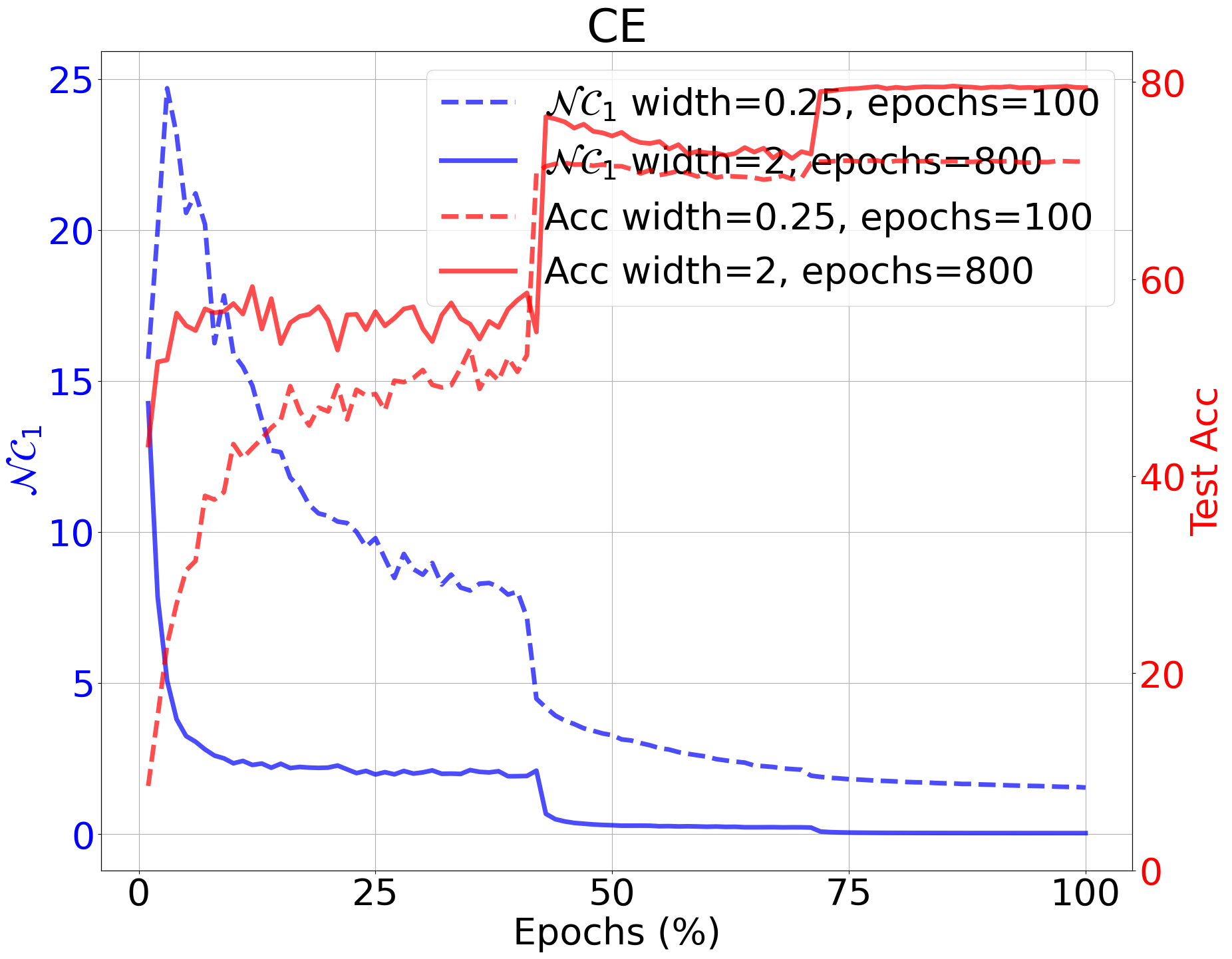}} \
    \subfloat[\revise{${\mc NC}_1$ and Test (MSE)}]{\includegraphics[width=0.24\textwidth]{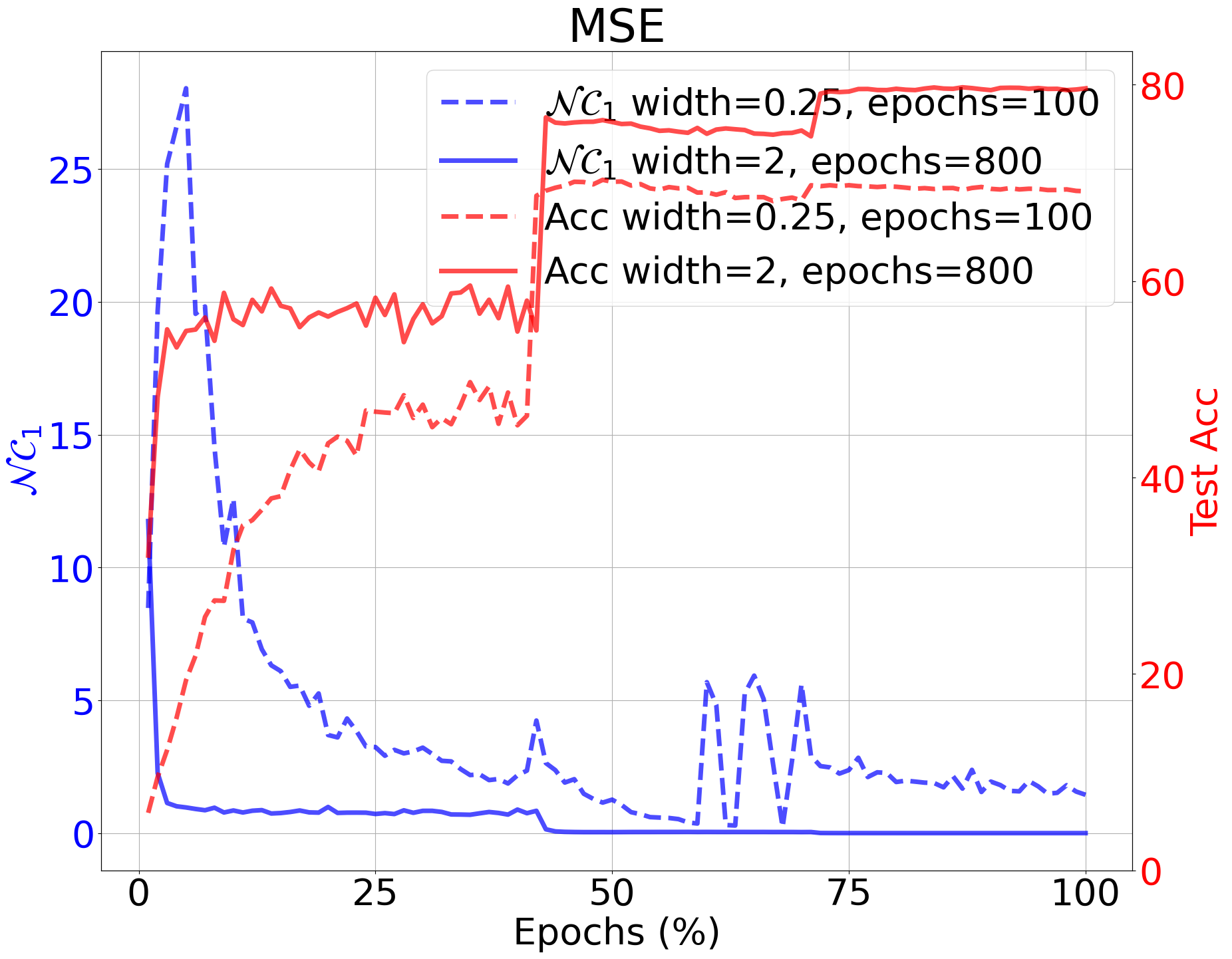}} \
    \subfloat[\revise{${\mc NC}_1$ and Test (FL)}]{\includegraphics[width=0.24\textwidth]{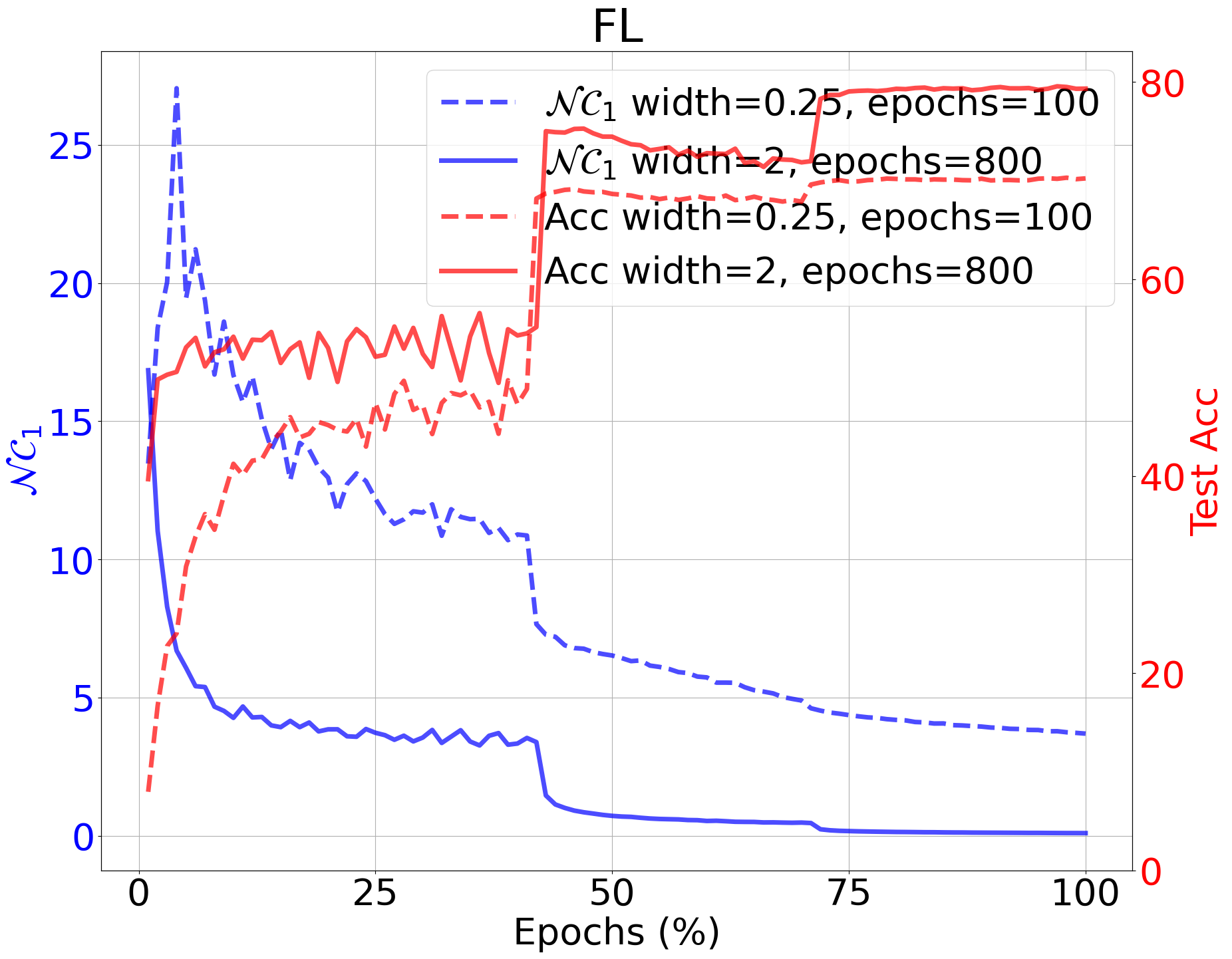}} \
    \subfloat[\revise{${\mc NC}_1$ and Test (LS)}]{\includegraphics[width=0.24\textwidth]{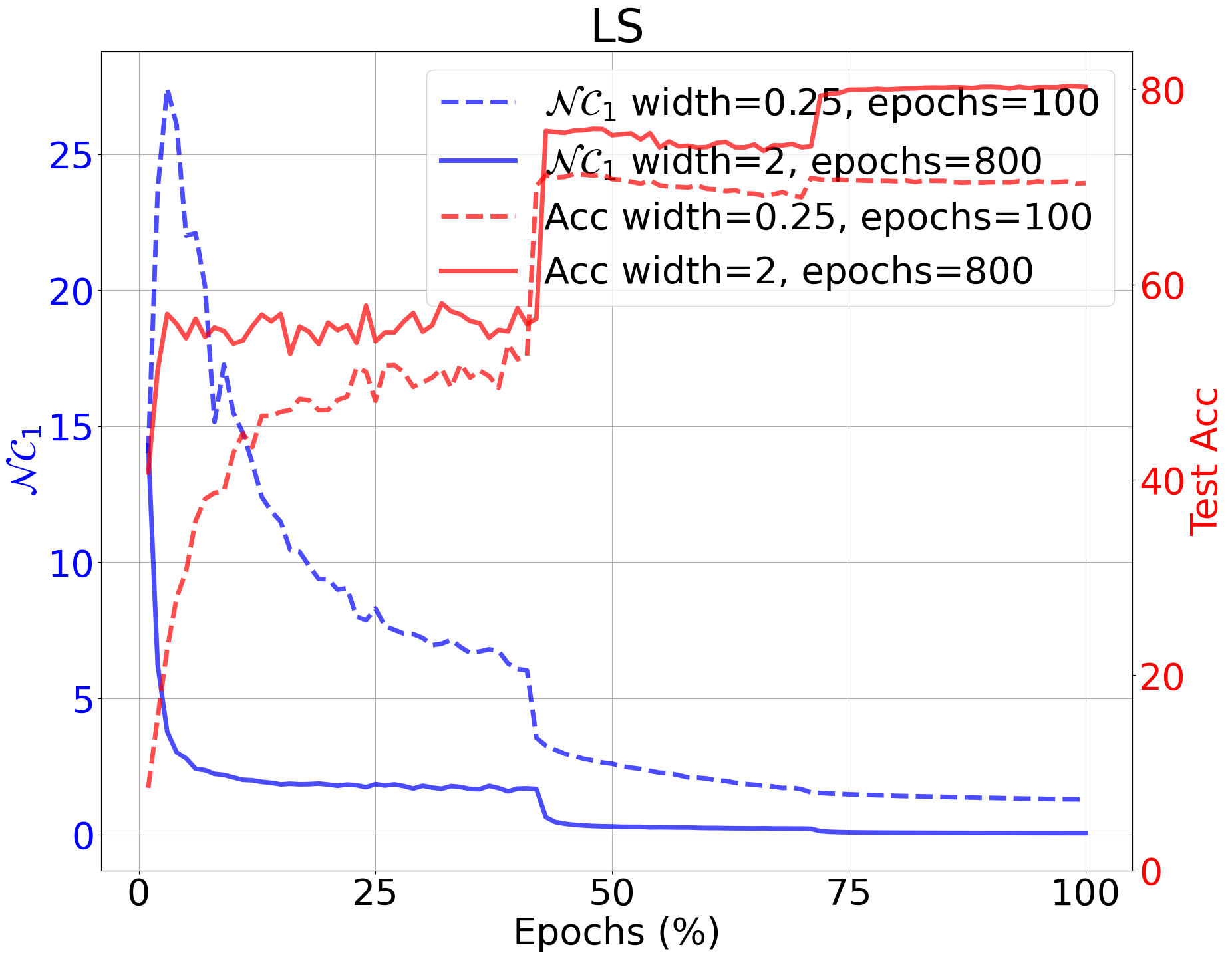}}
    \caption{\textbf{The evolution of \revise{${\mc NC}_1$ and test accuracy} across different loss functions.} We train the \revise{WideResNet18-0.25 for 100 epochs and WideResNet18-2 for 800 epochs on miniImageNet} using different loss functions. 
    }
    \label{fig:results-acc-epochs-mini}
\end{figure}

\section{Proof of CE, FL and LS included in GL}\label{app:loss-in-GL}
In this section, we prove that CE, FL and LS belong to GL in \Cref{app:check-ce-in-gl}, \Cref{app:check-fl-in-gl} and \Cref{app:check-ls-in-gl}, respectively. Before starting the proof for each loss, let us restate the definition of the GL in \Cref{def:GLoss}:

\begin{definition}[Contrastive property] We say a loss function $\Lgl (\vz,\vy_k)$ satisfies the contrastive property if 
there exists a function $\phi$ such that $\Lgl (\vz,\vy_k)$ can be lower bounded by
\begin{align}\label{gl-property-1-app}
    &\Lgl (\vz,\vy_k) \geq \phi\paren{\sum_{j\neq k}(z_j-z_k)}
    \end{align}
where the equality holds only when $z_j=z_j'$ for all $j,j' \neq k$. Moreover,  $\phi(t)$ satisfies
\begin{align}
    t^*&= \arg\min_{t}\phi\paren{t}+c|t| \text{ is unique for any } c>0, \text{and } t^*\leq 0.\label{gl-property-3-app}
\end{align}
\label{def:GLoss-app}
\end{definition}

\subsection{CE is in GL}\label{app:check-ce-in-gl}
In this section, we will show that the CE defined in \eqref{def:ce} belongs to the GL defined in \Cref{def:GLoss-app}. First, let us rewrite the CE definition in GL form as following:
\begin{align*}
    \Lce (\vz,\vy_k) \;&=\;- \log\paren{\frac{\exp(z_{k})}{\sum_{j=1}^K\exp(z_{j})}}
    \;=\; \log\paren{1+\sum_{j\neq k}^K\exp(z_{j}-z_{k})}\\
    \;&\geq\;\log\paren{1+(K-1)\exp{\paren{\frac{z_{j}-z_{k}}{K-1}}}}=\phi_{\text{CE}}\paren{\sum_{j\neq k}(z_j-z_k)}.
\end{align*}
where the inequality is due to the $\log$ is an increasing and function and $\exp$ is a strictly convex function, and it achieves equality only when $z_{j}=z_{j'}$ for all $j,j' \neq k$. Therefore, there exists such a function $\phi_{\text{CE}}$ to lower bound original CE loss $\Lce (\vz,\vy_k)$ as following:
\begin{align*}
    \phi_{\text{CE}}(t) \;&=\;\log\paren{1+(K-1)\exp{\paren{\frac{t}{K-1}}}},
\end{align*}
which satisfies the condition of \eqref{gl-property-1-app}. Next, we will show $\phi_{\text{CE}}(t)$ satisfies the condition \eqref{gl-property-3-app}. The first-order gradient of $\phi_{\text{CE}}(t)$ is following:
\begin{align*}
    \nabla\phi_{\text{CE}}(t) &= \frac{\exp{\paren{\frac{t}{K-1}}}}{1+(K-1)\exp{\paren{\frac{t}{K-1}}}}
\end{align*}
which is an increasing function and greater than $0$ for $t\in\R$. Let denote $\psi_{\text{CE}}(t)=\phi_{\text{CE}}(t)+c|t|$, then 
\begin{itemize}
    \item{\textbf{When $t\geq0$:}} $\nabla\psi_{\text{CE}}(t)=\nabla\phi_{\text{CE}}(t)+c>0$, thus the $\psi_{\text{CE}}(t)$ is an increasing function w.r.t. $t$, and the minimizer is achieved when $t=0$.
    \item{\textbf{When $t\leq0$:}} $\nabla\psi_{\text{CE}}(t)=\nabla\phi_{\text{CE}}(t)-c$, and $\nabla \phi_{\text{CE}}(t)$ is an increasing function, which achieves minimizer when $t=0$ such that $\nabla\phi_{\text{CE}}(t)=\frac{1}{K}$.
    \begin{itemize}
        \item{if $c\geq\frac{1}{K}$}, $\nabla\psi_{\text{CE}}(t)<0$, and $\psi(t)$ is a decreasing function for $t\leq0$, and the minimizer is achieved when $t=0$;
        \item{if $0<c\leq\frac{1}{K}$}, there exist such $t^*$ such that $\nabla\psi_{\text{CE}}(t)=0$. When $t<t^*$, $\phi_{\text{CE}}(t)$ is a decreasing function; and when $t^* < t\leq0$, $\phi_{\text{CE}}(t)$ is an increasing function. Therefore, the minimizer is achieved when $t=t^*<0$
    \end{itemize}
\end{itemize}
Combing them together, we can prove that $\phi_{\text{CE}}$ satisfies the condition of \eqref{gl-property-3-app}.

\subsection{FL is in GL}\label{app:check-fl-in-gl}
In this section, we will show that the FL defined in \eqref{def:fl} belongs to the GL defined in \Cref{def:GLoss-app}. let us rewrite the FL definition in GL form as following:
\begin{align*}
    \Lfl (\vz,\vy_k) \;&=\;-\paren{1-\frac{\exp(z_{k})}{\sum_{j=1}^K\exp(z_{j})}}^\gamma \log\paren{\frac{\exp(z_{k})}{\sum_{j=1}^K\exp(z_{j})}}\\
    \;&=\; \paren{1-\frac{\exp(z_{k})}{\sum_{j=1}^K\exp(z_{j})}}^\gamma \log\paren{\sum_{j=1}^K\exp(z_{j}-z_{k})}\\
    \;&=\;\paren{1-\frac{1}{1+\sum_{j\neq k}^K\exp(z_{j}-z_{k})}}^\gamma \log\paren{1+\sum_{j\neq k}^K\exp(z_{j}-z_{k})}\\
    \;&=\; \eta\paren{1+\sum_{j\neq k}^K\exp(z_{j}-z_{k})}
\end{align*}
where the function $\eta(t)=(1-\frac{1}{t})^\gamma\log\paren{t}$ is an increasing function for $t\geq 1$ because
\begin{align*}
    \nabla \eta(t) &= \gamma(\frac{1}{t^2})(1-\frac{1}{t})^{\gamma-1}\log(t)+\frac{1}{t}(1-\frac{1}{t})^{\gamma}>0
\end{align*}
Thus, we can find the lower bound function by
\begin{align*}
    \Lfl (\vz,\vy_k) \;&\geq\; \eta\paren{1+(K-1)\exp\paren{\sum_{j\neq k}^K\frac{z_{j}-z_{k}}{K-1}}}\\
    \;&=\; \eta\paren{\xi\paren{\sum_{j\neq k}^K(z_{j}-z_{k})}}\\
    \;&=\;\phi_{\text{FL}}\paren{\sum_{j\neq k}^K(z_{j}-z_{k})}
\end{align*}
where $\phi_{\text{FL}}(t)=\eta\paren{\xi\paren{t}}$ and $\xi{\paren{t}}=1+(K-1)\exp{\frac{t}{K-1}}\in [1,K]$, which satisfies the condition of \eqref{gl-property-1-app}. Next, we will show $\phi_{\text{FL}}(t)$ satisfies the condition \eqref{gl-property-3-app}. The first-order gradient of $\phi_{\text{FL}}(t)$ is following:
\begin{align*}
    &\nabla_t \psi_{\text{FL}}(t) \;=\;\nabla_t \paren{\phi_{\text{FL}}(t)+c|t|} \;=\;\nabla_{\xi\paren{t}}\eta\paren{\xi\paren{t}}\nabla_t{\xi\paren{t}}+c\frac{t}{|t|}\\
    \;=\;&\paren{\gamma\paren{\frac{1}{\xi\paren{t}}}^2\paren{1-\frac{1}{\xi\paren{t}}}^{\gamma-1}\log\paren{\xi\paren{t}}+\frac{1}{\xi\paren{t}}\paren{1-\frac{1}{\xi\paren{t}}}^{\gamma}}\paren{\exp{\paren{\frac{t}{K-1}}}}+c\frac{t}{|t|}\\
    \;=\;&\paren{\gamma\paren{\frac{1}{\xi\paren{t}}}^2\paren{1-\frac{1}{\xi\paren{t}}}^{\gamma-1}\log\paren{\xi\paren{t}}+\frac{1}{\xi\paren{t}}\paren{1-\frac{1}{\xi\paren{t}}}^{\gamma}}\paren{\frac{\xi\paren{t}-1}{K-1}}+c\frac{t}{|t|}\\
    \;=\;&\frac{1}{K-1}\underbrace{\frac{\paren{\xi\paren{t}-1}^\gamma}{{\xi\paren{t}}^{\gamma+1}}\paren{\xi\paren{t}-1+\gamma\log\paren{\xi\paren{t}}}}_{\varsigma(\xi(t))\geq 0}+c\frac{t}{|t|}
\end{align*}
Similarly, by chain rule, the second-order derivation is:
\begin{align*}
    &\nabla^2_t \psi(t) = \nabla^2_t \phi(t)=\nabla_{\xi(t)} \varsigma\paren{\xi(t)} \nabla_t (t)\\
    =& (\gamma+1)\frac{1}{\paren{\xi(t)}^2}(1-\frac{1}{\xi(t)})^\gamma\\&-\frac{\gamma}{\paren{\xi(t)}^2}(1-\frac{1}{\xi(t)})^\gamma\paren{\log(\xi(t))-\gamma\frac{\log\paren{\xi(t)}}{\xi(t)-1}-\gamma}\paren{\frac{1}{(K-1)^2}(\xi(t)-1)}\\
    =& \frac{1}{(K-1)^2}\frac{\gamma(\xi(t)-1)^{\gamma+1}}{\paren{\xi(t)}^{\gamma+2}}\paren{\underbrace{-\log(\xi(t))+\gamma\frac{\log(\xi(t))}{\xi(t)-1}+\gamma+\frac{\gamma+1}{\gamma}}_{\vartheta(\xi(t))}}
\end{align*}

\begin{itemize}
    \item {When $t\geq 0$: }$\nabla_t \psi_{\text{FL}}(t)=\frac{1}{K-1}\xi_(t)+c\geq 0$, thus the $\psi_{\text{CE}}(t)$ is an increasing function w.r.t. $t$, and the minimizer is achieved when $x=0$.
    \item{When $t\leq 0$: }$\nabla_t \psi_{\text{FL}}(t)=\frac{1}{K-1}\xi(t)-c\geq 0$. Moreover, we can find $\vartheta(\xi(t))$ is a decreasing function w.r.t. $\xi(t)$ and $\xi(t)$ is an increasing function w.r.t. $t$, therefore, $\vartheta(\xi(t))$ is a decreasing function w.r.t. $t$. 
    \begin{itemize}
        \item {If $\vartheta(\xi(0))=\vartheta(K)\geq 0$}, then $\nabla_{x}^2 \psi(x)>0$ for $x\leq 0$, which means that $\nabla_{x} \xi(t)$ is an increasing function. Because $\varsigma(\xi(-\infty))=\varsigma(1)=0$, here we need to consider two cases(Please refer to Figure \ref{fig:fl_case_1}):
        \begin{itemize}
            \item {if $ {\varsigma(\xi(0)} = \varsigma(K)\leq c(K-1)$}, then $\nabla_t \psi_{FL}(t)\geq 0$, that is, $\psi_{FL}(t)$ is a decreasing function. Therefore, the global minimizer is achieved when $x=0$ (the blue curve in Figure \ref{fig:fl_case_1}).
            \item {if $ {\varsigma(\xi(0)} = \varsigma(K)\geq c(K-1)$}, so $\psi_{FL
            }(x)$ will first decrease and then increase. Therefore the global minimizer is unique (the red curve in Figure \ref{fig:fl_case_1}).
        \end{itemize}
    \item{If $\vartheta(\xi(0))=\vartheta(K)< 0$}, then for $t\in[-\infty, t']$, $\nabla_t \psi_{FL} (x)$ is an increasing function w.r.t. $t$; for $t\in[t', 0)$, $\nabla_t \Phi_{FL} (t)$ is a decreasing function w.r.t. $t$. Here we need to consider three cases(please refer to Figure \ref{fig:fl_case_2}):
        \begin{itemize}
            \item {if ${\varsigma(\xi(t'))}\leq c(K-1)$}, then $\nabla_t \psi_{FL}(t)\leq 0$, that is, $\psi_{FL}(t)$ is a decreasing function. Therefore, the global minimizer is achieved when $x=0$ (the green curve in Figure \ref{fig:fl_case_2}).
    
            \item {if ${\varsigma(\xi(0))}={\varsigma(K)}\geq c(K-1)$}, so $\psi_{FL}(x)$ will first decrease and then increase. Therefore the global minimizer is unique (the red curve in Figure \ref{fig:fl_case_2}).
    
            \item {if ${\varsigma(\xi(t'))}\geq c(K-1)$ and ${\varsigma(\xi(0))}={\varsigma(K)}\leq c(K-1)$}, then $\nabla_t \psi_{FL}(t)=0$ has two solutions $t_1$ and $t_2$. For $t \in [-\infty,t_1]$, $\psi_{FL}(t)$ is an decreasing function w.r.t. $t$; for $t \in [t_1, t_2]$, $\Phi_{FL}(t)$ is an increasing function w.r.t. $t$; and for $t \in [t_2, 0)$, $\psi_{FL}(t)$ is a decreasing function w.r.t. $t$. The unique minimizer is achieved when either $t=0$ or $t=t_1$, as long as $\psi_{FL}(0)\neq \psi_{FL}(t_1)$. As for the minor case $\psi_{FL}(0)= \psi_{FL}(t_1)$, it requires carefully chosen penalized parameters, which can be omitted (the blue curve in Figure \ref{fig:fl_case_2}).
        \end{itemize}
    \end{itemize}
\end{itemize}

In conclusion, for focal loss, $\psi_{FL}(t)$ has a unique minimum in terms of $t\leq 0$, which satisfies the condition of \eqref{gl-property-3-app}.

\begin{figure}[t]
    \centering
    \subfloat[$\varsigma(\xi(t))$ w.r.t. $\xi(t)$]{\includegraphics[width=0.4\textwidth]{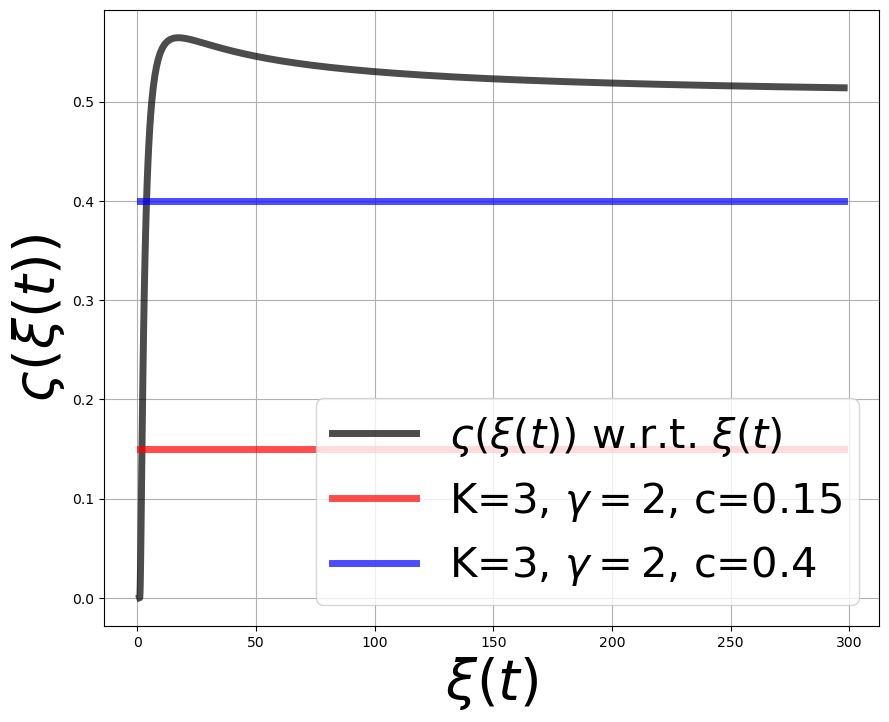}} \
    \subfloat[$\psi(t)$ w.r.t. $t$]{\includegraphics[width=0.4\textwidth]{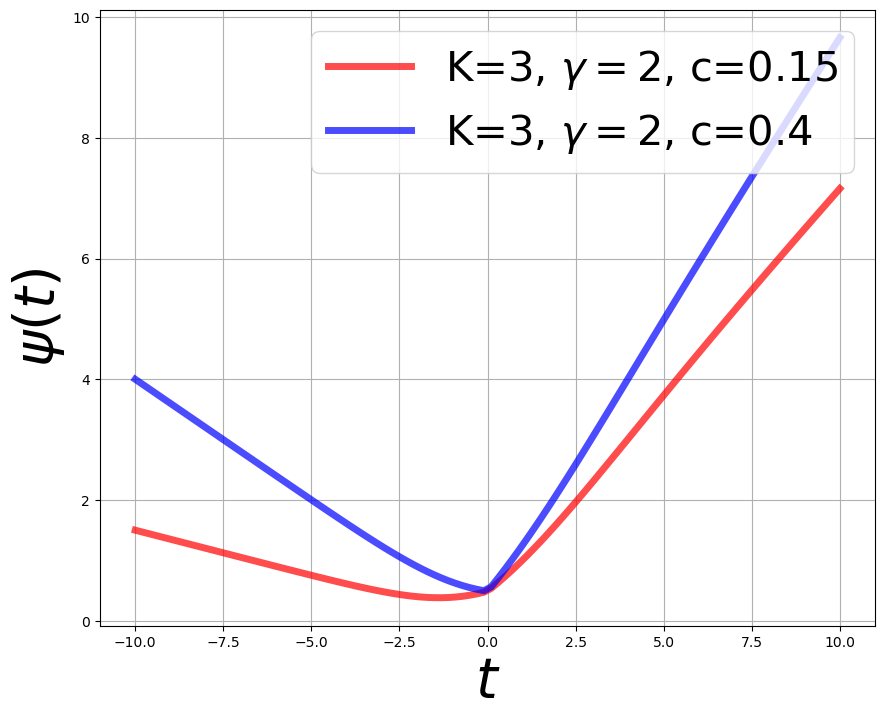}} \
    \caption{\textbf{Illustration of the case of $\vartheta(\xi(0))\geq 0$, where $c=-K\sqrt{n\lambda_{\mb W} \lambda_{\mb H}}$}.
    }
    \label{fig:fl_case_1}
\end{figure}

\begin{figure}[t]
    \centering
    \subfloat[$\varsigma(\xi(t))$ w.r.t. $\xi(t)$]{\includegraphics[width=0.4\textwidth]{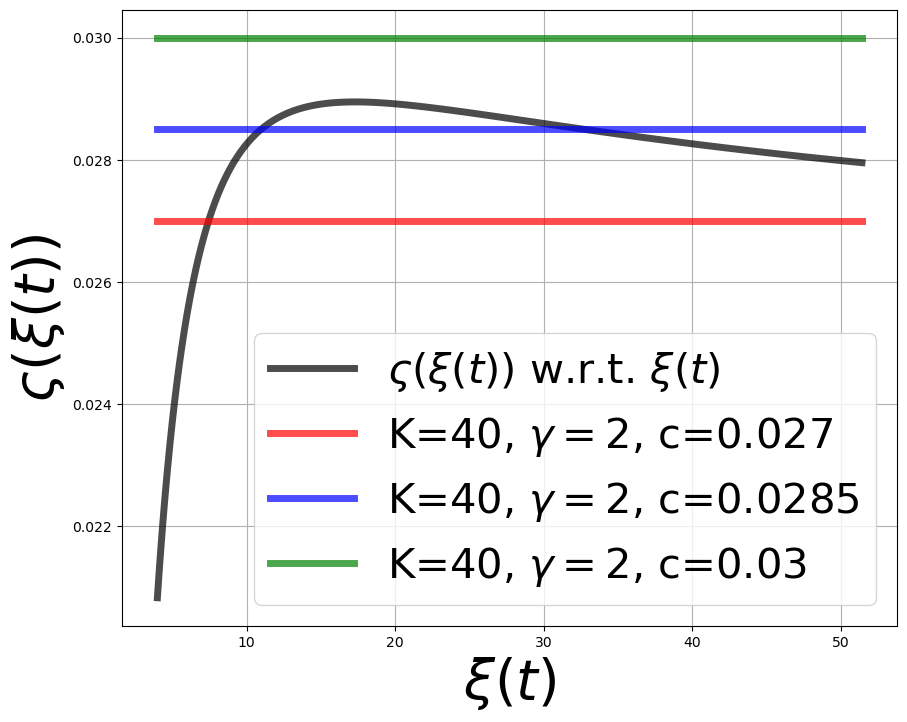}} \
    \subfloat[$\psi(t)$ w.r.t. $t$]{\includegraphics[width=0.4\textwidth]{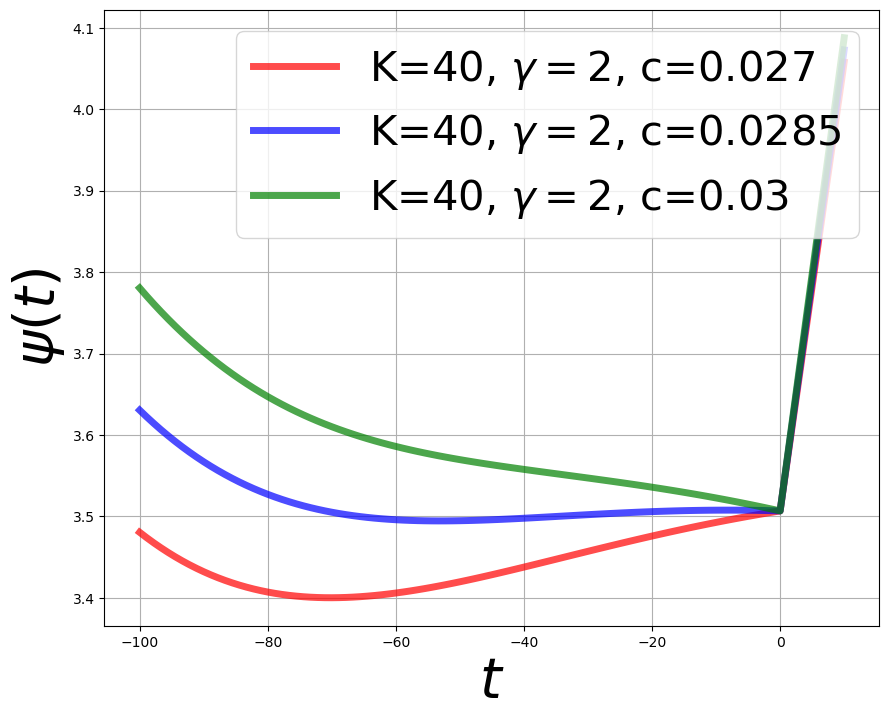}} \
    \caption{\textbf{Illustration of the case of $\vartheta(\xi(0))< 0$, where $c=K\sqrt{n\lambda_{\mb W} \lambda_{\mb H}}$}.
    }
    \label{fig:fl_case_2}
\end{figure}

\subsection{LS is in GL}\label{app:check-ls-in-gl}
    In this section, we will show that the LS defined in \eqref{def:ls} belongs to the GL defined in \Cref{def:GLoss-app}. First, let us rewrite the LS definition in GL form as following:
\begin{align*}
    \Lls (\vz,\vy_k) \;&=\;-\paren{1-\frac{(K-1)\alpha}{K}} \log\paren{\frac{\exp(z_{k})}{\sum_{j=1}^K\exp(z_{j})}}
    -\frac{\alpha}{K}\sum_{\ell\neq k}^K\log\paren{\frac{\exp(z_{\ell})}{\sum_{j=1}^K\exp(z_{j})}}\\
    \;&=\;\paren{1-\frac{(K-1)\alpha}{K}} \log\paren{\frac{\sum_{j=1}^K\exp(z_{j})}{\exp(z_{k})}}
    +\frac{\alpha}{K}\sum_{\ell\neq k}^K\log\paren{\frac{\sum_{j=1}^K\exp(z_{j})}{\exp(z_{\ell})}}\\
    \;&=\;\paren{1-\frac{(K-1)\alpha}{K}} \log\paren{\sum_{j=1}^K\exp(z_{j}-z_{k}))}
    +\frac{\alpha}{K}\sum_{\ell\neq k}^K\log\paren{\frac{\sum_{j=1}^K\exp(z_{j}-z_{k})}{\exp(z_{\ell}-z_{k})}}\\
    \;&=\;\log\paren{\sum_{j=1}^K\exp(z_{j}-z_{k}))}
    -\frac{\alpha}{K}\sum_{\ell\neq k}^K(z_{\ell}-z_{k})\\
    \;&\geq\;\log\paren{1+(K-1)\exp{\paren{\frac{z_{j}-z_{k}}{K-1}}}}-\frac{\alpha}{K}\sum_{\ell\neq k}^K(z_{\ell}-z_{k})
\end{align*}
where the inequality is due to the $\log$ is an increasing and function and $\exp$ is a strictly convex function, and it achieves equality only when $z_{j}=z_{j'}$ for all $j,j' \neq k$. Therefore, there exists such a function $\phi_{\text{LS}}$ to lower bound original LS loss $\Lls (\vz,\vy_k)$ as following:
\begin{align*}
    \phi_{\text{LS}}(t) \;&=\;\log\paren{1+(K-1)\exp{\paren{\frac{t}{K-1}}}}-\frac{\alpha}{K}t,
\end{align*}
which satisfies the condition of \eqref{gl-property-1-app}. Next, we will show $\phi_{\text{LS}}(t)$ satisfies the condition \eqref{gl-property-3-app}. The first-order gradient of $\phi_{\text{LS}}(t)$ is following:
\begin{align*}
    \nabla\phi_{\text{LS}}(t) &= \frac{\exp{\paren{\frac{t}{K-1}}}}{1+(K-1)\exp{\paren{\frac{t}{K-1}}}}-\frac{\alpha}{K}
\end{align*}
Let denote $\psi_{\text{LS}}(t)=\phi_{\text{LS}}(t)+c|t|$, then 
\begin{itemize}
    \item{\textbf{When $t\geq0$:}} $\nabla\psi_{\text{LS}}(t)=\nabla\phi_{\text{LS}}(t)+c>0$ due to $\nabla\phi_{\text{LS}}(t)\geq0 \quad \text{for} \quad t>0$, thus the $\psi_{\text{LS}}(t)$ is an increasing function w.r.t. $t$, and the minimizer is achieved when $x=0$.
    \item{\textbf{When $t\leq0$:}} $\nabla\psi_{\text{LS}}(t)=\nabla\phi_{\text{LS}}(t)-c$, and $\nabla \phi_{\text{LS}}(t)$ is an increasing function, which achieves minimizer when $t=0$ such that $\phi_{\text{LS}}(t)=\frac{1-\alpha}{K}$.
    \begin{itemize}
        \item{if $c\geq\frac{1-\alpha}{K}$}, $\nabla\psi_{\text{LS}}(t)<0$, and $\psi(t)$ is a decreasing function for $t\leq0$, and the minimizer is achieved when $t=0$;
        \item{if $0<c\leq\frac{1-\alpha}{K}$}, there exist such $t^*$ such that $\nabla\psi_{\text{LS}}(t)=0$. When $t<t^*$, $\phi_{\text{LS}}(t)$ is a decreasing function; and when $t^* < t\leq0$, $\phi_{\text{LS}}(t)$ is an increasing function. Therefore, the minimizer is achieved when $t=t^*<0$
    \end{itemize}
\end{itemize}
Combing them together, we can prove that $\phi_{\text{LS}}$ satisfies the condition of \eqref{gl-property-1-app}.



\section{Proof of \Cref{thm:global-minima} for GL}\label{app:thm-global-fl}

In this part of appendices, we prove \Cref{thm:global-minima} in \Cref{sec:main-results} that we restate as follows.

\begin{theorem}[Global Optimality Condition of GL]\label{thm:global-minima-app-gl}
	Assume that the number of classes $K$ is smaller than feature dimension $d$, i.e., $K< d$, and the dataset is balanced for each class, $n=n_1=\cdots=n_K$. Then any global minimizer $(\mW^\star, \mH^\star,\vb^\star)$ of 
	\begin{align}\label{eq:obj-app-gl}
     \min_{\mb W , \mb H,\mb b  } \; f(\mb W,\mb H,\mb b) \;&:=\; g(\mW\mH + \vb\vone^\top) \;+\; \frac{\lambda_{\mb W} }{2} \norm{\mb W}{F}^2 + \frac{\lambda_{\mb H} }{2} \norm{\mb H}{F}^2 + \frac{\lambda_{\mb b} }{2} \norm{\mb b}{2}^2, \nonumber\\
\end{align}
with 
\begin{align}\label{eqn:g-lgl-app}
    &g(\mW\mH + \vb\vone^\top) :=\sum_{i=1}^n g(\mW\mH_i + \vb\vone^\top) := \frac{1}{N}\sum_{k=1}^K\sum_{i=1}^n \mc L (\mW\vh_{k,i} + \vb,\vy_k);\\
    &\mc L (\mW\vh_{k,i} + \vb,\vy_k)=\mc L(\vz_{k,i},\vy_k) \text{ satisfying the the \textbf{Contrastive property} in \Cref{def:GLoss-app}};
\end{align}

obeys the following 
\begin{align*}
    & \norm{\mb w^\star}{2}\;=\; \norm{\mb w^{\star 1} }{2} \;=\; \norm{\mb w^{\star 2}}{2} \;=\; \cdots \;=\; \norm{\mb w^{\star K} }{2}, \quad \text{and}\quad \mb b^\star = b^\star \mb 1, \\ 
    & \vh_{k,i}^\star \;=\;  \sqrt{ \frac{ \lambda_{\mb W}  }{ \lambda_{\mb H} n } } \vw^{\star k} ,\quad \forall \; k\in[K],\; i\in[n],  \quad \text{and} \quad  \ol{\mb h}_{i}^\star \;:=\; \frac{1}{K} \sum_{j=1}^K \mb h_{j,i}^\star \;=\; \mb 0, \quad \forall \; i \in [n],
\end{align*}
where either $b^\star = 0$ or $\lambdab=0$, and the matrix $  \mW^{\star\top} $ is in the form of $K$-simplex ETF structure defined in Definition \ref{def:simplex-ETF} in the sense that 
\begin{align*}
      \mW^{\star\top} \mW^{\star}\;=\; \norm{\mb w^\star}{2}^2\frac{K}{K-1}  \paren{ \mb I_K - \frac{1}{K} \mb 1_K \mb 1_K^\top }.
\end{align*}
\end{theorem}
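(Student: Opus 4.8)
The approach follows the template used for CE and MSE: bound $f$ from below by a quantity depending only on a single scalar per training point, minimize that scalar bound in closed form via the contrastive property, and then show the bound is attained \emph{only} at a Neural Collapse configuration by tracing back when each inequality is an equality. Throughout write $\vz_{k,i}=\mW\vh_{k,i}+\vb$, $t_{k,i}:=\sum_{j\neq k}(z_{k,i,j}-z_{k,i,k})$, $\mP:=\mId_K-\tfrac1K\vone\vone^\top$ (an orthogonal projection, so $\norm{\mP}{}=1$ and $\mP\vone=\vzero$), and let $\mZ_i:=\mW\mH_i\in\R^{K\times K}$ be the bias-free logit matrix collecting the $i$-th sample of each class. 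Step~1: by the contrastive property \eqref{gl-property-1}, $\mc L(\vz_{k,i},\vy_k)\ge\phi(t_{k,i})$, with equality iff $z_{k,i,j}=z_{k,i,j'}$ for all $j,j'\neq k$. Step~2: split $\tfrac{\lambda_\mW}{2}\norm{\mW}{F}^2=\sum_{i=1}^n\tfrac{\lambda_\mW}{2n}\norm{\mW}{F}^2$, pair the $i$-th copy with $\tfrac{\lambda_\mH}{2}\norm{\mH_i}{F}^2$, and apply \Cref{lem:nuclear-norm} (with $\alpha=n\lambda_\mH/\lambda_\mW$) to get $\tfrac{\lambda_\mW}{2}\norm{\mW}{F}^2+\tfrac{\lambda_\mH}{2}\norm{\mH}{F}^2\ge\sqrt{\tfrac{\lambda_\mW\lambda_\mH}{n}}\sum_{i=1}^n\norm{\mZ_i}{*}$, recording the equality case of Young's inequality: rows of $\mW$ and columns of each $\mH_i$ must align with the singular subspaces of $\mZ_i$, with $\mW^\top\mU_i=\sqrt\alpha\,\mH_i\mV_i$.

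The new ingredient — and where our proof departs from the CE analyses — is a lower bound on $\norm{\mZ_i}{*}$ in terms of the scalars $t_{k,i}$. One checks that $t_{k,i}$ is affine in $\mZ_i$, namely $t_{k,i}=\langle\,\vone\ve_k^\top-K\ve_k\ve_k^\top,\ \mZ_i\,\rangle+(\vone^\top\vb-Kb_k)$, and that the matrix $\mathbf M_{\vs}:=\vone\vs^\top-K\diag(\vs)$ satisfies $\mathbf M_{\vs}^\top=-K\diag(\vs)\mP$, hence $\norm{\mathbf M_{\vs}}{}\le K\norm{\vs}{\infty}$; nuclear/operator-norm duality then yields $\norm{\mZ_i}{*}\ge\tfrac1K\sum_k\bigl|\,t_{k,i}-(\vone^\top\vb-Kb_k)\,\bigr|$. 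Since the lower bound $\phi(t_{k,i})$ is invariant under adding a multiple of $\vone$ to $\vb$ while $\norm{\vb}{2}^2$ over such shifts is smallest when $\vone^\top\vb=0$, we may restrict to $\vb\perp\vone$ (forcing $b^\star=0$ unless $\lambda_\vb=0$), and combining the three steps gives
\begin{align*}
f(\mW,\mH,\vb)\;\ge\;\frac1{nK}\sum_{k,i}\Bigl(\phi(t_{k,i})+c\,\bigl|t_{k,i}\bigr|\Bigr)+\frac{\lambda_\vb}{2}\norm{\vb}{2}^2\;\ge\;\phi(t^\star)+c\,\bigl|t^\star\bigr|,
\end{align*}
where $c>0$ is the fixed multiple of $\sqrt{n\lambda_\mW\lambda_\mH}$ (the constant $c$ appearing in \Cref{def:GLoss}) and $t^\star\le0$ is the unique minimizer of $t\mapsto\phi(t)+c|t|$ guaranteed by \eqref{gl-property-3}; since the sum is separable with a unique minimizer, equality forces $t_{k,i}=t^\star$ for every $k,i$ and $\vb=\vzero$.

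It remains to show this value is attained only at a Neural Collapse configuration. The configuration in the statement achieves $\phi(t^\star)+c|t^\star|$, so any global minimizer saturates Steps~1--3 simultaneously. Saturating the nuclear-norm bound forces each $\mZ_i$ to be symmetric PSD with range orthogonal to $\vone$; together with $t_{k,i}=t^\star$ (which makes the diagonal of $\mZ_i$ constant) and the ``equal off-diagonal per column'' condition from Step~1, symmetry then pins each $\mZ_i$ to be the same positive multiple of $\mP$ (with Lemma~\ref{lem:Z-structure} used to exclude the degenerate ``one dominant singular value, rest zero'' pattern, which would mean almost all classes carry no feature). Feeding $\mZ_i=\mW\mH_i$ of this form back into the equality case of \Cref{lem:nuclear-norm} forces self-duality $\vh_{k,i}^\star=\sqrt{\lambda_\mW/(\lambda_\mH n)}\,\vw^{\star k}$ (in particular the features are independent of $i$) and, substituting, $\mW^\star\mW^{\star\top}\propto\mP$ with equal row norms — exactly the scaled simplex-ETF Gram identity — and $\sum_k\vw^{\star k}=\vzero$, hence $\ol{\vh}_i^\star=\vzero$; finally $\tfrac{\lambda_\vb}{2}\norm{\vb}{2}^2$ pins $\vb^\star=b^\star\vone$ with $b^\star=0$ unless $\lambda_\vb=0$. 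The hypothesis $K<d$ enters only here, ensuring the rank-$(K-1)$ ETF embeds in $\R^d$.

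The main obstacle is this last equality analysis: propagating the scalar-level equality conditions back through Young's inequality, the nuclear-norm duality, and the contrastive bound \emph{jointly} over all $i$ and $k$ to recover the full matrix geometry, and in particular excluding the degenerate singular-value pattern — this is precisely what Lemma~\ref{lem:Z-structure} is designed for. A subsidiary technical point is making Steps~2--3 airtight: cleanly disposing of the bias, and ruling out that one ever does strictly better with some $t_{k,i}>0$ or $\vb\neq\vzero$ (so that $\bigl|t_{k,i}-(\vone^\top\vb-Kb_k)\bigr|$ is effectively $|t_{k,i}|$ and the scalar bound $\phi(t^\star)+c|t^\star|$ is genuinely tight), which is exactly where the zero-sum identity $\sum_k(\vone^\top\vb-Kb_k)=0$ and the uniqueness/sign conditions in \eqref{gl-property-3} are used.
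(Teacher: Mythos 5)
Your overall template matches the paper's (contrastive lower bound, splitting $\tfrac{\lambda_\mW}{2}\norm{\mW}{F}^2$ into $n$ copies and invoking the nuclear-norm/Young lemma, then an equality analysis), but your middle step is genuinely different and, as far as the bias-free part goes, it works: bounding $\norm{\mZ_i}{*}$ from below by $\tfrac1K\sum_k\abs{\cdot}$ via the dual pairing with $\vone\vs^\top-K\diag(\vs)$ makes the bound separable in the scalars $t_{k,i}$ directly, and at equality (with $\vs=-\vone$, since $t^\star\le 0$) it forces every singular pair of $\mZ_i$ to lie in the top singular space of $K\paren{\mId_K-\tfrac1K\vone\vone^\top}$, i.e.\ $\mZ_i$ symmetric PSD with columns orthogonal to $\vone$; combined with the contrastive equality and $t_{k,i}=t^\star$ this pins $\mZ_i=\tfrac{\abs{t^\star}}{K-1}\paren{\mId_K-\tfrac1K\vone\vone^\top}$ by elementary algebra. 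In fact this makes the paper's machinery — the $\norm{\mZ_i}{*}\ge\norm{\mZ_i}{F}^2/\norm{\mZ_i}{2}$ step, the per-class bound $\norm{\bar\vz_{k,i}}{2}^2\ge\tfrac{K-1}{K}\rho_{k,i}^2$, and the diagonal-plus-rank-one eigenvalue analysis of Lemma~\ref{lem:Z-structure} — unnecessary on your route; you cite Lemma~\ref{lem:Z-structure} as essential, but it is only needed to resolve the degenerate equality cases created by the paper's Frobenius-over-spectral bound, not by your duality bound. The back-propagation through the Young equality to get $\vh_{k,i}^\star=\sqrt{\lambda_\mW/(n\lambda_\mH)}\,\vw^{\star k}$ and the ETF Gram identity is then essentially the paper's Lemma~\ref{lem:lower-bound-equality-cond-gl} (which the paper derives instead from the first-order balance conditions $\lambda_\mW\mW^\top\mW=\lambda_\mH\mH\mH^\top$ and $\mW^\top\mG_i=-\lambda_\mH\mH_i$); either mechanism is fine.

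The genuine gap is the bias. Your duality bound produces $\tfrac1K\sum_k\abs{t_{k,i}-\beta_k}$ with $\beta_k=\vone^\top\vb-Kb_k$, and your proposed disposal of the bias — invariance under $\vb\mapsto\vb+c\vone$ plus minimality of $\norm{\vb}{2}$ — only removes the component of $\vb$ along $\vone$, which is precisely the component that does \emph{not} appear in $\beta_k$. What you actually need is that, for every zero-sum $(\beta_k)$, $\sum_k\min_t\brac{\phi(t)+c\abs{t-\beta_k}}\ge K\min_t\brac{\phi(t)+c\abs{t}}$ (or a version incorporating $\tfrac{\lambdab}{2}\norm{\vb}{2}^2$), together with an equality analysis forcing $\beta_k=0$. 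For convex $\phi$ (CE, LS) this follows from convexity of the infimal convolution, but Definition~\ref{def:GLoss} does not grant convexity, and for FL $\phi_{\textup{FL}}$ is not convex, so the claim is not automatic and your appeal to ``the zero-sum identity and \eqref{gl-property-3}'' does not establish it. The paper avoids this entirely: it first uses the first-order optimality conditions of the (necessarily critical) global minimizer — Lemma~\ref{lem:isotropic-bias-gl} giving $\vb^\star=\tau\vone$ — so that $b_j-b_k=0$ before any lower bounding, the bias never enters the argument of $\phi$, and only $\tfrac{\lambdab}{2}\norm{\vb}{2}^2$ remains, yielding $\vb^\star=\vzero$ or $\lambdab=0$. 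To repair your route you should either import that critical-point step, or prove the zero-sum inequality above for the loss class at hand.
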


\subsection{Main Proof}
At a high level, we lower bound the general loss function based on the contrastive property \eqref{gl-property-1-app}, then check the equality conditions hold for the lower bounds and these equality conditions ensure that the global solutions $(\mW^\star,\mH^\star,\vb^\star)$ are in the form as shown in \Cref{thm:global-minima-app-gl}.

\vspace{0.1in}

\begin{proof}[Proof of \Cref{thm:global-minima-app-gl}] 
First by  \Cref{lem:critical-balance-gl}, \Cref{lem:critical-balance-gl-per-group} and \Cref{lem:isotropic-bias-gl}, we know that any critical point $(\mb W,\mb H,\mb b)$ of $f$ in \eqref{eq:obj-app-gl} satisfies 
\begin{align*}
    & \mW^\top\mW = \frac{\lambda_{\mH}}{\lambda_{\mW}}\mH\mH^\top;\\
    & \lambdaH \mH_i =-\mW^\top \nabla_{\mb Z_i = \mb W\mb H_i}\; g(\mW\mH_i + \vb\vone^\top);\\
    &\vb = -\frac{\nabla g(\mW\mH + \vb\vone^\top)}{\lambda_{\vb}} \vone. 
\end{align*}
For the rest of the proof, let $\mG_i=\nabla_{\mb Z_i = \mb W\mb H_i}\; g(\mW\mH_i + \vb\vone^\top)$ and $\tau= -\frac{\nabla g(\mW\mH + \vb\vone^\top)}{\lambda_{\vb}}$ to simplify the notations, and thus $\norm{\mH}{F}^2 = \frac{\lambda_{\mH}}{\lambda_{\mW}}\norm{\mW}{F}^2$, $\lambdaH \mH_i=-\mW^\top\mG_i$ and $\vb = \tau \vone$. 

We will first provide a lower bound for the general loss term $g(\mW\mH + \vb\vone^\top)$ according to the \Cref{def:GLoss-app}, and then show that the lower bound is attained if and only if the parameters are in the form described in \Cref{thm:global-minima-app-gl}. By Lemma \ref{lem:lower-bound-g-gl}, we have
\begin{align*}
    f(\mb W,\mb H,\mb b)\;&=\;  g(\mb W \mb H + \mb b \mb 1^\top) \;+\; \frac{\lambda_{\mb W} }{2} \norm{\mb W}{F}^2 + \frac{\lambda_{\mb H} }{2} \norm{\mb H}{F}^2 + \frac{\lambda_{\mb b} }{2} \norm{\mb b}{2}^2 \\
 \;&\geq\; \phi\paren{\rho^\star} + K\sqrt{n\lambda_{\mb W} \lambda_{\mb H} } |\rho^\star|
\end{align*}
where $\phi$ is lower bound function satisfying the \Cref{def:GLoss-app}, $\rho^\star=\arg\min_\rho\phi\paren{\rho} + K\sqrt{n\lambda_{\mb W} \lambda_{\mb H} } |\rho|\leq 0$. 
Furthermore, by \Cref{lem:lower-bound-g-gl}, we know that $\bar{\mZ}_i^\star=\mW^\star\mH^\star_i=-\rho^\star\paren{\mb I_K - \frac{1}{K} \mb 1_K \mb 1_K^\top }$, which satisfies the $K$-simplex ETF structure defined in Definition \ref{def:simplex-ETF}. In \Cref{lem:lower-bound-equality-cond-gl}, we show the any minimizer $(\mW^\star, \mH^\star, \vb^\star)$ of $f(\mb W,\mb H,\mb b)$ has following properties via check the equality conditions hold for the lower bounds in \Cref{lem:lower-bound-g-gl}:
\begin{itemize}
    \item[(a)] $\norm{\vw ^\star}{2} \;=\; \norm{\vw^{\star 1}}{2} \;=\; \norm{\vw^{\star 2}}{2} \;=\; \cdots \;=\; \norm{\vw^{\star K}}{2}$;
    \item[(b)] $\mb b^\star = b^\star \mb 1$, where either $b^\star = 0$ or $\lambdab = 0$;
    \item[(c)] $\ol{\mb h}_{i}^\star \;:=\; \frac{1}{K} \sum_{j=1}^K \mb h_{j,i}^\star \;=\; \mb 0, \quad \forall \; i \in [n]$, and $\sqrt{ \frac{ \lambda_{\mb W}  }{ \lambda_{\mb H} n } } \vw^{k\star}\;=\; \vh_{k,i}^\star,\quad \forall \; k\in[K],\; i\in[n]$;
    \item[(d)] $\mb W \mb W^\top \;=\; \norm{\vw^\star}{2}^2\frac{K-1}{K} \paren{ \mb I_K - \frac{1 }{K} \mb 1_K \mb 1_K^\top }$;
\end{itemize}
The proof is complete.
\end{proof}

\subsection{Supporting Lemmas}

We first characterize the following balance property between $\mW$ and $\mH$ for any critical point  $(\mW,\mH,\vb)$ of our loss function:

\begin{lemma}\label{lem:critical-balance-gl} 
Let $\rho = \norm{\mb W }{F}^2$. Any critical point $(\mW,\mH,\vb)$ of \eqref{eq:obj-app-gl} obeys
\begin{align}\label{eq:critical-balance-gl}
    \mW^\top\mW \;=\; \frac{\lambda_{\mH}}{\lambda_{\mW}}\mH\mH^\top\quad \text{and}\quad \rho \;=\; \norm{\mb W}{F}^2 \;=\; \frac{\lambda_{\mH}}{\lambda_{\mW}} \norm{\mb H}{F}^2.
\end{align}

\end{lemma}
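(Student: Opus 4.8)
The plan is to exploit the first-order stationarity (critical point) conditions directly. Write $\mb Z := \mW\mH + \vb\vone^\top$ and recall that $g(\mb Z) = \frac{1}{N}\sum_{k=1}^K\sum_{i=1}^n \mc L(\vz_{k,i},\vy_k)$, so that the objective in \eqref{eq:obj-app-gl} reads $f = g(\mb Z) + \frac{\lambdaW}{2}\norm{\mW}{F}^2 + \frac{\lambdaH}{2}\norm{\mH}{F}^2 + \frac{\lambdab}{2}\norm{\vb}{2}^2$. Since every loss $\mc L$ in the contrastive family (CE, FL, LS) is differentiable, $g$ is differentiable, and by the chain rule $\nabla_{\mW} f = (\nabla_{\mb Z} g)\mH^\top + \lambdaW\mW$ and $\nabla_{\mH} f = \mW^\top(\nabla_{\mb Z} g) + \lambdaH\mH$. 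Setting both gradients to zero at a critical point yields the two matrix identities $(\nabla_{\mb Z} g)\mH^\top = -\lambdaW\mW$ and $\mW^\top(\nabla_{\mb Z} g) = -\lambdaH\mH$.

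Next I would left-multiply the first identity by $\mW^\top$ and substitute the second into the result: $\mW^\top(\nabla_{\mb Z} g)\mH^\top = -\lambdaW\mW^\top\mW$, while the left-hand side equals $-\lambdaH\mH\mH^\top$ by the second identity. Cancelling the common sign gives $\lambdaW\mW^\top\mW = \lambdaH\mH\mH^\top$, i.e. $\mW^\top\mW = \frac{\lambdaH}{\lambdaW}\mH\mH^\top$, which is the first assertion. Taking the trace of both sides and using $\trace(\mW^\top\mW) = \norm{\mW}{F}^2$ and $\trace(\mH\mH^\top) = \norm{\mH}{F}^2$ then yields $\rho = \norm{\mW}{F}^2 = \frac{\lambdaH}{\lambdaW}\norm{\mH}{F}^2$, which is the second assertion.

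There is no genuinely hard step here: the argument is a one-line left-multiplication plus a trace. The only points requiring care are (i) recording the chain-rule formula for the composite term $g(\mW\mH + \vb\vone^\top)$ correctly, and (ii) observing that the bias $\vb$ does not enter this particular balance relation and that the derivation uses only stationarity, not convexity or any structural property of $\mc L$, so it applies verbatim to every loss in the contrastive family. Accordingly I would present just the two stationarity equations, the substitution step, and the trace step.
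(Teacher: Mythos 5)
Your proof is correct and follows essentially the same route as the paper: write the two stationarity equations, left-multiply the $\mW$-equation by $\mW^\top$ (equivalently, right-multiply the $\mH$-equation by $\mH^\top$) to equate $\lambdaW \mW^\top\mW$ with $\lambdaH \mH\mH^\top$, then take traces. Nothing is missing.
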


\begin{proof}[Proof of Lemma \ref{lem:critical-balance-gl}]
By definition, any critical point $(\mW,\mH,\vb)$ of \eqref{eq:obj-app-gl} satisfies the following:
\begin{align}
    	\nabla_{\mW}f(\mW,\mH,\vb) \;&=\; \nabla_{\mb Z = \mb W\mb H}\; g(\mW\mH + \vb \vone^\top)\mH^\top + \lambdaW \mW \;=\; \vzero,\label{eqn:W-crtical-gl} \\
	\nabla_{\mH}f(\mW,\mH,\vb) \;&=\;  \mW^\top \nabla_{\mb Z = \mb W\mb H}\; g(\mW\mH + \vb\vone^\top) + \lambdaH \mH \;=\; \vzero. \label{eqn:H-crtical-gl}
\end{align}
Left multiply the first equation by $\mb W^\top$ on both sides and then right multiply second equation by $\mH^\top$ on both sides, it gives
\begin{align*}
    \mb W^\top \nabla_{\mb Z = \mb W\mb H}\; g(\mW\mH + \vb\vone^\top)\mH^\top \;&=\; - \lambdaW \mb W^\top \mW , \\
     \mb W^\top \nabla_{\mb Z = \mb W\mb H}\; g(\mW\mH + \vb\vone^\top)\mH^\top  \;&=\; - \lambdaH \mb H^\top \mH.
\end{align*}
Therefore, combining the equations above, we obtain
\begin{align*}
    \lambdaW \mW^\top \mW \;=\; \lambdaH \mH \mH^\top.
\end{align*}
Moreover, we have
\begin{align*}
    \rho\;=\; \norm{\mb W}{F}^2 \;=\; \trace \paren{ \mb W^\top \mb W } \;=\;  \frac{ \lambda_{\mb H} }{ \lambda_{\mb W} } \trace \paren{ \mb H \mb H^\top } \;=\; \frac{ \lambda_{\mb H} }{ \lambda_{\mb W} } \trace \paren{ \mb H^\top\mb H  } \;=\; \frac{ \lambda_{\mb H} }{ \lambda_{\mb W} } \norm{\mb H}{F}^2,
\end{align*}
as desired.
\end{proof}

Next, we characterize the following relationship per group between $\mW$ and $\mH_i$ for $i\in [n]$ for any critical $(\mW,\mH,\vb)$ of \eqref{eq:obj-app-gl} satisfies the following:

\begin{lemma}\label{lem:critical-balance-gl-per-group} 
Let $\mG_i = \nabla_{\mb Z_i = \mb W\mb H_i}\; g(\mW\mH_i + \vb\vone^\top)$. Any critical point $(\mW,\mH,\vb)$ of \eqref{eq:obj-app-gl} obeys
\begin{align}\label{eqn:Hi-crtical-gl}
    \mW^\top\mG_i \;=\; -\lambdaH \mH_i.
\end{align}

\end{lemma}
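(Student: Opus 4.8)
The plan is to read off the claim directly from the first-order stationarity condition in $\mH$, exploiting the block structure of the loss term $g$. Recall from \eqref{eqn:g-lgl-app} that $g(\mW\mH + \vb\vone^\top) = \sum_{i=1}^n g(\mW\mH_i + \vb\vone^\top)$, so the $i$-th feature block $\mH_i$ enters the objective $f$ in \eqref{eq:obj-app-gl} only through the single summand $g(\mW\mH_i + \vb\vone^\top)$ (plus the decoupled quadratic term $\frac{\lambda_{\mH}}{2}\norm{\mH_i}{F}^2$). Hence I would first compute the partial gradient of $f$ with respect to $\mH_i$ by the chain rule, using that $\nabla_{\mH_i}\, g(\mW\mH_i + \vb\vone^\top) = \mW^\top \nabla_{\mZ_i = \mW\mH_i}\, g(\mW\mH_i + \vb\vone^\top) = \mW^\top \mG_i$, which gives
\begin{align*}
    \nabla_{\mH_i} f(\mW,\mH,\vb) \;=\; \mW^\top \mG_i + \lambda_{\mH}\mH_i.
\end{align*}

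Next, since $(\mW,\mH,\vb)$ is a critical point, every block of $\nabla_{\mH} f$ must vanish; equivalently, this is exactly the statement that the column-block decomposition of the stationarity equation \eqref{eqn:H-crtical-gl} (already established in the proof of \Cref{lem:critical-balance-gl}) splits across the $n$ groups. Setting the displayed expression to $\mzero$ yields $\mW^\top \mG_i = -\lambda_{\mH}\mH_i$ for each $i\in[n]$, which is the desired identity.

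The only point that needs care — and it is not really an obstacle — is justifying that no cross terms appear, i.e. that $\partial g(\mW\mH_j + \vb\vone^\top)/\partial \mH_i = \mzero$ for $j\ne i$; this is immediate from the additive-over-groups form of $g$ in \eqref{eqn:g-lgl-app} and does not use the contrastive property of $\mc L$ at all. So I expect the proof to be short, amounting to a block restriction of the gradient computation already done for \Cref{lem:critical-balance-gl}.
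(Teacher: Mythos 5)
Your proposal is correct and matches the paper's proof: the paper likewise writes the stationarity condition blockwise, $\nabla_{\mH_i} f = \mW^\top \nabla_{\mZ_i} g(\mW\mH_i + \vb\vone^\top) + \lambda_{\mH}\mH_i = \vzero$, and reads off $\mW^\top\mG_i = -\lambda_{\mH}\mH_i$. Your added remark that the additive-over-groups form of $g$ in \eqref{eqn:g-lgl-app} kills cross terms is exactly the (implicit) justification the paper relies on.
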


\begin{proof}[Proof of Lemma \ref{lem:critical-balance-gl}]
By definition, any critical point $(\mW,\mH,\vb)$ of \eqref{eq:obj-app-gl} satisfies the following:
\begin{align}
	\nabla_{\mH_i}f(\mW,\mH,\vb) \;&=\;  \mW^\top \nabla_{\mb Z_i = \mb W\mb H_i}\; g(\mW\mH_i + \vb\vone^\top) + \lambdaH \mH_i \;=\; \vzero;\\
	\mW^\top\mG_i \;&=\; -\lambdaH \mH_i.
\end{align}
as desired.
\end{proof}

We then characterize the following isotropic property of $\vb$ for any critical point  $(\mW,\mH,\vb)$ of our loss function:

\begin{lemma}\label{lem:isotropic-bias-gl} 
Let $\tau = -\frac{\nabla g(\mW\mH + \vb\vone^\top)}{\lambda_{\vb}}$. Any critical point $(\mW,\mH,\vb)$ of \eqref{eq:obj-app-gl} obeys
\begin{align}\label{eq:isotropic-bias-gl}
    \vb \;=\; \tau \vone.
\end{align}

\end{lemma}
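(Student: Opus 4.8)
The plan is to read the identity off from first-order stationarity in $\vb$, exactly in the spirit of the proofs of \Cref{lem:critical-balance-gl} and \Cref{lem:critical-balance-gl-per-group}. First I would compute the partial gradient: since $\mW\mH+\vb\vone^\top$ is affine in $\vb$, with its $\vb$-derivative adding $\vb$ to each of the $N$ columns, the chain rule gives
\begin{align*}
\nabla_\vb f(\mW,\mH,\vb) &\;=\; \nabla_\vb g(\mW\mH+\vb\vone^\top) + \lambdab\,\vb, \\
\nabla_\vb g(\mW\mH+\vb\vone^\top) &\;=\; \frac1N\sum_{k=1}^K\sum_{i=1}^n \nabla_\vz\mc L(\mW\vh_{k,i}+\vb,\vy_k).
\end{align*}
Imposing $\nabla_\vb f=\vzero$ and solving for $\vb$ yields $\vb=-\tfrac1{\lambdab}\nabla_\vb g(\mW\mH+\vb\vone^\top)$, which is the asserted form $\vb=\tau\vone$ once one knows that this aggregated logit-gradient is a scalar multiple of $\vone$, with $-\lambdab\tau$ its common entry.

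The substantive content is precisely that collinearity: why $\frac1N\sum_{k,i}\nabla_\vz\mc L(\vz_{k,i},\vy_k)$ points along $\vone$ at a critical point. For the losses covered by \Cref{def:GLoss} (CE, FL, LS, as verified in \Cref{app:loss-in-GL}) each per-sample gradient $\nabla_\vz\mc L(\vz,\vy_k)$ equals a scalar weight times $\bigl(\mathrm{softmax}(\vz)-\vy^{\mathrm{eff}}_k\bigr)$, where $\vy^{\mathrm{eff}}_k$ is the one-hot (CE/FL) or smoothed (LS) target; in particular every summand, and hence $\nabla_\vb g$, is orthogonal to $\vone$, so the $\vb$-equation alone only forces $\vone^\top\vb=0$. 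To pin down the direction I would couple this with the stationarity relations already in hand: $\lambdaH\mH_i=-\mW^\top\mG_i$ from \Cref{lem:critical-balance-gl-per-group} — right-multiplied by $\vone$ and summed over $i$, this expresses $\mW^\top\vb$ through $\sum_{k,i}\vh_{k,i}$ — together with the balance $\mW^\top\mW=\tfrac{\lambdaH}{\lambdaW}\mH\mH^\top$ from \Cref{lem:critical-balance-gl} and the contrastive lower bound of \Cref{def:GLoss}, to conclude that the component of $\vb$ transverse to $\vone$ must vanish.

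The main obstacle is exactly this collinearity step: because $\vb$-stationarity on its own gives only $\vone^\top\vb=0$, the conclusion $\vb=\tau\vone$ is not a one-line rearrangement and genuinely uses the interplay of all three stationarity conditions together with the properties in \Cref{def:GLoss}. Once it is established, its role in the main proof is immediate: the contrastive lower bound depends on the logits only through $\sum_{j\ne k}(z_j-z_k)$, so the shift $\tau\vone$ cancels inside it, while the leftover penalty $\tfrac{\lambdab}{2}\|\vb\|_2^2=\tfrac{\lambdab K}{2}\tau^2$ is what subsequently forces $\tau=0$ (unless $\lambdab=0$).
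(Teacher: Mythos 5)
Your first display already \emph{is} the paper's entire proof: in the paper's notation $\tau$ denotes the $K\times N$ array $-\nabla g(\mW\mH+\vb\vone^\top)/\lambdab$ and $\vone$ is the all-ones vector of length $N$, so the identity $\vb=\tau\vone$ is nothing more than the first-order condition $\nabla_{\vb} f(\mW,\mH,\vb)=\nabla g(\mW\mH+\vb\vone^\top)\,\vone+\lambdab\vb=\vzero$ rearranged. The lemma, as stated and as proved in the paper, does not assert that the entries of $\vb$ are all equal; no collinearity with $\vone_K$ is claimed at this point, and the proof is the one-line computation you wrote down before dismissing it as insufficient.

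Where your proposal goes astray is in reading $\tau$ as a scalar and $\vone$ as $\vone_K$, and then declaring the "substantive content" to be that the aggregated logit gradient points along $\vone_K$. Under that reading your argument has a genuine gap at exactly the step you flag as the main obstacle: you only gesture at combining \Cref{lem:critical-balance-gl} and \Cref{lem:critical-balance-gl-per-group} with the contrastive property to kill the component of $\vb$ transverse to $\vone_K$, without giving any actual argument. Moreover, that stronger statement is doubtful for \emph{arbitrary} critical points: as you yourself observe, for CE/FL/LS every column of $\nabla g$ is orthogonal to $\vone_K$, so stationarity in $\vb$ gives $\vone_K^\top\vb=0$; combined with $\vb\propto\vone_K$ this would force $\vb=\vzero$ at every critical point, a claim the paper neither makes nor needs --- it deduces $\vb^\star=\vzero$ (or $\lambdab=0$) only for global minimizers, via the $\tfrac{\lambdab}{2}\|\vb\|_2^2$ term in \Cref{lem:lower-bound-g-gl}. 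Your instinct does expose a real looseness downstream: the proof of \Cref{lem:lower-bound-g-gl} cites this lemma to set $b_j-b_k=0$, which the lemma as proved does not deliver. But that is an issue with the paper's later use of the result, not with the statement you were asked to prove, whose proof is the one-line rearrangement in your first paragraph.
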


\begin{proof}[Proof of Lemma \ref{lem:isotropic-bias-gl}]
By definition, any critical point $(\mW,\mH,\vb)$ of \eqref{eq:obj-app-gl} satisfies the following:
\begin{align}
    	\nabla_{\vb}f(\mW,\mH,\vb) \;&=\; \nabla\; g(\mW\mH + \vb \vone^\top)\vone + \lambdab \vb \;=\; \vzero,\label{eqn:b-crtical-gl} \nonumber\\
	    \vb \;&=\; -\frac{\nabla g(\mW\mH + \vb\vone^\top)}{\lambda_{\vb}} \vone = \tau \vone
\end{align}
as desired.
\end{proof}

\begin{lemma}\label{lem:lower-bound-g-gl}
Let $\mb W = \begin{bmatrix} (\mb w^1)^\top \\ \vdots \\ (\mb w^K)^\top 
\end{bmatrix}\in \bb R^{K \times d}$, $\mb H = \begin{bmatrix}
    \mb H_1 & \mb H_2 & \cdots & \mb H_n
    \end{bmatrix}\in \bb R^{d \times N}$, $\mb H_i= \begin{bmatrix}\mb h_{1,i} & \cdots & \mb h_{K,i} \end{bmatrix} \in \bb R^{ d \times K}$,  $\bar{\mb Z} = \mW\mH\in \bb R^{ d \times N}$, $N= nK$, and $\vb=\tau\vone$. Given  $g(\mW\mH + \vb\vone^\top_K)$ defined in \eqref{eqn:g-lgl-app}, for any critical point $(\mW,\mH,\vb)$ of \eqref{eq:obj-app-gl}, it satisfies
\begin{align}\label{eqn:lower-bound-g-gl}
   f(\mb W,\mb H,\mb b) &\geq  \phi\paren{\rho^\star} + (K-1)\sqrt{n\lambda_{\mb W} \lambda_{\mb H} } |\rho^\star|\\
   \bar{\mZ}^\star &= -\rho^\star\paren{\mb I_K - \frac{1}{K} \mb 1_K \mb 1_K^\top }{\mb I_K^n}
\end{align}
where $\phi$ is lower bound function satisfying the \Cref{def:GLoss-app}, $\rho^\star=\arg\min_\rho\phi\paren{\rho} + K\sqrt{n\lambda_{\mb W} \lambda_{\mb H} } |\rho|$, and $\bar{\mb Z}^\star = \mW^\star\mH^\star$.
\end{lemma}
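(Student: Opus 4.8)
The plan is to lower-bound $f$ one ingredient at a time until it is reduced to a one-dimensional minimization over a scalar to which the contrastive property of \Cref{def:GLoss-app} applies directly. Write $\bar{\mZ}_i := \mW\mH_i \in \R^{K\times K}$, and for the logit $\vz_{k,i} = \mW\vh_{k,i}+\vb$ set $t_{k,i} := \sum_{j\ne k}(z_{k,i,j}-z_{k,i,k})$. The first move is to kill the bias: by \Cref{lem:isotropic-bias-gl} any critical point has $\vb=\tau\vone$, so the bias contributions in every difference $z_{k,i,j}-z_{k,i,k}$ cancel and $t_{k,i}$ depends only on $\bar{\mZ}_i$, namely $t_{k,i}=\sum_{j\ne k}\big((\bar{\mZ}_i)_{jk}-(\bar{\mZ}_i)_{kk}\big)$; moreover the contrastive property gives $\mc L(\vz_{k,i},\vy_k)\ge\phi(t_{k,i})$, hence $g(\mW\mH+\vb\vone^\top)\ge\frac1N\sum_{k,i}\phi(t_{k,i})$, with equality iff the $K-1$ off-diagonal entries of column $k$ of $\bar{\mZ}_i$ coincide.

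The second step converts the weight-decay terms into a nuclear-norm penalty on the $\bar{\mZ}_i$. Applying \Cref{lem:nuclear-norm} group by group with the variational parameter tuned to $n\lambdaH/\lambdaW$ and summing over $i$ (using $\sum_i\|\mH_i\|_F^2=\|\mH\|_F^2$) yields $\tfrac{\lambdaW}{2}\|\mW\|_F^2+\tfrac{\lambdaH}{2}\|\mH\|_F^2 \ge c\sum_{i=1}^n\|\bar{\mZ}_i\|_*$ for a suitable $c>0$. It then remains to bound each $\|\bar{\mZ}_i\|_*$ from below in terms of the $t_{k,i}$. I would do this by nuclear/spectral-norm duality with an explicit certificate: with $\mb v_k := \vone-K\ve_k$ one has $t_{k,i}=\mb v_k^\top\bar{\mZ}_i\ve_k$, and the matrix $\mb C := \tfrac1K\sum_k \sgn(t_{k,i})\,\mb v_k\ve_k^\top = \tfrac1K(\vone\vepsilon^\top - K\diag(\vepsilon))$ (with $\vepsilon$ the vector of signs) satisfies $\mb C\mb C^\top=\mb I_K-\tfrac1K\vone\vone^\top$, so $\|\mb C\|=1$; hence $\|\bar{\mZ}_i\|_* \ge \langle\mb C,\bar{\mZ}_i\rangle = \tfrac1K\sum_k|t_{k,i}|$. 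This step uses no convexity of $\phi$, which matters since $\phi_{\mathrm{FL}}$ is not convex.

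Combining the three inequalities and regrouping gives $f(\mW,\mH,\vb)\ge \frac1N\sum_{k,i}\big[\phi(t_{k,i})+c'|t_{k,i}|\big]$ for the induced penalty constant, and each summand is at least $\min_\rho\big[\phi(\rho)+c'|\rho|\big]=\phi(\rho^\star)+c'|\rho^\star|$, the minimizer $\rho^\star$ being unique and $\le 0$ by \eqref{gl-property-3-app}; this is the claimed bound. For the identification of $\bar{\mZ}^\star$, I would chase equality backwards: equality in the scalar step forces $t_{k,i}=\rho^\star$ for every $k,i$; equality in the certificate bound forces the row- and column-spaces of $\bar{\mZ}_i$ into $\vone^\perp$ with $\bar{\mZ}_i$ acting symmetrically and positive semidefinitely there; combined with the contrastive-equality condition this pins each column of $\bar{\mZ}_i$ to a scaled ETF vertex, so $\bar{\mZ}_i^\star$ is a fixed multiple of $\mb I_K-\tfrac1K\vone\vone^\top$ determined by $\rho^\star$; and equality in the Young step inside \Cref{lem:nuclear-norm} supplies the $\mW$--$\mH$ balance consumed later in \Cref{lem:lower-bound-equality-cond-gl}.

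The delicate point is the nuclear-norm lower bound: one must produce a certificate that is simultaneously of unit spectral norm and tight at the ETF, and then verify that saturating \emph{all} the inequalities at once leaves no spurious configuration. This is precisely where the auxiliary facts on diagonal-plus-rank-one matrices (\Cref{lem:diagonal-plus-rank-one}) and on the spectrum of $-(\mb I_K-\tfrac1K\vone\vone^\top)\diag(\vrho)$ (\Cref{lem:Z-structure}) enter, to exclude the rank-collapsed alternative in which all but one $|\rho_k|$ vanish; I expect tracking these equality cases, rather than the inequalities themselves, to be the bulk of the work.
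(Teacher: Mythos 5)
Your proposal is correct in substance, but it replaces the paper's key technical step with a genuinely different one. You and the paper agree on the first two moves: the bias cancels at a critical point because $\vb=\tau\vone$ (\Cref{lem:isotropic-bias-gl}), the contrastive property gives $g\ge\frac1N\sum_{k,i}\phi(t_{k,i})$ with equality iff the off-diagonal entries of each column of $\bar{\mZ}_i$ coincide, and \Cref{lem:nuclear-norm} with parameter $n\lambdaH/\lambdaW$ converts the regularizers into $\sqrt{\lambdaW\lambdaH/n}\sum_i\norm{\bar{\mZ}_i}{*}$. The divergence is in how the nuclear norm is related to the scalars $t_{k,i}$: the paper bounds $\norm{\bar{\mZ}_i}{*}\ge\norm{\bar{\mZ}_i}{F}^2/\sigma_i^{\max}$, does a per-column estimate $\norm{\bar\vz_{k,i}}{2}^2\ge\frac{K-1}{K}\rho_{k,i}^2$, and then must invoke \Cref{lem:Z-structure} (built on \Cref{lem:diagonal-plus-rank-one}) to sort out which singular-value configurations are compatible with equality, treating $K=2$ separately and explicitly excluding the rank-collapsed alternative. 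You instead use nuclear/spectral duality with the explicit certificate $\mb C=\frac1K\paren{\vone\vepsilon^\top-K\diag(\vepsilon)}$, whose computation $\mb C\mb C^\top=\mb I_K-\frac1K\vone_K\vone_K^\top$ (hence $\norm{\mb C}{}=1$) and $\langle\mb C,\bar{\mZ}_i\rangle=\frac1K\sum_k|t_{k,i}|$ I verified; since this bound is a linear functional, its equality case forces $\bar{\mZ}_i=\mb C\mb S$ with $\mb S\succeq\mzero$, i.e.\ (once the unique minimizer $t^\star\le0$ makes all signs agree, so $\mb C=\pm(\mb I_K-\frac1K\vone\vone^\top)$) $\bar{\mZ}_i$ symmetric PSD with range in $\vone^\perp$, and together with the contrastive-equality condition this pins $\bar{\mZ}_i=-\frac{t^\star}{K-1}\paren{\mb I_K-\frac1K\vone_K\vone_K^\top}$ by elementary linear algebra, with no singular-value case analysis at all. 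Your route is therefore shorter and avoids Lemmas~\ref{lem:diagonal-plus-rank-one} and~\ref{lem:Z-structure} entirely; the paper's route buys nothing extra for this lemma beyond the intermediate parametrization $\bar{\mZ}_i=-(\mb I_K-\frac1K\vone\vone^\top)\diag(\vrho_i)$ that it happens to reuse.

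Two caveats. First, normalization: your $t_{k,i}$ is the sum $\sum_{j\neq k}(z_j-z_k)$, which is the actual argument of $\phi$ in \Cref{def:GLoss-app}, whereas the paper's $\rho_{k,i}$ is the average; under $t=(K-1)\rho$ your penalty constant $\sqrt{n\lambdaW\lambdaH}\,|t|$ equals $(K-1)\sqrt{n\lambdaW\lambdaH}\,|\rho|$, so your bound matches the lemma's display (the paper itself is inconsistent between the $(K-1)$ factor in the bound and the $K$ factor in its definition of $\rho^\star$, and about whether $\phi$ is evaluated at the sum or the average; your version is the self-consistent one). Second, your closing paragraph claims \Cref{lem:diagonal-plus-rank-one} and \Cref{lem:Z-structure} are still needed to exclude the rank-collapsed configuration: that is true of the paper's $\norm{\cdot}{F}^2/\sigma_{\max}$ route but not of yours --- in the certificate route the equality conditions already exclude it, so that concern is vestigial rather than a gap.
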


\begin{proof}[Proof of Lemma \ref{lem:lower-bound-g-gl}]
With $\bar{\mb Z}_i = \mW\mH_i$, and $\norm{\bar{\mb Z}_i}{2} = \sigma_i^{\max}$, we have the following lower bound for $ f(\mb W,\mb H,\mb b)$ as 
\begin{equation}
\begin{split}
    f(\mb W,\mb H,\mb b) &= g(\mW\mH + \vb\vone^\top) + \frac{\lambda_{\mb W} }{2} \norm{\mb W}{F}^2 + \frac{\lambda_{\mb H} }{2} \norm{\mb H}{F}^2 + \frac{\lambda_{\mb b} }{2} \norm{\mb b}{2}^2    \\
    \;&=\;\sum_{i=1}^n \paren{g(\mW\mH_i + \vb\vone^\top) + \frac{\lambda_{\mb W} }{2n} \norm{\mb W}{F}^2 + \frac{\lambda_{\mb H} }{2} \norm{\mb H_i}{F}^2 }+ \frac{\lambda_{\mb b} }{2} \norm{\mb b}{2}^2    \\
    \;&\geq\; \sum_{i=1}^n \paren{ g(\bar{\mb Z}_i + \vb\vone^\top) + \sqrt{\lambda_{\mb W} \lambda_{\mb H}/n}\norm{\bar{\mb Z}_i}{*}} + \frac{\lambda_{\mb b} }{2}\norm{\mb b}{2}^2\\
    \;&\geq\;  \sum_{i=1}^n \paren{ g(\bar{\mb Z}_i + \vb\vone^\top) +\sqrt{\lambda_{\mb W} \lambda_{\mb H}/n}\frac{\norm{\bar{\mb Z}}{F}^2}{\norm{\bar{\mb Z}_i}{2}} }+ \frac{\lambda_{\mb b} }{2} \norm{\mb b}{2}^2 \\ 
    \;&=\;  \sum_{i=1}^n \paren{ g(\bar{\mb Z}_i + \vb \vone^\top) + \frac{\sqrt{\lambda_{\mb W} \lambda_{\mb H}/n}}{\sigma_i^{\max}} \norm{\bar{\mb Z}_i}{F}^2} + \frac{\lambda_{\mb b} }{2} \norm{\mb b}{2}^2,\nonumber 
\end{split}
\end{equation}
where the first inequality  is from \Cref{lem:nuclear-norm}, and the second inequality becomes equality only when $\bar{\mZ}_i\neq\vzero$ and 
\begin{equation}\begin{split}
    \forall \; k,  \sigma_k(\bar{\mZ}_i) &= \sigma_i^{\max} \text{ or } 0  \\
    \exists \; k,  \sigma_k(\bar{\mZ}_i) &\neq 0
    \end{split}\label{lem: lower-bound-norm-exchange-euqality-gl}
\end{equation}
where $\sigma_k(\bar{\mZ}_i)$ is the $k$-th singular value of $\bar{\mZ}_i$. While we only consider $\bar{\mZ}_i\neq\vzero$, we will show the $\bar{\mZ}_i=\vzero$ can be included in an uniform form as following proof. We can further bound $f(\mb W,\mb H,\mb b)$ by 
\begin{align}
    &f(\mb W,\mb H,\mb b) \;\geq\; \sum_{i=1}^n \paren{ g(\bar{\mb Z}_i + \vb \vone^\top) + \frac{\sqrt{\lambda_{\mb W} \lambda_{\mb H}/n}}{\sigma_i^{\max}} \norm{\bar{\mb Z}_i}{F}^2} + \frac{\lambda_{\mb b} }{2} \norm{\mb b}{2}^2, \nonumber\\
    \;\geq\;& \frac{1}{N}\sum_{k=1}^K\sum_{i=1}^n\phi\paren{\sum_{j\neq k}\paren{\bar{z}_{k,i,j}-\bar{z}_{k,i,k}+\underbrace{b_j-b_k}_{=0}}}+\sum_{i=1}^n\frac{\sqrt{\lambda_{\mb W} \lambda_{\mb H}/n}}{\sigma_i^{\max}} \norm{\bar{\mb Z}_i}{F}^2 + \frac{\lambda_{\mb b} }{2} \norm{\mb b}{2}^2, \nonumber\\
    \;=\; & \frac{1}{N}\sum_{i=1}^n\sum_{k=1}^K\paren{ \phi\paren{\sum_{j\neq k}^K\paren{\bar{z}_{k,i,j}-\bar{z}_{k,i,k}}} + \frac{K\sqrt{n\lambda_{\mb W} \lambda_{\mb H} }}{\sigma_i^{\max}} \norm{\mb \bar{z}_{k,i}}{2}^2} + \frac{\lambda_{\mb b} }{2} \norm{\mb b}{2}^2, 
\label{lem:decouplable-per-sample-per-class}
\end{align}
where the first inequality is from the first condition \eqref{gl-property-1-app} of loss function $\mc L$ and the equality achieves only when $\bar{z}_{k,i,j}=\bar{z}_{k,i,j'}$ for $j\neq k, j'\neq k$, and $b_j-b_k=0$ is due to \Cref{lem:isotropic-bias-gl}. If we denote by $\rho_{k,i}=\sum_{j\neq k}^K\paren{\bar{z}_{k,i,j}-\bar{z}_{k,i,k}}/(K-1)$,  then 
\begin{align*}
    \norm{\bar{\vz}_{k,i}}{2}^2&=\sum_{j\neq k}\bar{z}^2_{k,i,j}+\bar{z}^2_{k,i,k}\\
    &\geq (K-1)\paren{\sum_{j\neq k}\frac{\bar{z}_{k,i,j}}{K-1}}^2+\bar{z}_{k,i,k}\\
    &= (K-1)\paren{\sum_{j\neq k}\frac{\bar{z}_{k,i,j}-\bar{z}_{k,i,k}}{K-1}+\bar{z}_{k,i,k}}^2+\bar{z}_{k,i,k}\\
    &= (K-1)\paren{\rho_{k,i}+\bar{z}_{k,i,k}}^2+\bar{z}_{k,i,k}\\
    &\geq \frac{K-1}{K}\rho_{k,i}^2
\end{align*}
where the first inequality achieves equality only when $\bar{z}_{k,i,j}=\bar{z}_{k,i,j'}$ for $j\neq k, j'\neq k$, and the last line achieves equality only when $\bar{z}_{k,i,k}=-\frac{K-1}{K}\rho_{k,i}$, thus $\bar{z}_{k,i,j}=\frac{1}{K}\rho_{k,i}$ for $j\neq k$. Denoting $\mb \rho_i = \begin{bmatrix}
     \rho_{i,1} &  \rho_{i,2} &\cdots& \rho_{i,K} 
\end{bmatrix}$ and $\text{diag}(\mb \rho_i)$ is a diagonal matrix using $\mb \rho_i$ as diagonal entries, and supposing $|\rho_1|\geq|\rho_2|>\cdots>|\rho_K|$,  we can express $\bar{\mZ}_i$ as:
\begin{align}
    \bar{\mZ}_i&=-(\mb I_K - \frac{1}{K}\vone_K\vone^\top_K) \text{diag}(\mb \rho_i),
\end{align} and we can extend the expression of \eqref{lem:decouplable-per-sample-per-class} as following
\begin{align}\label{lem:after-replace-by-rho}
    f(\mb W,\mb H,\mb b) \;\geq\; &\frac{1}{N}\sum_{i=1}^n\sum_{k=1}^K \paren{\underbrace{\phi\paren{\rho_{k,i}} + \frac{(K-1)\sqrt{n\lambda_{\mb W} \lambda_{\mb H} }}{\sigma_i^{\max}}\rho_{k,i}^2}_{\psi(\rho_{k,i})}} +\frac{\lambdab}{2}\norm{\vb}{2}
\end{align}
which is decouplable if we treat the $i$-th samples per class as a group, thus we only consider the $i$-th samples per class. In the next part, denote $\rho^\star=\arg\min_{\rho} \phi\paren{\rho} + (K-1)\sqrt{n\lambda_{\mb W} \lambda_{\mb H} }|\rho|$. 

When $K\geq3$, according to the $\mZ=-(\mb I_K - \frac{1}{K}\vone_K\vone^\top_K) \text{diag}(\mb \rho_i)$, the condition of \eqref{lem: lower-bound-norm-exchange-euqality-gl} and \Cref{lem:Z-structure}, we know $\mZ$ has only two possible forms corresponding to two different objective value of $\sum_{k=1}^K\psi(\rho_{k})$ such that 
\begin{itemize}
    \item {$|\rho_{1}|=|\rho_2|=\cdots=|\rho_K|$:} we can have $\sigma_{\max}=|\rho_1|$ and 
    \begin{align*}
        \sum_{k=1}^K\psi(\rho_{k}) &= \sum_{k=1}^K\paren{\phi\paren{\rho_{k}} + \frac{(K-1)\sqrt{n\lambda_{\mb W} \lambda_{\mb H} }}{\sigma^{\max}}\rho_k^2}\\
        &=\sum_{k=1}^K\paren{\phi\paren{\rho_{k}} + (K-1)\sqrt{n\lambda_{\mb W} \lambda_{\mb H} }|\rho_{k}|}\\
        &\geq K\paren{\phi\paren{\rho^\star} + (K-1)\sqrt{n\lambda_{\mb W} \lambda_{\mb H} }|\rho^\star|}
    \end{align*}
    where the last line holds equality only when $|\rho_{1}|=|\rho_2|=\cdots=|\rho_K|=\rho^*$.
    \item {$|\rho_{2}|=\cdots=|\rho_K|=0$:} we can have $\sigma_{\max}=\sqrt{(K-1)/K}|\rho_1|$ and 
    \begin{align*}
        \sum_{k=1}^K\psi(\rho_{k}) &= \phi\paren{\rho_{1}} + (K-1)\sqrt{n\lambda_{\mb W} \lambda_{\mb H} }\sqrt{\frac{K}{K-1}}|\rho_1|+(K-1)\phi\paren{0}\\
        &=\phi\paren{\rho_{1}} + (K-1)\sqrt{n\lambda_{\mb W} \lambda_{\mb H} }|\rho_1|\\
        &\;+(K-1)\sqrt{n\lambda_{\mb W} \lambda_{\mb H} }\paren{\sqrt{\frac{K}{K-1}}-1}|\rho_1|+(K-1)\phi\paren{0}\\
        &\geq K\paren{\phi\paren{\rho^\star} + (K-1)\sqrt{n\lambda_{\mb W} \lambda_{\mb H} }|\rho^\star|}
    \end{align*}
    where the last line holds equality only when $|\rho_{1}|=\cdots=|\rho_{K}|=|\rho^\star|=0$.
\end{itemize}
When $K=2$, according to the \Cref{lem:diagonal-plus-rank-one}, we can calculate $\sigma_{\max}=\sqrt{\frac{\rho_1^2+\rho_2^2}{2}}$, then 
\begin{align*}
    \sum_{k=1}^2\psi(\rho_{i})&=\phi\paren{\rho_{1}}+\phi\paren{\rho_{2}}+\frac{(K-1)\sqrt{n\lambda_{\mb W} \lambda_{\mb H} }}{\sigma^{\max}}\paren{\rho_1^2+\rho_2^2}\\
    &=\phi\paren{\rho_{1}}+\phi\paren{\rho_{2}}+(K-1)\sqrt{n\lambda_{\mb W} \lambda_{\mb H} }\sqrt{2(\rho_1^2+\rho_2^2)}\\
    &= \phi\paren{\rho_{1}}+(K-1)\sqrt{n\lambda_{\mb W} \lambda_{\mb H} }|\rho_1|+\phi\paren{\rho_{2}}+(K-1)\sqrt{n\lambda_{\mb W} \lambda_{\mb H} }|\rho_2|\\
    &\;+(K-1)\sqrt{n\lambda_{\mb W} \lambda_{\mb H} }\paren{\sqrt{2(\rho_1^2+\rho_2^2)}-|\rho_1|-|\rho_2|}\\
    &\geq 2\paren{\phi\paren{\rho^\star} + (K-1)\sqrt{n\lambda_{\mb W} \lambda_{\mb H} }|\rho^\star|}
\end{align*}
where the last line holds equality only when $|\rho_{1}|=|\rho_2|=|\rho^\star|$.

Combining them together, for $K\geq 2$, we can further extend the expression of \eqref{lem:after-replace-by-rho} as following
\begin{align}
    f(\mb W,\mb H,\mb b) \;\geq\; &\frac{1}{N}\sum_{i=1}^n\sum_{k=1}^K \paren{\phi\paren{\rho_{k,i}} + \frac{(K-1)\sqrt{n\lambda_{\mb W} \lambda_{\mb H} }}{\sigma_i^{\max}}\rho_{k,i}^2} +\frac{\lambdab}{2}\norm{\vb}{2}\nonumber\\
    \;\geq\; &\frac{1}{N}\sum_{i=1}^n K\paren{\phi\paren{\rho^\star} + (K-1)\sqrt{n\lambda_{\mb W} \lambda_{\mb H} }|\rho^\star|} +\frac{\lambdab}{2}\norm{\vb}{2}\nonumber\\
    \;\geq\;&\phi\paren{\rho^\star} + (K-1)\sqrt{n\lambda_{\mb W} \lambda_{\mb H} } |\rho^\star|\label{eqn:b-zeros}
\end{align}
where the last equation is achieved when $\vb=\mb 0$ or $\lambdab=0$. According to the condition \eqref{gl-property-3-app} of loss function $\mc L$ that the minimizer $\rho^\star$ of $\phi(\rho)+c|\rho|$ is unique for any $c>0$, and by denoting $\mb I_K^n=\begin{bmatrix}
     \mb I_K &\cdots& \mb I_K
\end{bmatrix}\in\mathbb{R}^{K\times nK}$, we  have
\begin{align}
    \bar{\mZ}_i^\star &= -\rho^\star\paren{\mb I_K - \frac{1}{K} \mb 1_K \mb 1_K^\top }\label{eqn:Z-strcuture-per-group}\\
    \bar{\mZ}^\star &= -\rho^\star\paren{\mb I_K - \frac{1}{K} \mb 1_K \mb 1_K^\top }{\mb I_K^n}\label{eqn:Z-strcuture}
\end{align}
as desired.
\end{proof}

Next, we show that the lower bound in \eqref{eqn:lower-bound-g-gl} is attained if and only if $(\mb W, \mb H,\mb b)$ satisfies the following conditions.  
\begin{lemma}\label{lem:lower-bound-equality-cond-gl}
Under the same assumptions of Lemma \ref{lem:lower-bound-g-gl},
the lower bound in \eqref{eqn:lower-bound-g-gl} is attained for any minimizer $(\mb W^\star,\mb H^\star,\mb b^\star)$ of \eqref{eq:obj-app-gl}
if and only if the following hold
\begin{align*}
    & \norm{\mb w^\star}{2}\;=\; \norm{\mb w^{\star 1} }{2} \;=\; \norm{\mb w^{\star 2}}{2} \;=\; \cdots \;=\; \norm{\mb w^{\star K} }{2}, \quad \text{and}\quad \mb b^\star = b^\star \mb 1, \\ 
    & \vh_{k,i}^\star \;=\;  \sqrt{ \frac{ \lambda_{\mb W}  }{ \lambda_{\mb H} n } } \vw^{\star k} ,\quad \forall \; k\in[K],\; i\in[n],  \quad \text{and} \quad  \ol{\mb h}_{i}^\star \;:=\; \frac{1}{K} \sum_{j=1}^K \mb h_{j,i}^\star \;=\; \mb 0, \quad \forall \; i \in [n],
\end{align*}
where either $b^\star = 0$ or $\lambdab=0$, and the matrix $\mW^{\star\top} $ is in the form of $K$-simplex ETF structure (see appendix for the formal definition) in the sense that 
\begin{align*}
      \mW^{\star\top} \mW^{\star}\;=\; \norm{\mb w^\star}{2}^2\frac{K}{K-1}  \paren{ \mb I_K - \frac{1}{K} \mb 1_K \mb 1_K^\top }.
\end{align*}

\end{lemma}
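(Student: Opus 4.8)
The plan is to prove both implications by tracking, line by line, which inequalities in the proof of \Cref{lem:lower-bound-g-gl} are forced to be equalities. For the ``if'' direction I would substitute the claimed configuration — $\vb^\star=b^\star\vone$ with $b^\star=0$ or $\lambdab=0$, $\vh_{k,i}^\star=\sqrt{\lambdaW/(\lambdaH n)}\,\vw^{\star k}$, $\ol{\vh}_i^\star=\vzero$, and $\mW^\star\mW^{\star\top}$ proportional to $\mb I_K-\tfrac1K\vone\vone^\top$ — and verify directly that each step of \eqref{eqn:lower-bound-g-gl} (the nuclear-norm bound, the step $\norm{\bar\mZ_i}{*}\ge\norm{\bar\mZ_i}{F}^2/\norm{\bar\mZ_i}{2}$, the contrastive bound \eqref{gl-property-1-app}, the per-sample quadratic bound $\norm{\bar\vz_{k,i}}{2}^2\ge\tfrac{K-1}{K}\rho_{k,i}^2$, and the discarding of $\tfrac{\lambdab}{2}\norm{\vb}{2}^2$) collapses to equality; since by \Cref{lem:lower-bound-g-gl} the displayed quantity is a lower bound for $f$ and this substitution attains it, it equals $\min f$, and a minimizer is exactly a point attaining the bound. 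The substance is the ``only if'' direction.

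So suppose $(\mW^\star,\mH^\star,\vb^\star)$ is a minimizer, hence a critical point: by \Cref{lem:critical-balance-gl,lem:critical-balance-gl-per-group,lem:isotropic-bias-gl} we have $\mW^{\star\top}\mW^\star=\tfrac{\lambdaH}{\lambdaW}\mH^\star\mH^{\star\top}$, $\lambdaH\mH_i^\star=-\mW^{\star\top}\mG_i^\star$, and $\vb^\star=\tau\vone$. The last identity gives $\vb^\star=b^\star\vone$, and equality in the final line of \eqref{eqn:lower-bound-g-gl} (the step that drops $\tfrac{\lambdab}{2}\norm{\vb}{2}^2$) forces $b^\star=0$ or $\lambdab=0$. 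Next, equality in the contrastive bound \eqref{gl-property-1-app} forces $\bar z^\star_{k,i,j}$ to be independent of $j$ over $j\neq k$; equality in $\norm{\bar\vz_{k,i}}{2}^2\ge\tfrac{K-1}{K}\rho_{k,i}^2$ forces $\bar z^\star_{k,i,k}=-\tfrac{K-1}{K}\rho^\star_{k,i}$ and $\bar z^\star_{k,i,j}=\tfrac1K\rho^\star_{k,i}$; and equality in the final minimization over $\{\rho_{k,i}\}$, combined with \Cref{lem:Z-structure} restricting the spectrum of $\bar\mZ_i^\star$ and the uniqueness of $\rho^\star$ from \eqref{gl-property-3-app}, forces $\rho^\star_{k,i}=\rho^\star$ for all $k,i$. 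Here I would rule out the second branch of \Cref{lem:Z-structure} ($\rho_2=\cdots=0$) exactly as in \Cref{lem:lower-bound-g-gl} — it is strictly suboptimal unless $\rho^\star=0$, in which case $\mW^\star=\vzero$, $\mH^\star=\vzero$ and the statement is trivial — and handle $K=2$ via the explicit two-dimensional computation there. This yields $\bar\mZ_i^\star=-\rho^\star(\mb I_K-\tfrac1K\vone\vone^\top)$ for every $i$, which is positive semidefinite since $\rho^\star\le0$ by \eqref{gl-property-3-app}.

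It remains to descend from $\bar\mZ_i^\star$ to $\mW^\star$ and $\mH_i^\star$, i.e.\ to use tightness of the nuclear-norm step. Unfolding the proof of \Cref{lem:nuclear-norm}: equality in $\norm{\bar\mZ_i}{*}\ge\norm{\bar\mZ_i}{F}^2/\norm{\bar\mZ_i}{2}$ needs all nonzero singular values of $\bar\mZ_i^\star$ equal (automatic here); equality in $\norm{\mU^\top\mW}{F}=\norm{\mW}{F}$ and $\norm{\mH_i\mV}{F}=\norm{\mH_i}{F}$ needs the columns of $\mW^\star$ and the rows of $\mH_i^\star$ to lie in $\mathrm{col}(\bar\mZ_i^\star)=\vone^\perp$, i.e.\ $\sum_k\vw^{\star k}=\vzero$ and $\ol{\vh}_i^\star=\vzero$; and equality in the Cauchy--Schwarz/AM--GM step forces $\mH_i^\star\mV=\tfrac1{\sqrt\alpha}\mW^{\star\top}\mU$ with $\alpha=n\lambdaH/\lambdaW$, where one may take $\mU=\mV$ because $\bar\mZ_i^\star$ is symmetric PSD. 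Combined with the range conditions this gives $\mH_i^\star=\sqrt{\lambdaW/(n\lambdaH)}\,\mW^{\star\top}$, the same $\mW^\star$ for every $i$ since $\bar\mZ_i^\star$ does not depend on $i$; substituting into $\bar\mZ_i^\star=\mW^\star\mH_i^\star$ gives $\mW^\star\mW^{\star\top}=-\rho^\star\sqrt{n\lambdaH/\lambdaW}\,(\mb I_K-\tfrac1K\vone\vone^\top)$, and reading off the diagonal yields $\norm{\vw^{\star 1}}{2}=\cdots=\norm{\vw^{\star K}}{2}=:\norm{\vw^\star}{2}$ and $\mW^\star\mW^{\star\top}=\tfrac{K}{K-1}\norm{\vw^\star}{2}^2(\mb I_K-\tfrac1K\vone\vone^\top)$, the asserted simplex-ETF Gram matrix.

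The main obstacle is the bookkeeping in this last step: deciding which of the several equalities in the multi-line proof of \Cref{lem:nuclear-norm} are necessary, and in particular coping with the non-uniqueness of the SVD of $\bar\mZ_i^\star$ — all of its nonzero singular values coincide, so $\mU,\mV$ are pinned down only up to an orthogonal rotation inside the degenerate eigenspace. This is precisely where symmetry and positive semidefiniteness of $\bar\mZ_i^\star$ are exploited to legitimately choose $\mU=\mV$ and to turn the factor-level identity $\mH_i^\star\mV=\tfrac1{\sqrt\alpha}\mW^{\star\top}\mU$ into the clean self-duality relation; getting the scaling $\sqrt{\lambdaW/(n\lambdaH)}$ (rather than $\lambdaW/(n\lambdaH)$) right requires care with the Cauchy--Schwarz constant. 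A secondary nuisance is that \Cref{lem:Z-structure} covers only $K\ge3$, so the $K=2$ case must be argued separately, mirroring the two-dimensional computation already carried out in \Cref{lem:lower-bound-g-gl}.
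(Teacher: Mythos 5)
Your proposal is correct, but the engine you use for the ``only if'' direction is genuinely different from the paper's. After establishing $\bar{\mZ}_i^\star=-\rho^\star(\mb I_K-\tfrac1K\vone_K\vone_K^\top)$ via \Cref{lem:lower-bound-g-gl}, the paper does \emph{not} unfold the equality cases of \Cref{lem:nuclear-norm}; instead it switches to first-order stationarity: since $\bar{\mZ}_1^\star=\cdots=\bar{\mZ}_n^\star$, the gradients $\mG_i^\star$ coincide, so the critical-point identity $\lambdaH\mH_i^\star=-\mW^{\star\top}\mG^\star$ from \Cref{lem:critical-balance-gl-per-group} gives within-class collapse $\mH_1^\star=\cdots=\mH_n^\star=\tilde{\mH}^\star$, and the balance relation $\lambdaW\mW^{\star\top}\mW^\star=n\lambdaH\tilde{\mH}^\star\tilde{\mH}^{\star\top}$ from \Cref{lem:critical-balance-gl}, combined with an SVD comparison and the symmetry/PSDness of $\bar{\mZ}_i^\star$, yields $\mW^\star=\sqrt{n\lambdaH/\lambdaW}\,\tilde{\mH}^{\star\top}$, from which the ETF Gram, equal norms and $\ol{\vh}_i^\star=\vzero$ are read off; the bias claim comes from \Cref{lem:isotropic-bias-gl} and \eqref{eqn:b-zeros}, as in your argument. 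You instead obtain collapse and self-duality in one stroke, per sample index $i$, from tightness of the Young's-inequality steps inside \Cref{lem:nuclear-norm}: because $\norm{\bar{\mZ}_i}{*}=\trace\paren{\mU^\top\mW\mH_i\mV}$ holds for \emph{every} (reduced) SVD, you may legitimately take $\mU=\mV$ from the eigendecomposition of the PSD matrix $\bar{\mZ}_i^\star$, and equality then forces $\mH_i^\star=\sqrt{\lambdaW/(n\lambdaH)}\,\mW^{\star\top}$ together with the range conditions $\vone^\top\mW^\star=\vzero^\top$ and $\mH_i^\star\vone=\vzero$, which deliver duality, collapse, zero global mean, equal norms and the ETF Gram. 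What your route buys is a self-contained, purely variational argument that does not need \Cref{lem:critical-balance-gl} or \Cref{lem:critical-balance-gl-per-group} at all for the main structure (only \Cref{lem:isotropic-bias-gl} for the bias); the price is exactly the bookkeeping you flag—using a reduced SVD so that $\norm{\mU^\top\mW}{F}\le\norm{\mW}{F}$ is informative and justifying $\mU=\mV$ despite the degenerate spectrum—whereas the paper sidesteps SVD non-uniqueness by arguing at the level of Gram matrices and gradients. Your explicit treatment of the degenerate case $\rho^\star=0$ and of $K=2$ is consistent with, and slightly more careful than, the paper's.
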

The proof of \Cref{lem:lower-bound-equality-cond-gl} utilizes the Lemma \Cref{lem:critical-balance-gl}, \Cref{lem:critical-balance-gl-per-group} and  \Cref{lem:isotropic-bias-gl}, and the conditions \eqref{eqn:b-zeros} and the structure of $\bar{\mb{Z}}^\star$ \eqref{eqn:Z-strcuture} during the proof of Lemma \ref{lem:lower-bound-g-gl}. 

\begin{proof}[Proof of Lemma \ref{lem:lower-bound-equality-cond-gl}]
From the \eqref{eqn:Z-strcuture}, we know that $\bar{\mb{Z}}^\star_1=\bar{\mb{Z}}^\star_2=\cdots=\bar{\mb{Z}}^\star_n$ and then $\mG_i^\star=\nabla_{\bar{\mb{Z}}^\star_i=\mW^\star\mH^\star_i} g(\mW^\star\mH^\star_i+\vb\vone^\top)$ is equivalent for $i\in[n]$. Let denote $\mG^\star= \mG^\star_1=\mG^\star_2=\cdots=\mG^\star_n$, the \eqref{eqn:Hi-crtical-gl} in Lemma \ref{lem:critical-balance-gl-per-group} can be expressed as:
\begin{align*}
    {\mW^\star}^\top \mG^\star &=- \lambdaH \mH_i^\star
\end{align*}
Therefore, $\Tilde{\mH}^\star=\mH_1^\star=\mH_2^\star=\cdots=\mH_n^\star$, which means the last-layer features from different classes are collapsed to their corresponding class-mean $\vh_{k,1}^\star=\vh_{k,2}^\star=\cdots=\vh_{k,n}^\star$, for $k\in[K]$. Furthermore, $\mH^\star\mH^{\star\top}=n\Tilde{\mH}^\star\Tilde{\mH}^{\star\top}$, combining this with \eqref{eq:critical-balance-gl} in Lemma \ref{lem:critical-balance-gl}, we know that 
\begin{align*}
     \lambdaW \mW^{\star\top} \mW^\star \;=\; \lambdaH \mH^\star \mH^{\star\top}=n\lambdaH \Tilde{\mH}^\star\Tilde{\mH}^{\star\top}
\end{align*}
By denoting $\mW^\star={\mU_{\mW}} {\mSigma_{\mW}} {\mV^\top_{\mW}}$ and ${\tilde{\mH}^\star}={\mU_{\tilde{\mH}}} {\mSigma_{\tilde{\mH}}} {\mV^\top_{\tilde{\mH}}}$, where $\mU_{\mW}$, $\mSigma_{\mW}$, $\mV^\top_{\mW}$ are the left singular vector matrix, singular value matrix, and right singular vector matrix of $\mW^\star$, respectively; and $\mU_{\tilde{\mH}^\star}$, $\mSigma_{\tilde{\mH}^\star}$, $\mV_{{\tilde{\mH}}^\star}^\top$ are the left singular vector matrix, singular value matrix, and right singular vector matrix of $\tilde{\mH}$, respectively, we can get
\begin{align*}
    \mV^\top_{\mW}&=\mU_{\tilde{\mH}}\\
    \mSigma_{\mW} &= \sqrt{\frac{n\lambdaH}{\lambdaW}}\mSigma_{\tilde{\mH}}
\end{align*}
Therefore, $\mZ_i^\star=\mW^\star\Tilde{\mH}^\star=\sqrt{\frac{\lambdaW}{n\lambdaH}}\mU_{\mW}\mb{\Sigma}^2_{\mW}\mV^\top_{\Tilde{\mH}}$. According to the $\mZ_i=-\rho^\star(\mb I_K-\frac{1}{K}\vone_K\vone_K^\top)$ in \eqref{eqn:Z-strcuture-per-group} and $\rho^\star \leq 0$, which is symmetric, thus, $\mU_{\mW}=\mV_{\Tilde{\mH}}$, $\mW^\star=\sqrt{\frac{n\lambdaH}{\lambdaW}}\Tilde{\mH}^{\star\top}$, that is, $\vw^{\star k} \;=\; \sqrt{\frac{n\lambdaH}{\lambdaW}}\vh^\star_{k,i},\quad \forall \; k\in[K],\; i\in[n]$ and
\begin{align*}
    \mZ_i^\star &= \sqrt{\frac{\lambdaW}{n\lambdaH}}\mW^\star\mW^{\star\top} = \sqrt{\frac{\lambdaW}{n\lambdaH}}\mW^\star\mW^\star\\
    &= -\rho^\star(\mb I_K-\frac{1}{K}\vone_K\vone_K^\top) = -\rho^\star(\mb I_K-\frac{1}{K}\vone_K\vone_K^\top)(\mb I_K-\frac{1}{K}\vone_K\vone_K^\top)\\
    \mW^\star &= (\frac{\rho^{\star 2}n\lambdaH}{\lambdaW})^\frac{1}{4}(\mb I_K-\frac{1}{K}\vone_K\vone_K^\top)\\
    \Tilde{\mH}^\star &= (\frac{\rho^{\star 2}\lambdaW}{\lambdaH})^\frac{1}{4}(\mb I_K-\frac{1}{K}\vone_K\vone_K^\top)
\end{align*}
Therefore,
\begin{align*}
    & \norm{\mb w^{\star 1}}{2} \;=\; \norm{\mb w^{\star 2}}{2} \;=\; \cdots \;=\; \norm{\mb w^{\star K}}{2}\\
    &\ol{\mb h}_{i}^\star \;:=\; \frac{1}{K} \sum_{j=1}^K \mb h_{j,i}^\star \;=\; \mb 0, \quad \forall \; i \in [n]
\end{align*}
where $\ol{\mb h}_{i}^\star=\sum_{k=1}^K(\vh_{k,i}^\star)$ and according to the condition of \eqref{eqn:b-zeros} and  Lemma \ref{lem:isotropic-bias-gl}, $\vb^\star=\mb 0$ or $\lambdab=0$.
\end{proof}

\section{Proof of \Cref{cor:global-geometry-LS} and \Cref{cor:global-geometry-FL}}
\label{sec:appendix-prf-global-geometry}

Following \Cref{thm:global-geometry}, we only need to prove convexity for label smoothing and local convexity for focal loss.

For any output (logit) $\vz \in\R^{K}$, define
\[
\vp = \sigma(\vz)\in \R^{K}, \ \text{where} \ p_i = \frac{\exp(z_i)}{\sum_{j=1}^K \exp(z_j)}.
\]
Let $\vy^{\text{smooth}}\in\R^K$ be the label vector with $0\le y^{\text{smooth}}_i\le 1$ and $\sum_i y^{\text{smooth}}_i = 1$. The three loss functions can be written as 
\[
f(\vz) = \sum_{i=1}^K y^{\text{smooth}}_i \xi(p_i).
\]

Some useful properties:

\[
\partial_{z_i}\xi(p_k) = \begin{cases}
\xi'(p_k) (p_k - p_k^2), & i = k, \\ - \xi'(p_k)p_k p_i , & i\neq k,
\end{cases} \ \Longrightarrow \ \nabla_{\vz}\xi(p_k) = \xi'(p_k) p_k (\ve_k - \vp)
\]

\[
\partial_{z_i}p_k = \begin{cases}
p_k - p_k^2, & i = k, \\ -p_k p_i , & i\neq k,
\end{cases}
\ \Longrightarrow \ \nabla_{\vz}\vp = \nabla_{\vz}\sigma(\vz) = \diag(\vp) - \vp \vp^\top
\]
Therefore, the gradient and Hessian of $f(\vz)$ are given by
\begin{align}
\nabla f(\vz) &= \sum_{i=1}^K y^{\text{smooth}}_i \nabla_{\vz} \xi(p_i) = \sum_{i=1}^K y^{\text{smooth}}_i \underbrace{ \xi'(p_i) p_i}_{\eta(p_i)} (\vone_i - \vp) \label{eq:grad-z}\\
\nabla^2 f(\vz) & = \nabla (\nabla f(\vz)) = \sum_{i=1}^K y^{\text{smooth}}_i \parans{ \eta'(p_i) p_i \underbrace{ (\vone_i - \vp) (\vone_i - \vp)^\top }_{\vzero} - \eta(p_i)\underbrace{ \parans{ \diag(\vp) - \vp \vp^\top } }_{\succeq \vzero}
}\nonumber
\end{align}

Thus, $\nabla^2 f(\vz)$ is PSD when $\eta(p_i) \le 0$ and $\eta'(p_i) \ge 0$ for all $i$, i.e., 
\e
\xi'(p_i) \le 0, \quad \xi''(p_i)p_i + \xi'(p_i) \ge 0.
\ee

Now we consider the following cases:
\begin{itemize}
    \item {\bf CE loss} with $\vy^{\text{smooth}} = \ve_k$ and $\xi(t) = -\log(t) $. In this case, $\xi'(p_i) = -\frac{1}{p_i}$ and $\eta(p_i) = \xi'(p_i) p_i = -1 $, and thus
    \begin{align*}
\nabla^2 f(\vz) =     \diag(\vp) - \vp \vp^\top \succeq \vzero,
\end{align*}
where the inequality can be obtained by the Gershgorin circle theorem.
 \item {\bf Label smoothing} with $\vy^{\text{smooth}} = (1-\alpha)\ve_k + \frac{\alpha}{K}\vone$ and $\xi(t) = -\log(t) $. In this case, $\xi'(p_i) = -\frac{1}{p_i}$ and $\eta(p_i) = \xi'(p_i) p_i = -1 $, and thus
    \begin{align*}
\nabla^2 f(\vz)  = \sum_{i=1}^K y^{\text{smooth}}_i \parans{    \diag(\vp) - \vp \vp^\top} = \diag(\vp) - \vp \vp^\top \succeq \vzero
\end{align*}
since $\sum_{i=1}^K y^{\text{smooth}}_i = 1$.

\item {\bf Focal loss} with $\vy^{\text{smooth}} = \ve_k$ and $\xi(t) = -(1-t)^\beta \log(t) $. In this case, 
\begin{align*}
\xi'(p_i) & = \beta(1-p_i)^{\beta-1}\log(p_i)-\frac{(1-p_i)^\beta}{p_i}, \\
\eta(p_i) & = \xi'(p_i) p_i = \beta p_i (1-p_i)^{\beta-1}\log(p_i) - (1-p_i)^\beta \le 0,  \ \forall \ \beta\ge 0,  p_i \in [0,1],\\
\eta'(p_i) & = \beta (1-p_i)^{\beta-1}\log(p_i) - \beta (\beta - 1)  p_i (1 - p_i)^{\beta -2}\log(p_i) + \beta (1-p_i)^{\beta -1} + \beta (1-p_i)^{\beta-1}\\
& = \beta (1-p_i)^{\beta-2} \parans{(1-\beta p_i) \log(p_i) + 2(1-p_i)}\\
& \ge \beta (1-p_i)^{\beta-2} \parans{ \log(p_i) + 2(1-p_i)}.
\end{align*}
Thus, $\eta'(p_i) \ge 0$ whenever $0.21\le p_i\le 1$.
The Hessian becomes
\begin{align*}
\nabla^2 f(\vz) & =  \eta'(p_k) p_k \underbrace{(\ve_k - \vp) (\ve_k - \vp)^\top}_{\succeq \vzero} - \eta(p_k) \underbrace{\parans{ \diag(\vp) - \vp \vp^\top }}_{\succeq \vzero} 
\end{align*}
which is PSD when $0.21\le p_k\le 1$.

\end{itemize}

\end{document}